\documentclass[preprint,3p,authoryear,nopreprintline]{elsarticle}
\usepackage[shortlabels]{enumitem}
\usepackage{graphicx}
\usepackage{listings}
\usepackage{booktabs}
\usepackage{algorithm}
\usepackage{algpseudocode}
\usepackage{amsmath,amsthm,amssymb}
\usepackage{mathtools}
\usepackage{url}
\usepackage{subcaption}
\usepackage{multirow}
\usepackage[dvipsnames]{xcolor}
\usepackage{tikz}
\usetikzlibrary{shapes, arrows.meta, positioning, calc, backgrounds, fit}

\newtheorem{theorem}{Theorem}

\newtheorem{proposition}{Proposition}
\newtheorem{example}{Example}

\newcommand{\OMIT}[1]{}

\newcommand{\E}{\mathbb{E}}
\newcommand{\Perror}{\mathbb{P}_{\mathrm{error}}}

\renewcommand{\cite}{\citep}

\usepackage{lineno}

\usepackage{marginnote} 
\usepackage{geometry}   
\usepackage[normalem]{ulem} 

\newif\ifshowchanges
\showchangestrue 

\newcommand{\new}[2][]{#2}

\newcommand{\neww}[2][]{#2}

\begin{document}

\begin{frontmatter}



\author{Chih-Duo Hong}

\author{Yen-Pang Chen}

\author{Fang Yu}

\affiliation{organization={National Chengchi University},city={Taipei},country={Taiwan}}
\cortext[cor1]{Chih-Duo Hong is the corresponding author. This work was partially supported by the National Science and Technology Council (NSTC), Taiwan, under grant no.~112-2222-E-004-001-MY3, 113-2221-E-004-010-MY2, and 114-2634-F-004-002-MBK.}

\title{Signature filtering: A lightweight enhancement for statistical watermark detection in large language models}





\begin{abstract}
Statistical watermarks help organizations attribute large language model (LLM) outputs, yet existing detectors often struggle when watermark signals are weak, texts are repetitive, or watermarks are edited. We propose \emph{signature filtering}, a detection-time module that enhances watermark detection without modifying watermark embedding and text generation.
It learns a small set of ``signature'' tokens whose presence makes watermark tests unreliable, and removes these tokens before detection. The signatures are obtained by solving a mixed‑integer linear program on a small training set, with constraints that maximize the true positive rate. We additionally derive finite‑sample and asymptotic bounds under several attacker models (color-blind, color-adaptive, and distributionally correlated). On four well-known watermark families (Kgw, Sweet, Unigram, Exp), four benchmark corpora (C4, MBPP, HumanEval, Code‑Search‑Net), and six LLMs (Opt‑1.3b, Opt-6.7b, Llama2‑13b, Llama3.1‑8b, Qwen2.5‑14b, Phi‑3‑medium‑14b), 2--3-gram signatures raise detection rates in weak-signal and low-entropy settings from 8--31\% without filtering to 78--99\% with filtering, while keeping false positives controllable and often negligible. In stress tests where we scramble sentences and perturb 25--50\% of tokens by dilution, deletions, and substitutions, 2-gram filters for Kgw-style watermarks preserve most of the clean-text detection gains, often matching or outperforming the advanced WinMax watermark detector. Signature filtering thus provides a simple, scalable, and model‑agnostic add-on to strengthen watermark-based provenance checks for LLM text in information processing workflows.
\end{abstract}

\begin{keyword}
large language model \sep optimization \sep watermark \sep signature


\end{keyword}

\end{frontmatter}

\section{Introduction}
Large language models (LLMs) now generate vast amounts of text for both public web services and internal enterprise applications. They power summarization, assistance, and content creation across many domains~\cite{xylogiannopoulos2024chatgpt}. As these outputs are mixed with human-authored material in search indexes, content feeds, and institutional repositories, organizations must routinely decide whether a passage is likely AI-generated. These decisions affect reliability, regulatory compliance, and user trust in information services~\cite{xiong2025delphiagent,wang2025benfordipm}. Text watermarking addresses this need by embedding imperceptible statistical signals during generation that can later be verified, providing a practical mechanism for attribution, auditing, and governance in information-processing pipelines~\cite{liu2024survey,wu2025survey}. However, today's detectors often struggle when (i) watermark signals must remain weak to preserve text quality, (ii) the text itself is highly repetitive or formulaic, or (iii) watermarked passages are heavily mixed or edited~\cite{wu2025survey}.

We propose \emph{signature filtering} to address these challenges. The idea is simple: before running the baseline hypothesis test, we remove a pre-computed set of ``statistically disruptive'' tokens from the text. These tokens form a signature that can be learned offline from historical model outputs, online during text generation, or incrementally on a streaming corpus. Removing them increases the separation between watermarked and natural text in borderline cases and recovers many true positives in weak-signal and low-entropy regimes. Figure~\ref{fig:signature-4panel} illustrates this effect under \textsc{Kgw}: signature filtering expands a borderline z-score gap into a decisive one, increasing the gap between the two passages from 1.77 to 5.01 and allowing the detector to correctly identify an otherwise undetectable watermark.

\definecolor{myred}{RGB}{204, 0, 0}
\definecolor{mygreen}{RGB}{0, 176, 80}
\begin{figure}[t]
\resizebox{\columnwidth}{!}{%
\begin{tabular}{cc}
\begin{minipage}[t]{0.48\columnwidth}
\fontsize{8}{10}\selectfont
\textcolor{myred}{People} \textcolor{mygreen}{with} \textcolor{mygreen}{marketing} \textcolor{myred}{backgrounds} \textcolor{mygreen}{are} \textcolor{myred}{hired} \textcolor{myred}{by} \textcolor{mygreen}{government} \textcolor{myred}{agencies} \textcolor{mygreen}{in} \textcolor{myred}{a} \textcolor{myred}{number} \textcolor{mygreen}{of} \textcolor{myred}{capacities.}
\textcolor{mygreen}{Government} \textcolor{myred}{agencies} \textcolor{mygreen}{at} \textcolor{myred}{the} \textcolor{myred}{local,} \textcolor{mygreen}{state} \textcolor{myred}{and} \textcolor{mygreen}{federal} \textcolor{myred}{level} \textcolor{mygreen}{all} \textcolor{myred}{employ} \textcolor{mygreen}{marketing} \textcolor{myred}{professionals} \textcolor{mygreen}{in} \textcolor{myred}{areas} \textcolor{mygreen}{including,} \textcolor{myred}{but} \textcolor{mygreen}{not} \textcolor{myred}{limited} \textcolor{mygreen}{to,} \textcolor{myred}{public} \textcolor{mygreen}{relations,} \textcolor{myred}{property} \textcolor{mygreen}{disposal,} \textcolor{myred}{bond} \textcolor{mygreen}{sales} \textcolor{mygreen}{and} \textcolor{myred}{purchasing.}

\leavevmode\\\vspace{1em}{\centering\textbf{\small (a) Unwatermarked text}\par\vspace{0.5em}}
\end{minipage}
&
\begin{minipage}[t]{0.48\columnwidth}
\fontsize{8}{10}\selectfont
\textcolor{myred}{People} \textcolor{mygreen}{with} \textcolor{mygreen}{marketing} \textcolor{myred}{backgrounds} \textcolor{mygreen}{are} \textcolor{myred}{hired} \textcolor{myred}{by} government agencies \textcolor{mygreen}{in} \textcolor{myred}{a} \textcolor{myred}{number} \textcolor{mygreen}{of} \textcolor{myred}{capacities.}
\textcolor{mygreen}{Government} agencies \textcolor{mygreen}{at} the local\textcolor{myred}{,} \textcolor{mygreen}{state} \textcolor{myred}{and} federal \textcolor{myred}{level} \textcolor{mygreen}{all} \textcolor{myred}{employ} \textcolor{mygreen}{marketing} \textcolor{myred}{professionals} \textcolor{mygreen}{in} areas \textcolor{mygreen}{including,} \textcolor{myred}{but} \textcolor{mygreen}{not} \textcolor{myred}{limited} to\textcolor{mygreen}{,} \textcolor{myred}{public} relations\textcolor{mygreen}{,} \textcolor{myred}{property} \textcolor{mygreen}{disposal,} bond \textcolor{mygreen}{sales} \textcolor{mygreen}{and} purchasing\textcolor{myred}{.}

\leavevmode\\\vspace{.5em}{\centering\textbf{\small (b) Unwatermarked~and filtered}\par\vspace{1em}}
\end{minipage}
\\[0em] 
\begin{minipage}[t]{0.48\columnwidth}
\fontsize{8}{10}\selectfont
\textcolor{mygreen}{Public} \textcolor{myred}{sector} \textcolor{mygreen}{marketers} \textcolor{myred}{work} \textcolor{mygreen}{for} \textcolor{myred}{government} \textcolor{mygreen}{agencies,} \textcolor{mygreen}{non}\textcolor{mygreen}{-profit} \textcolor{myred}{organizations,} \textcolor{mygreen}{and} \textcolor{myred}{other} \textcolor{mygreen}{public} \textcolor{myred}{institutions.} \textcolor{mygreen}{Their} \textcolor{myred}{goal} \textcolor{mygreen}{is} \textcolor{myred}{to} \textcolor{mygreen}{promote} \textcolor{myred}{the} \textcolor{mygreen}{services,} \textcolor{mygreen}{products,} \textcolor{mygreen}{and} \textcolor{myred}{policies} \textcolor{mygreen}{of} \textcolor{myred}{their} \textcolor{mygreen}{organization} \textcolor{myred}{to} \textcolor{mygreen}{the} \textcolor{myred}{public.}

\leavevmode\\\vspace{.5em}{\centering\textbf{\small (c) Watermarked text}\par\vspace{0em}}
\end{minipage}
&
\begin{minipage}[t]{0.48\columnwidth}
\fontsize{8}{10}\selectfont
\textcolor{mygreen}{Public} \textcolor{myred}{sector} \textcolor{mygreen}{marketers} \textcolor{myred}{work} \textcolor{mygreen}{for} government \textcolor{mygreen}{agencies,} \textcolor{mygreen}{non}\textcolor{mygreen}{-profit} organizations\textcolor{myred}{,} \textcolor{mygreen}{and} \textcolor{myred}{other} \textcolor{mygreen}{public} institutions\textcolor{mygreen}{.} \textcolor{mygreen}{Their} \textcolor{myred}{goal} \textcolor{mygreen}{is} to \textcolor{mygreen}{promote} the \textcolor{mygreen}{services,} \textcolor{mygreen}{products,} \textcolor{mygreen}{and} \textcolor{myred}{policies} \textcolor{mygreen}{of} their \textcolor{mygreen}{organization} to \textcolor{mygreen}{the} \textcolor{myred}{public.}

\leavevmode\\\vspace{.5em}{\centering\textbf{\small (d) Watermarked~and filtered}\par\vspace{0em}}
\end{minipage}
\end{tabular}} 
\caption{Unwatermarked and watermarked texts with and without signature filtering. Words containing filtered tokens are rendered in black.
    (a) $z$ = $-0.14$, Red = 50\%, Green = 49\%;
    (b) $z$ = 1.66, Red = 29\%, Green = 38\%, Filtered = 33\%;
    (c) $z$ = 1.63, Red = 44\%, Green = 56\%; 
    (d) $z$ = 6.67, Red = 9\%, Green = 42\%, Filtered = 49\%. 
    \neww{Observe that filtering impacts the scores differently: it boosts the watermarked z-score from 1.63 to 6.67, but slightly inflates the unwatermarked z-score from -0.14 to 1.66. This filtering step therefore flips the detector decision \textit{only} for the watermarked~text w.r.t.~the threshold $z_0=4$.}}
\label{fig:signature-4panel}
\end{figure}

\new{From a statistical perspective, a natural question is whether deleting tokens based on the observed text invalidates the null-distribution assumptions behind the underlying watermark test. In our setting, it does not.
\neww{Under the \textsc{Kgw} coloring model, token colors in an unwatermarked text are i.i.d.~random variables conditioned on the secret hash key.} Deleting any subset of tokens that is chosen independently of those hidden colors preserves the distribution on the retained tokens, so the usual z-test remains valid.
For \textsc{Exp}-style watermarks, the detector assigns each realized token a score that forms an i.i.d.~random variable in an unwatermarked text. Since our filter is a deterministic function of the observed text, the retained scores remain i.i.d., and the null distribution is correctly calibrated after filtering.
Thus, for unwatermarked texts, running the baseline test on the filtered text is as statistically valid as running the test on the original text, \neww{provided the deletion rule relies only on the observable features} and the filtered text is sufficiently large.}

Another concern is whether any technique built on independence assumptions can maintain its performance on correlated text. Our empirical evaluation suggests that signature filtering remains effective even when token coloring is far from independent. First, on low-entropy code corpora, where limited variation breaks many detectors, signature filtering achieves near-optimal true positive rates (TPR) for \textsc{Kgw} at negligible false positive rates (FPR). Second, when we apply it to more generic watermark schemes where the \textsc{Kgw} coloring assumption no longer holds, signature filtering still consistently improves detection compared with the baseline. Third, under common sentence- and word-level watermark removal attacks, 2-gram signature filtering can match or outperform state-of-the-art attack-resistant detectors. Taken together, these results show that our method is robust in non-idealized and correlated settings.

Although a signature is learned from a training dataset, its performance generalizes reasonably well in our experiments. Signatures learned on about 1{,}000 training texts maintain strong TPR and negligible FPR when applied to 50{,}000 previously unseen texts. We further show that computing separate signatures on successive text batches and aggregating their scores preserves near-oracle TPR while keeping the combined FPR well below the union bound. Signature generation therefore incurs a one-time or amortized cost at deployment time, after which the learned filters scale gracefully to much larger or entirely new datasets.

\medskip
\noindent\textbf{Research objectives and contributions.}
Motivated by the fragility of existing LLM watermark detectors in weak-signal, low-entropy, and edited-text settings, this study pursues three objectives:


\begin{itemize}
\item \textit{Investigate detection-time filtering.} We study whether discarding a small pre-learned subset of tokens before running a standard watermark test can increase detection power. 

\item \textit{Characterize statistical validity.} We seek analytical conditions under which our enhancement method remains reliable when texts are correlated, partially edited, or adversarially crafted.

\item \textit{Assess effectiveness and scalability.} We evaluate performance across various practical settings, studying how our method can be deployed in large-scale information processing pipelines effectively.
\end{itemize}
To address these objectives, this work makes the following main contributions:
\begin{itemize}
    \item \textit{Design a detection-time enhancement for LLM watermarks.} We develop signature filtering as a plug-in module for existing statistical watermarking schemes. It is designed to improve watermark detection in weak-signal, low-variation, and edited-text scenarios where existing techniques often struggle.

    \item \textit{Provide formal statistical guarantees under threat models.} We analyze when signature filtering preserves the nominal Type-I error of standard z-tests and derive finite-sample and asymptotic worst-case bounds on false positives under several adversarial and dependency models.

    \item \textit{Evaluate detection capability on realistic attacks and datasets.} We assess signature filtering across watermark families, language models, and corpora benchmarks, \neww{as well as quantifying the sensitivity of filtering efficacy to signature training sizes.} The results show that signatures can enhance detection, resist a range of text edits, generalize via predictive reuse, and scale effectively to streaming texts.
\end{itemize}

Our approach differs from existing watermark-enhancing techniques in several fundamental ways. Prior enhancements typically modify the embedding procedure, redesign the test statistic (e.g., by maximizing over sliding windows), or introduce new entropy- or semantics-based scores that must be calibrated alongside watermark tests. Our method keeps the underlying watermark family and z-test unchanged and instead learns a compact filter via optimization on representative data. This design provides explicit control over the additional false positives by bounding the probability that a correct decision is flipped.
It also exposes interpretable levers that can be tied to explicit service-level objectives in provenance-aware workflows.

The remainder of the paper is organized as follows. Sec.\,\ref{sec:related_work} surveys related work; Sec.\,\ref{sec:prelim} presents notation and preliminaries; Sec.\,\ref{sec:methodology} introduces the methodology and MILP formulation; Sec.\,\ref{sec:FPR-analysis} provides false-positive analyses under multiple threat models; Sec.\,\ref{sec:experiments} outlines our empirical results and deployment guidance. Sec.\,\ref{sec:discussion} discusses research implications, limitations, and future work.

\section{Related Work}
\label{sec:related_work}

\noindent\textbf{Information integrity.}
Information integrity is now widely recognized as a socio-technical challenge for information retrieval and content management systems \cite{ai2024reducing}. Platforms and institutions must decide at scale whether content is sufficiently authentic and trustworthy. Recent work spans consumer trust in AI-mediated content \cite{xylogiannopoulos2024chatgpt}, fake news detection \cite{fang2024nsep,luvembe2024cafodnn,peng2024contextual}, and statistical tests based on distributional regularities \cite{wang2025benfordipm}. Our contribution—a post hoc module for statistical watermarks that can be embedded in verification pipelines, respects operational false positive budgets, and supports streaming deployment and data-drift monitoring—fits naturally into these concerns.
Relative to the existing detection landscape, e.g., fake‑news models that fuse
local context and global signals \cite{fang2024nsep} and multimodal
co‑attention detectors \cite{luvembe2024cafodnn,peng2024contextual},
signature filtering plays the role of a \emph{domain‑agnostic pre‑filter} for the
textual channel that can be combined with complementary evidence such as network cues.

\medskip
\noindent\textbf{LLM watermarks.}
Watermarking has been adapted to LLMs for attributing
AI‑generated text \cite{liu2024survey,wu2025survey}.
The signature approach in this work is compatible with two mainstream families
of watermarking methods, referred to as the \textsc{Kgw} and \textsc{Exp}
families \cite{pan2024markllmopensourcetoolkitllm,pan2024waterseeker}: the
former biases the model's output distribution to embed signals
\cite{kirchenbauer2024watermarklargelanguagemodels,lee2024wrotecodewatermarkingcode,
kirchenbauer2024reliabilitywatermarkslargelanguage,
zhao2023provablerobustwatermarkingaigenerated}, while the latter guides token
selection without distorting the distribution
\cite{aaronson2022watermarking,christ2024undetectable,kuditipudi2024robust}.
\neww{Production-oriented watermarks like SynthID-Text \citep{dathathri2024scalable} instead avoid repeated bias and preserve text quality using repeated context masking, which skips watermarking and scoring when a previously used context window reappears in the text.}
From the managerial perspective, these methods serve as \emph{inline provenance markers} that downstream systems can verify post‑hoc.

\medskip
\noindent\textbf{Detection under challenging scenarios.}
Watermark detection is difficult when the watermark signal is weak, the text has low
variation, or the content is edited. Several \textit{detection-time} techniques tackle
these challenges without changing the embedded watermark. Entropy-based
detectors like \textsc{Ewd} reweight token contributions so that high-entropy
positions dominate the statistic, improving robustness on low-entropy
material \citep{lu2024entropybasedtextwatermarkingdetection}. \textsc{Sweet} instead extends
logit-modifying watermarks to code by suppressing low-entropy segments at
generation and detection time \citep{lee2024wrotecodewatermarkingcode}.
Window-based schemes such as \textsc{WinMax} replace the global score with the
maximum over sliding windows to recover signal after edits and dilution,
and \textsc{WaterSeeker} further localizes watermarked regions in long documents
\citep{kirchenbauer2024reliabilitywatermarkslargelanguage,pan2024waterseeker}.
Since these methods redesign the detection statistic, their false positive behavior must be
recalibrated for each watermark and operating condition.

By contrast, signature filtering removes selected tokens and reuses the original z-test of the underlying watermark scheme. This post-processing step does not reduce the baseline TPR and FPR, allowing us
to isolate and bound the incremental false positive risk caused solely by filtering.
Existing enhancement methods that change the scoring rule or mix embedding and detection do not preserve this structure, so they cannot
offer the same type of drop-in risk guarantees relative to a fixed baseline watermark test.

\medskip
\noindent\textbf{Optimization perspectives.}
Constraint‑based optimization appears at both the embedding and detection stages.
\citet{wouters2023optimizing} recast watermark scheduling as a bi-objective MILP that
balances detection power and text quality. \new{\citet{wang2025morphmark} likewise
frame \textsc{Kgw}-style watermark embedding as a multi-objective trade-off, adaptively adjusting watermark strength to better navigate the detectability-quality frontier.} \citet{li2024truncatedgof,li2025statistical} instead derive
Neyman-Pearson‑optimal tests via convex duality and formulate goodness‑of‑fit tests on the detection side to resist paraphrasing.
\citet{tsurheavywater} proposed an optimization framework for designing distortion-free watermarks in low-entropy regimes, jointly optimizing the watermarked next-token distribution and the detection score via a minimax objective. This connects watermark design to optimal transport and coding theory and provides tunable detection-distortion trade-offs in the embedding step.
Compared with these methods, we strengthen the \emph{existing} one‑proportion z‑test statistic using MILP, learning $n$‑gram filters
on representative data without changing the underlying watermark family. In this sense, our detection-time
optimization is orthogonal and combinable with embedding-time optimizations as mentioned above.
Finally, mixed‑integer programs can be used to recover secret partitions from limited samples
\citep{zhang2024stealing,reynolds2025breaking}. Because signature filtering relies only on observable token
statistics, it may still improve detection even if a green/red split is partially known. Analysis of this threat model is an interesting direction for future work.

\OMIT{
\section{Related Work}
\label{sec:related_work}

Watermarking has been adapted to LLMs as a mechanism for attributing AI-generated text \cite{liu2024survey,wu2024surveyllmgeneratedtextdetection}.
The signature approach in this work is compatible with two mainstream families of watermarking methods in the literature, referred to as the \textsc{Kgw} and \textsc{Exp} families \cite{pan2024markllmopensourcetoolkitllm,pan2024waterseeker}: the former biases the LLM's output distribution to embed watermark signals \cite{kirchenbauer2024watermarklargelanguagemodels,lee2024wrotecodewatermarkingcode,
kirchenbauer2024reliabilitywatermarkslargelanguage,
zhao2023provablerobustwatermarkingaigenerated}, while the latter guides the LLM's token generation without distorting the output distribution \cite{aaronson2022watermarking,christ2024undetectable,kuditipudi2024robustdistortionfreewatermarkslanguage}.


Watermark detection becomes challenging when the watermark signal is weak, the text variation is low, or the watermarked text has been edited.
A wide array of approaches aims to address these challenges without altering the embedding step
\cite{lu2024entropybasedtextwatermarkingdetection,lee2024wrotecodewatermarkingcode,kirchenbauer2024reliabilitywatermarkslargelanguage,pan2024waterseeker}.
Our signature method is orthogonal to these techniques: it collects statistically harmful $n$-grams and removes them before the z-score or p-value is computed, providing an additional boost in detection rate.

A watermark is expected to remain detectable after adversarial editing.
A wide array of robustness-strengthened watermarking schemes has been proposed
\cite{kirchenbauer2024watermarklargelanguagemodels,
      zhao2023provablerobustwatermarkingaigenerated,
      liu2024semanticinvariantrobustwatermark,
      he2024watermarkssurvivetranslationcrosslingual}.
For example, \textsc{Sir/x-Sir}~\cite{liu2024semanticinvariantrobustwatermark,he2024watermarkssurvivetranslationcrosslingual} aligns the red/green assignments semantically so
that paraphrases preserve the pattern across languages, albeit at the
cost of requiring model-internal embeddings at detection time.
By contrast, our signature filter works solely on the generated
text; it therefore complements semantic-aware schemes and improves
robustness even when only the plaintext output is available.


Constraint-based optimization has been applied to different stages of the watermark pipeline.
On the generation side, Wouters \cite{wouters2023optimizing} recasts watermark {embedding} as a
bi-objective MILP that balances detection power against {text quality}.
The green-red bias at each step becomes a decision variable, and the solver enumerates Pareto-optimal shift schedules to produce the watermark. Their generation-time schedule is complementary to our signature filter, which operates solely at detection time and can therefore adopt any precomputed schedule.
On the detection side, Li et al.~\cite{li2025statistical} derive a Neyman-Pearson-optimal z-score test by
solving a convex dual programming, whereas we strengthen the z-test statistics by MILP-based
$n$-gram filtering; the two techniques are orthogonal and can be combined for further gains.
Li et al.~\cite{li2024truncatedgof} address watermark robustness issues by formulating paraphrase-resilient
goodness-of-fit tests as an optimization problem. In contrast, our method improves
robustness by \emph{excluding} statistically fragile $n$-grams rather than redefining the test
itself. Finally, mixed-integer programs have been used offensively: \citet{zhang2024stealing} and \citet{reynolds2025breaking} show that the hidden green list (or secret key) can be recovered from only a handful of watermarked samples, enabling subsequent scrubbing or spoofing attacks.
Since our signature filter leverages observable token statistics rather than the secrecy of the green-red partition, it may still improve detection even when an adversary uncovers that partition.
Analysis of this threat model is an interesting direction for future work.
}

\section{Preliminaries}
\label{sec:prelim}
\noindent\textbf{Tokens and token types.}
A \emph{text} $T$ is a finite sequence of \emph{tokens}.
Each token in a text is assigned a \emph{token type} $t_{i}\in\mathcal V$ from a fixed finite \emph{vocabulary} $\mathcal V$.
For simplicity, we often identify $T$ of length $n$ with a sequence $\langle t_{1},\dots,t_{n} \rangle$ of token types.
Multiple tokens may share the same type.
For example, the text $\langle 1,2,1,2,3 \rangle$ contains five tokens $t_{1},\dots,t_{5}$ but only has three distinct token types: $t_1=t_3=1$, $t_2=t_4=2$, and $t_5=3$. 

\medskip
\noindent\textbf{Watermark embedding.}
Given a token sequence prefix \(\langle t_1, \dots, t_{i-1}\rangle \), the LLM generates the next token $t_i$ by computing a \emph{logit} \( l_i \in \mathbb{R}^{|\mathcal{V}|} \) and then sampling $t_i \in \mathcal V$ based on the probability distribution induced by the softmax of $l_i$.
A \textsc{Kgw}-style watermark \cite{kirchenbauer2024watermarklargelanguagemodels}
randomly splits $\mathcal V$ into a green list $\mathcal G_i$ and a red list $\mathcal R_i$ at each step $i$, such that $|\mathcal G_i| = \gamma\,|\mathcal V|$ with a green ratio $\gamma\in(0,1)$.
\textsc{Kgw} amplifies the sampling probability of green tokens with a \emph{selection bias} $\delta>0$.
A larger bias $\delta$ makes the watermark signal stronger and easier to detect, but may increase text distortion.

An \textsc{Exp}-style algorithm \cite{kuditipudi2024robust} guides token sampling through keyed pseudorandomness.
At each step $i$, \textsc{Exp} selects as the token $t_i$ a token satisfying
$
    t_i \in {\arg\max}_{y \in \mathcal V}~\bigl(l_i(y)/\theta  + G_i(y) \bigr),
$
where $\theta$ is the model temperature and $G_i(y)\!\sim\!\textrm{Gumbel}(0,1)$ denotes the (keyed) Gumbel noise \cite{fu2024gumbelsoft}.
\new{The model temperature $\theta$ controls sampling entropy and text diversity, which indirectly affects watermark detectability. When $\theta$ is small, the scaled logits $l_i(y)/\theta$ become more dominant. Token choices thus concentrate on a few high-confidence candidates, yielding a weaker accumulated watermark signal and making detection harder.
Conversely, a higher temperature strengthens watermark signals at the price of higher generation randomness and potential quality changes.}

\medskip
\noindent\textbf{Watermark detection.}
Given a candidate text $T$ of length $|T|=n$, the detector runs a hypothesis test to determine whether $T$ contains a watermark,
with the null hypothesis $\mathcal{H}_0$ claiming $T$ is unwatermarked.

A \textsc{Kgw} detector performs a z-test defined by
$
    Z_{\textrm{K}}(T) \coloneqq {(N_g-\gamma n)}/{\sqrt{\gamma(1-\gamma)n}},
$
where $N_g$ denotes the number of green tokens in $T$, and rejects $\mathcal H_0$ if and only if
$Z_{\textrm{K}}(T) \ge z_0$ for a prescribed threshold $z_0$.

\new{An \textsc{Exp} detector computes a per-token \emph{score} and the exact Gamma-tail p-value under $\mathcal H_0$ 
\cite{FernandezCTCF23WIFS}.
Specifically, at each position $i$, the detector generates a pseudorandom vector
$u_i \in (0,1)^{|\mathcal V|}$ (from the secret key and local context) and reads out the pseudorandom scalar
$
    R_i \coloneqq u_i[t_i].
$
It then defines the per-token \textsc{Exp} score
$
    c_i \coloneqq -\ln(1-R_i),
$
and the text-level statistic
$
    X \coloneqq \sum_{i=1}^{n} c_i.
$
Under $\mathcal H_0$, we have $R_i \sim \mathcal U[0,1]$. Hence
$c_i \sim \mathrm{Exp}(1)$ and $X \sim \Gamma(n,1)$. The \textsc{Exp} detector exploits the Gamma-tail p-value
$
    p_{\textrm{E}}(T) \coloneqq \mathbb P(X_n \ge X)
    = \Gamma(n, X)/\Gamma(n)
$
with $X_n \sim \Gamma(n,1)$,
and rejects $\mathcal H_0$ if and only if $p_{\textrm{E}}(T) < \alpha$
for a prescribed significance level $\alpha$.
Note that we can equivalently define a z-test
$Z_{\textrm{E}}(T) \coloneqq \Phi^{-1}(1 - p_{\textrm{E}}(T))$ and
$z \coloneqq \Phi^{-1}(1-\alpha)$,
such that $p_{\textrm{E}}(T)<\alpha \Longleftrightarrow Z_{\textrm{E}}(T) \ge z$.}

For both watermark schemes, detection can be uniformly formalized as a hypothesis test:

\begin{itemize}[leftmargin=*]
  \item $\mathcal{H}_0$ (\emph{Null hypothesis: $T$ is unwatermarked}).
  \new{For \textsc{Kgw}, $Z_{\textrm{K}}(T)$ is approximately $\mathcal{N}(0,1)$ for sufficiently large $|T|$ under $\mathcal{H}_0$.
  In contrast, \textsc{Exp} is an exact test for any $|T| \ge 1$: $p_{\textrm{E}}(T) \sim \mathcal U[0,1]$ and $Z_{\textrm{E}}(T) \sim \mathcal N(0,1)$ hold strictly under $\mathcal H_0$ without asymptotic reliance.}
  \item $\mathcal{H}_1$ (\emph{Alternative hypothesis: $T$ is watermarked}).
  Under $\mathcal{H}_1$, \textsc{Kgw} increases the expected green-token rate and inflates $Z_{\textrm{K}}(T)$ beyond zero. \textsc{Exp} biases the realized pseudorandom values $R_i$ toward larger values, making $X$ stochastically larger than its null distribution. This yields smaller $p_{\textrm{E}}(T)$ and larger $Z_{\textrm{E}}(T)$.
\end{itemize}



\noindent The baseline detector declares $T$ as watermarked if $Z(T)\ge z_0$, with $Z=Z_{\textrm{K}}$ for \textsc{Kgw} and $Z=Z_{\textrm{E}}$ for \textsc{Exp}. 
\neww{Practically, $z_0$ is the detector's decision threshold on the standardized evidence statistic and therefore controls the Type-I error budget. Increasing $z_0$ makes false positives rarer but also makes true watermarks harder to detect. For \textsc{Kgw}, when 
$Z_{\textrm{K}}(T)$ is approximately standard normal, a one-sided significance level $\alpha$ corresponds to $z_0 \approx \Phi^{-1}(1-\alpha)$. Our experiments adopt $z_0 = 4$ following prior work~\cite{kirchenbauer2024reliabilitywatermarkslargelanguage,pan2024markllmopensourcetoolkitllm}, which corresponds to a nominal one-sided significance level of about $3.17\times 10^{-5}$.
For Exp, $\alpha$ is specified directly and $z_0 \coloneqq \Phi^{-1}(1-\alpha)$ is the equivalent threshold for the z-test.}

\begin{figure*}[t]
\centering
\resizebox{\textwidth}{!}{%
\begin{tikzpicture}[
    auto,
    node distance = 0.4cm, 
    font=\footnotesize,    
    block/.style = {rectangle, draw, fill=blue!5, text width=1.3cm, text centered, rounded corners, minimum height=0.9cm, inner sep=2pt},
    process/.style = {rectangle, draw, fill=white, text width=1.3cm, text centered, minimum height=0.9cm, inner sep=2pt},
    decision/.style = {diamond, draw, fill=green!5, text width=1cm, text centered, inner sep=0pt, minimum height=1cm, aspect=1.3},
    data/.style = {draw, ellipse, fill=gray!10, text width=1.2cm, text centered, minimum height=0.7cm, inner sep=1pt},
    line/.style = {draw, -Latex, thick},
    dashedline/.style = {draw, -Latex, dashed, thick},
    outcome_w/.style = {data, fill=green!15, text width=2cm},
    outcome_n/.style = {data, fill=red!10, text width=2cm}
]

    
    \node [data] (corpus) {Train Data};

    \node [process, right=0.8cm of corpus] (scoring) {Color\\/Score};
    \node [process, right=0.8cm of scoring] (constraints) {Gen. Constr.};
    \node [block, right=0.8cm of constraints, fill=blue!10] (milp) {MILP Solver};

    \node [data, right=0.8cm of milp, fill=yellow!20] (sig) {Signature};

    
    \node [data, below=1.6cm of corpus] (input) {Input $T$};

    \node [block, right=0.8cm of input] (det1) {Detector};
    \node [decision, right=0.8cm of det1] (dec1) {$Z \ge z_0$};

    \node [block, right=0.8cm of dec1, text width=1.4cm] (filter) {Filter};
    \node [decision, right=0.8cm of filter, aspect=1.7, text width=1.3cm, inner sep=0pt] (declen) {$|T'| \!\ge\! n_0$};
    
    \node [block, right=0.9cm of declen] (det2) {Detector};
    \node [decision, right=0.8cm of det2] (dec2) {$Z'\!\ge z_0$};

    \node [outcome_w, above right=0.25cm and 2cm of dec2] (final_w) {``Watermarked''};
    \node [outcome_n, below right=0.25cm and 2cm of dec2] (final_n) {``Natural''};

    \path [line] (corpus) -- (scoring);
    \path [line] (scoring) -- (constraints);
    \path [line] (constraints) -- (milp);
    \path [line] (milp) -- (sig);

    \path [line] (input) -- (det1);
    \path [line] (det1) -- (dec1);

    \draw [line] (dec1.north) -- ++(0,0.3) -> (final_w.west);
    \node [font=\scriptsize, above right=0.55cm and 0.25cm of dec1] {Yes};

    \path [line] (dec1) -- node [pos=0.3, above, font=\scriptsize] {No} (filter);
    
    \path [dashedline] (sig.south) -- ++(0,-0.4) -| node [pos=0.6, below right, font=\scriptsize] {Inject} (filter.north);

    \path [line] (filter) -- (declen);
    
    \path [line] (declen) -- node [pos=0.3, above, font=\scriptsize] {Yes} (det2); 
    \path [line] (det2) -- (dec2);

    \draw [line] (dec2.east) -- ++(0.4,0) |- (final_w.west);
    \draw [line] (dec2.east) -- ++(0.4,0) |- (final_n.west);

    \draw [line] (declen.south) -- ++(0,-0.25) -> (final_n.west);
    \node [font=\scriptsize, below right=0.1cm and 0.45cm of declen] {No};
    
    \node [font=\scriptsize] at ($(dec2.east) + (0.7, 0.4)$) {Yes};
    \node [font=\scriptsize] at ($(dec2.east) + (0.7, -0.3)$) {No};

    \begin{scope}[on background layer]
        \node[fit=(corpus) (scoring) (sig), draw=gray, dashed, inner sep=0.3cm, rounded corners, fill=gray!5,
        label={[anchor=south west, xshift=.1cm, yshift=0cm, text=gray!80, font=\bfseries\small]north west:Phase 1: Offline Signature Generation}] (p1box) {};
        
        \node[fit=(input) (final_n) (final_w), draw=gray, inner sep=0.3cm, rounded corners,
        label={[anchor=south west, xshift=.1cm, yshift=-.55cm, text=gray!80, font=\bfseries\small]north west:Phase 2: Online Detection Pipeline}] (p2box) {};
    \end{scope}
\end{tikzpicture}}
\caption{\new{Flowchart of our two-stage watermark detection framework.}}
\label{fig:pineline}
\end{figure*}

\medskip\noindent\textbf{Signature filtering.}
A \emph{signature} $S\subseteq\mathcal V$ is a set of \emph{token types} computed during or after text generation.
Let $Z(T)$ denote the scheme-specific test statistic. 
A detector equipped with a signature $S$ proceeds in two stages (Fig.\,\ref{fig:pineline}).
First, the detector performs the baseline test on the input text $T$, and declares ``watermarked'' if $Z(T)\ge z_0$.
Otherwise, it deletes from $T$ all tokens whose types belong to $S$, producing a residual text $T'$.
The detector then re-runs the same baseline test on $T'$, computes $Z' \coloneqq Z(T')$, and declares ``watermarked'' if $Z' \ge z_0$ and $|T'| \ge n_0$, a threshold for the filtered text length.
We set $n_0=30$ throughout this paper.\footnote{\new{For \textsc{Kgw}, $Z_{\textrm{K}}$ is a z-test that relies on a normal approximation of a binomial evidence count. Here, $n_0=30$ is a standard conservative rule of thumb to keep this approximation accurate (see e.g., \citet{hogg2015probability}, and also \ref{app:BE-to-remainders} for an explicit tail-bound characterization). For \textsc{Exp}, the Gamma-tail p-value remains exact under $\mathcal H_0$ for \textit{any} $|T'|\ge 1$, but we adopt the same $n_0$ as a minimum-evidence guardrail, keeping the two-stage procedure uniform across watermark schemes.}}

Since filtering is only applied when the baseline test fails, the only error it may introduce is to flip an otherwise correct ``$T$ is unwatermarked'' decision under $\mathcal H_0$. The error probability induced by the filter is therefore
$\Perror \coloneqq \mathbb{P}(Z'\ge z_0 \mid Z < z_0,\; \mathcal{H}_0)$.
This probability is a \emph{conditional} Type-I error: it measures how often filtering overturns a baseline acceptance of $\mathcal H_0$ when the underlying text is unwatermarked.

\medskip\noindent\textbf{Null calibration after filtering.}
For \textsc{Kgw}, under the standard coloring model, the hidden green indicators $\{\mathbf{1}\{t_i\in G_i\}\}_{i=1}^n$ are i.i.d.\ $\mathrm{Ber}(\gamma)$ conditioned on the secret key. Since our filter depends only on $T$ and does not access the key, restricting to the retained indices preserves this law. Thus, $Z_{\textrm{K}}(T')$ has the same null calibration as the original test $Z_{\textrm{K}}(T)$, and its $\mathcal N(0,1)$ approximation improves with the residual length $n' \coloneqq |T'|$. We therefore enforce $n' \ge n_0$ before acting on the post-filter statistic.
For \textsc{Exp}, each retained pseudorandom scalar $R_i=u_i[t_i]$ is $\mathcal U[0,1]$ under $\mathcal H_0$. The retained scores $c_i$'s remain i.i.d.\ $\mathrm{Exp}(1)$ and the post-filter sum
$X_{T'} \coloneqq \sum_{i=1}^{n'} c_i$ satisfies $X_{T'} \sim \Gamma(n',1)$, which implies $p_{\textrm{E}}(T')\sim \mathcal U[0,1]$ and $Z_{\textrm{E}}(T')\sim \mathcal N(0,1)$ in the Gamma model for any $n'\ge 1$. 
\neww{Thus, the statistic computed on the filtered natural text remains correctly calibrated for the baseline watermark test, provided the filter relies on observed content rather than on secret color or score assignments. On the other hand, giving the detector a second chance to reject the null hypothesis does introduce an additional false positive risk, which is captured by the conditional error probability $\Perror$ defined earlier. Sec.\:\ref{sec:FPR-analysis} explores this risk further under distributionally correlated and stronger adversarial settings, relaxing the assumption of null calibration after filtering.}

\section{Method}
\label{sec:methodology}

\subsection{MILP-based Signature Generation}
\label{sec:ILP-formulation}


Given a set of token types $\mathcal{V}$ and a training corpus $\mathcal{T}$, we seek a signature $S \subseteq \mathcal{V}$: if $S$ contains a token type $w$, then all tokens with the same type will be ignored when we compute the test statistic for a text.
For each $w \in \mathcal{V}$, we define binary decision variables $x_w$ such that $x_w=0$ means $w$ should be included in $S$.

\medskip
\noindent\new{\textbf{\textsc{Kgw}-style watermarks.}
Fix a text $T$. Let $c_{w}$ denote the number of tokens in $T$ that have type $w$, and
$g_{w}$ denote the number of its green occurrences.
Let $N$ and $N'$ be the number of tokens in $T$ before and after filtering,
and $N_g$ and $N_g'$ be the number of green tokens in $T$ before and after filtering. Then we can write
\begin{align}
N       = \sum\nolimits_{w \in \mathcal{V}} c_{w},\quad
N'      = \sum\nolimits_{w \in \mathcal{V}} c_{w} \, x_w,\quad
N_{g}   = \sum\nolimits_{w \in \mathcal{V}} g_{w},\quad
N_{g}'  = \sum\nolimits_{w \in \mathcal{V}} g_{w} \, x_w.
\end{align}
Let $p$ be a constant chosen so that ${N_{g}'} \ge p N'$ implies $Z_{\textrm K}(T) \ge z_0$ (see later). 
Note that
\begin{align}
{N_{g}'} - p N' \;=\; \sum\nolimits_{w \in \mathcal{V}}\, (g_{w} - p\, c_{w}) \, x_w    
\end{align}
is a linear function over the decision variables $\{x_w : w \in \mathcal V\}$.
}

\new{
For each text $T \in \mathcal T$, we define an indicator variable $y_T$ such that $y_T=1 \Longleftrightarrow {N_{g}'} \ge p N'\!$ for $T$.
We thus reduce signature generation to an optimization problem; any solution $\{x_w\}$ that maximizes $\sum_{T \in \mathcal T} y_T$ corresponds to a signature maximizing the post-filter detection rate of the corpus $\mathcal T$ and vice versa.
}

Finally, we specify the constant $p$ for each text $T$ such that ${N_{g}'} \ge p N'$ ensures that the residual text $T'$ after filtering is declared watermarked by the detector. For \textsc{Kgw}, $Z_{\textrm K}(T) \ge z_0$ holds if and only if 
\begin{align}\label{eq:linear}
 N_{g}' \;\;\ge\;\;
 \gamma N' + z_0 \sqrt{N'\,\gamma(1-\gamma)}
 \;=\;
 (\gamma + z_0 \sqrt{\gamma(1-\gamma)/N'}) \cdot N'.
\end{align}
When $N'$ is sufficiently large, that is, when
$
  N' \ge \gamma(1-\gamma)\cdot[z_0/(p-\gamma)]^2,
$
we can safely replace \eqref{eq:linear} with a sufficient condition ${N_{g}'} \ge p N'$.
For example, when $\gamma=0.5$ and $z_0 = 4$, we can set $p = 0.87$ to make ${N_{g}'} \ge p N'$ a criterion for detecting watermarks in texts with filtered length $N'\ge 30$. 
This linear approximation makes signature selection amenable to an MILP solver, with an objective to maximize the post-filter detection rate of the training corpus. \neww{Because the coefficients in the MILP are computed from green counts determined by the watermark key, the resulting signature is specific to that key and must be recomputed after key rotation.}

\medskip
\noindent\new{\textbf{\textsc{Exp}-style watermarks.}
We now describe how the same MILP idea applies to \textsc{Exp}.
Given a text $T$, the \textsc{Exp} detector assigns each token $t_i$ a score
$c_i \coloneqq -\ln(1-R_i)$ and uses the text-level statistic $X \coloneqq \sum_i c_i$ (Sec.\,\ref{sec:prelim}).
After applying a signature $S$ and retaining only token types with $x_w=1$, the residual text $T'$ has length $N'$ as above and accumulated score
$
X' \coloneqq \sum_{i:\,t_i\notin S} c_i.
$
Analogous to the \textsc{Kgw} case, we define  
\begin{align}
g_w \;\coloneqq\; \sum\nolimits_{i:\,t_i=w} c_i, \quad
X' = \sum\nolimits_{w\in\mathcal V} g_w\,x_w,\quad
X' - p N' = \sum\nolimits_{w\in\mathcal V} (g_w-p\,c_w)\,x_w.
\end{align}
\ref{app:aar-obj} shows how to derive a constant $p$ such that
\(
X' \ge p N'
\)
is sufficient for the \textsc{Exp} detector to declare $T'$ watermarked, i.e., it implies $p_{\textrm E}(T')<\alpha$. 
We can employ the same MILP structure as above for \textsc{Exp}, but replace the \textsc{Kgw} constraint
$N_g' \ge p N'$ with the \textsc{Exp} constraint $X' \ge p N'$.
Intuitively, $p$ induces a red-green split for \textsc{Exp}: tokens with scores $c_i < p$ behave as \emph{red} evidence, and signature filtering aims to remove such low-score tokens so that the average retained score per token exceeds the threshold $p$. 
}

\smallskip
It is worth noting that our formulation of optimal signature selection is computationally equivalent to the \emph{Maximum Feasible Subsystem (Max-FS) of 0--1 Linear Inequalities} \cite{amaldi1995complexity}. A Max-FS has a set of linear inequalities of the form $\sum_{j=1}^n a_{i,j} \, x_j \ge 0$; the task is to find a binary assignment to $x_1,\dots,x_n$ that maximizes the number of satisfied inequalities. Finding an optimal signature can be reduced to solving a Max-FS problem. The Max-FS literature has developed specialized solvers \cite{pfetsch2008branch} and fast relaxation heuristics (e.g., \citet{firouzeh2022faster}) that are highly effective in practice. Thus, signature generation can leverage the rich algorithmic toolkit for Max-FS to enhance practical scalability.

\subsection{Context-Sensitive Token Selection}
\label{sec:ngram-optimization}
A signature that amplifies watermark signals aims to reduce red evidence while preserving green evidence. However, treating a token type as either always kept or always deleted is often too coarse. In a document, the same type may appear as red in one position and green in another, so uniform deletion either leaves residual red noise or discards valuable green evidence.

To address this granularity gap, we extend type-level signatures to context-sensitive \emph{$n$-gram signatures}. Here, ``context-sensitive'' means that retention decisions are conditioned on a contiguous neighborhood of token types. Concretely, instead of assigning a single delete decision to each token type, the signature is parameterized by admissible $n$-grams, and token retention is induced by whether the $n$-grams that cover a token are admissible. This formulation can, for example, suppress a red-leaning local pattern when it occurs in a specific surrounding token-type configuration, while preserving other occurrences of the same token types in different local contexts. When $n=1$, this scheme degenerates to the original type-level (no-context) filter; increasing $n$ provides finer control over which recurring local fragments are removed. In our experiments, modest values of $n$ yield substantial detection gains in weak-signal settings.

\new{To illustrate, assume that each English word corresponds to one token type. Consider the sequence
\[
T=\langle \texttt{analysis},\ \texttt{of},\ \texttt{the},\ \texttt{data},\ \texttt{is},\ \texttt{of},\ \texttt{interest},\ \texttt{to},\ \texttt{the},\ \texttt{community}\rangle .
\]
Suppose the optimized 2-gram signature declares the 2-gram $(\texttt{of},\texttt{the})$ inadmissible
while keeping all other observed 2-grams admissible. Under our 2-gram retention rule
(a token is kept iff \textit{all} length-$2$ windows that cover it are admissible), the filtered text becomes
\[
T'=\langle \texttt{analysis},\ \texttt{data},\ \texttt{is},\ \texttt{of},\ \texttt{interest},\ \texttt{to},\ \texttt{the},\ \texttt{community}\rangle .
\]
In this example, each of the token types \texttt{of} and \texttt{the} appears twice, yet the filter removes only the
\texttt{of the} occurrence. By comparison, a type-level (1-gram) signature would have to delete \texttt{of} or \texttt{the} everywhere to eliminate \texttt{of the}, which would potentially discard useful green evidence.}

\medskip
\noindent\new{\textbf{MILP encoding.}
An $n$-gram signature can be viewed as selecting a set of \emph{admissible local contexts} (the $n$-grams),
and then retaining only those token occurrences whose surrounding contexts are admissible.
This leads to a standard incidence-based 0--1 formulation where binary variables choose which contexts are allowed, and constraints propagate these choices to token occurrences. We use two kinds of binary variables:
\begin{itemize}[itemsep=0pt, topsep=5pt]
  \item $x_{w}=1$ indicates that the token type $w \in \mathcal V$ is retained after filtering;
  \item $y_{\tau_1,\dots,\tau_n}=1$ indicates that the $n$-gram $(\tau_1,\dots,\tau_n)\in\mathcal V^{n}$
        is declared admissible by the signature.
\end{itemize}}
\new{
Concretely, consider the 2-gram case.
Any interior position $j$ (i.e., $2\le j\le m-1$ for $|T|=m$) is covered by two 2-grams,
so the retention rule is the Boolean conjunction
$
x_{t_j} = y_{t_{j-1},\,t_{j}} \wedge y_{t_{j},\,t_{j+1}}.
$
Boundary positions are handled by the 2-grams
$x_{t_1} = y_{t_{1},\,t_{2}}$ and $x_{t_m} = y_{t_{m-1},\,t_{m}}$.
These Boolean constraints can be encoded using the standard 0--1 linearization \cite{wolsey1998integer};
the objective function and corpus-level regularization remain the same as in Sec.\,\ref{sec:ILP-formulation}.
Generally, any feasible assignment to $\{y_{\tau_1,\dots,\tau_n}\}$ in this MILP defines an $n$-gram signature
$S := \{(\tau_1,\dots,\tau_n)\in\mathcal V^n : y_{\tau_1,\dots,\tau_n}=0\}$.
}





Note that a naive formulation of an $n$-gram signature introduces $O(k^{n})$ binary
decision variables, where $k$ is the number of distinct token types.
We mitigate this blow-up by considering only \emph{effective} $n$-grams,
i.e., the $n$-grams whose removal would discard strictly more red than green tokens,
since removing ineffective $n$-grams only weakens the watermark signal or leaves it unchanged.
This optimization drastically reduces the number of decision variables needed,
keeping the MILP within the reach of off-the-shelf solvers.

\OMIT{
\subsection{Pruning the Search Space}
A naive $n$-gram formulation introduces $O(N^{n})$ binary
variables $\{y_{\tau_1,\dots,\tau_n}\}$,
where $N$ is the number of distinct token types in the corpus.
We mitigate this blow-up by considering only the effective $n$-grams.
An $n$-gram $(\tau_1,\dots,\tau_n)$ is called \emph{effective} iff, in at
least one of its corpus occurrences, removing the $n$-gram would
discard strictly more red than green tokens.
Variables corresponding to ineffective $n$-grams are arguably not needed,
since they can only weaken the watermark signal or leave it unchanged.

The right plot of Fig.\,\ref{fig:workflow} shows that the growth of effective 2-grams is significantly slower than that of full 2-grams, keeping the MILP within reach of off-the-shelf solvers. In our evaluation, optimal 2-gram signatures can be found in seconds for corpora with hundreds of thousands of tokens.
}

\section{Asymptotic False-Positive Analysis}
\label{sec:FPR-analysis}

While signature filtering can boost the TPR, an overly aggressive filter can also raise the FPR by making an ordinary text look artificially ``watermark-heavy'' after deletion. The mechanism behind this risk depends on the underlying watermark family. In \textsc{Kgw}-style schemes, the evidence is the (hidden) green/red coloring of tokens, and deleting tokens can change the observed green fraction in ways that are potentially harmful for adversarial or correlated data. \new{In \textsc{Exp}-style schemes, the evidence is a keyed pseudorandom score per realized token, and a filter that depends only on the observed token sequence cannot cherry-pick unusually large \textsc{Exp} scores. Consequently, filtering is intrinsically more robust for \textsc{Exp} than for \textsc{Kgw}.} 

In Secs~\ref{sec:model-A}--\ref{sec:model-D}, we analyze \textsc{Kgw} under three threat models: (i) a signature-aware but color-blind attacker, (ii) a signature-aware and color-adaptive attacker, and (iii) distributionally correlated signatures without an active attacker. These results show when \textsc{Kgw} filtering is guaranteed to be safe: deleting a number of tokens that grows linearly with the text length is essentially harmless under the standard coloring assumption, but adversarial or correlated deletions on the order of $\sqrt{n}$ can potentially degrade detection in weaker conditions. \new{We summarize the corresponding false-positive guarantees for \textsc{Exp} in Sec.\,\ref{sec:model-E}.}

Throughout the analyses, we fix a z-score threshold $z>0$ and a tolerance error level $\varepsilon\in(0,1)$.
For the \textsc{Kgw}-specific bounds, we additionally fix a green ratio $\gamma \in (0,1)$.
For \textsc{Exp}, we write $z \coloneqq \Phi^{-1}(1-\alpha)$ for the detector's significance level $\alpha$.
The error probability of a filter is defined by $\Perror \coloneqq \mathbb{P}(Z'\ge z \mid Z < z, \mathcal{H}_0)$,
as discussed in Sec.\,\ref{sec:prelim}. Full proofs of the theorems can be found in the appendices.



\subsection{Signature-Aware but Color-Blind Attacker}
\label{sec:model-A}

Under this threat model, the attacker has observed the entire signature before crafting the text.
She may choose any tokens in the text, but has no control over the color of each token. 
This ``color-blind'' assumption aligns with the standard setting of all \textsc{Kgw}-style watermarks, where the hash seed that partitions the vocabulary is opaque to external users. In this setting, a linear deletion budget is sufficient to make the signature filter provably resilient to even a fully signature-aware adversary.


\begin{theorem}\label{thm:finite-sig-bound}
  For every sufficiently large text length~$n$,
  one can compute a bound $s_{\mathrm{safe}} = \Theta(n)$ such that
  any signature filter that deletes at most $s_{\mathrm{safe}}$ tokens
  guarantees $\Perror \le \varepsilon$.
\end{theorem}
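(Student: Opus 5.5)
Under the color-blind model the signature, and hence the set of deleted positions, is fixed before the colors are revealed. Given the text $T$ of length $n$, the deleted index set $D$ with $|D| = s$ is therefore independent of the green indicators $B_i \sim \mathrm{Ber}(\gamma)$, which are i.i.d.\ under $\mathcal H_0$. The residual statistic $Z'$ is then a standardized Binomial$(n-s,\gamma)$ count. The plan is to bound the conditional error
\[
\Perror = \frac{\mathbb P(Z' \ge z,\; Z < z)}{\mathbb P(Z < z)} \le \frac{\mathbb P(Z' \ge z)}{\mathbb P(Z<z)}
\]
and make it at most $\varepsilon$ by choosing $s$ appropriately. A marginal bound alone will not suffice, because $\mathbb P(Z'\ge z)\approx 1-\Phi(z)$ does not shrink with $s$. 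The joint event has to be used.

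Write $G = \sum_{i\notin D} B_i$ and $H = \sum_{i\in D} B_i$, which are independent. Then
\[
\sqrt{\gamma(1-\gamma)}\,\bigl(Z\sqrt n - Z'\sqrt{n-s}\bigr) = H - \gamma s.
\]
The event $\{Z'\ge z,\ Z<z\}$ requires $G-\gamma(n-s)\ge z\sqrt{\gamma(1-\gamma)(n-s)}$ together with $H-\gamma s < z\sqrt{\gamma(1-\gamma)}\,(\sqrt n-\sqrt{n-s}) - (G-\gamma(n-s))$. This forces
\[
H-\gamma s < -z\sqrt{\gamma(1-\gamma)}\,\bigl(\sqrt{n-s}-\sqrt n\,\bigr)_{\!\text{gap}} ,
\]
that is, the deleted block must be red-heavy by roughly $z\sqrt{\gamma(1-\gamma)}\,(\sqrt{n-s}-\sqrt n+\sqrt n)$. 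The cleaner route is to condition on $G$. For each fixed $G=g$ with the normalized residual $u=(g-\gamma(n-s))/\sqrt{\gamma(1-\gamma)(n-s)}\ge z$, we need $H-\gamma s < \sqrt{\gamma(1-\gamma)}\,\bigl(z\sqrt n - u\sqrt{n-s}\bigr)\le \sqrt{\gamma(1-\gamma)}\,z(\sqrt n-\sqrt{n-s})$. Since $\sqrt n-\sqrt{n-s}\approx s/(2\sqrt n)$, the required deviation of $H$ is small for small $s$, so the key is to bound the residual window directly. I would therefore estimate
\[
\mathbb P(Z'\ge z,\,Z<z)\le \mathbb P\bigl(z\le Z' < z\sqrt{n/(n-s)} + |H-\gamma s|/\sqrt{\gamma(1-\gamma)(n-s)}\bigr).
\]
Splitting on whether $|H-\gamma s|\le t\sqrt s$, Hoeffding gives $\mathbb P(|H-\gamma s|>t\sqrt s)\le 2e^{-2t^2}$. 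On the complement, $Z'$ must fall in an interval of width $w = z(\sqrt{n/(n-s)}-1) + t\sqrt{s/(\gamma(1-\gamma)(n-s))}$ starting at $z$. Using the Berry--Esseen bound of the appendix (\ref{app:BE-to-remainders}), this probability is at most $\phi(z)\,w + 2C_{\mathrm{BE}}\rho/\sqrt{n-s}$.

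It then suffices to pick $t$ with $2e^{-2t^2}\le \varepsilon\Phi(z)/3$, and then choose $s=\kappa n$ with the constant $\kappa$ small enough that $\phi(z)\,w\le\varepsilon\Phi(z)/3$. This works because $w = O(\sqrt{\kappa}) + O(\kappa)$ is independent of $n$, so $s_{\mathrm{safe}}=\kappa n=\Theta(n)$. Finally, take $n$ large enough that the Berry--Esseen remainder $O(1/\sqrt{(1-\kappa)n})$ is below $\varepsilon\Phi(z)/3$, and also that $\mathbb P(Z<z)\ge\Phi(z)/2$, absorbing the factor. Monotonicity in $s$, meaning that deleting fewer tokens only shrinks $w$, extends the bound to all $s\le s_{\mathrm{safe}}$. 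Independence of $D$ from the colors handles the adversarial choice of which tokens to delete.

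The main obstacle is the window estimate. The $\sqrt{s}$ fluctuation of $H$ makes $w$ scale like $\sqrt{s/n}$ rather than $s/n$, and one has to check that the explicit constants, with Berry--Esseen applied to a lattice binomial, still give a fixed positive $\kappa$ depending only on $(z,\gamma,\varepsilon)$. If the lattice remainder is awkward, I would fall back on the standard local-limit bound $\mathbb P(\text{Binomial in an interval of length } L)\le L/\sqrt{2\pi\gamma(1-\gamma)(n-s)}+O(1/\sqrt{n})$.
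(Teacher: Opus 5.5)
Your argument is correct, and it takes a different route from the paper.

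\textbf{The two routes.} The paper approximates $(Z,Z')$ by a bivariate normal with correlation $\rho=\sqrt{n'/n}$. It bounds $\Perror$ by the limiting flip function $f(\rho)=1-\Phi_2(z,z;\rho)/\Phi(z)$ plus a bivariate Berry--Esseen remainder, and sets $s_{\mathrm{safe}}=n(1-\rho_\varepsilon^2)$ with $f(\rho_\varepsilon)\le\varepsilon/2$. You never need a joint CLT. You split the colors into the independent retained and deleted blocks $G,H$, control $H$ by Hoeffding, and trap $Z'$ in a window $[z,z+w)$ with $w=O(\sqrt\kappa)$. That reduces everything to the univariate Kolmogorov bound.

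\textbf{What each buys.} The paper's route gives the asymptotically exact flip probability, hence the sharp constant $1-\rho_\varepsilon^2$. It is also the template the paper reuses for \textsc{Unigram} with $\rho=\sqrt{Q'/Q}$. Yours gives a more conservative $\kappa$: the Hoeffding split loses roughly a $\log(1/\varepsilon)$ factor. Conditioning on $H$ and integrating the window width, using $\E(H-\gamma s)^-\le\sqrt{\gamma(1-\gamma)s}$, would remove that logarithm. In exchange, your bound is uniform over all $s\le\kappa n$, including very small $s$.

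That uniformity is not cosmetic. Since $\|\Sigma^{-1/2}\|=(1-\rho)^{-1/2}$, the deleted tokens contribute order $s^{-1/2}$, not $n^{-1/2}$, to the standardized third moment. So the uniform $O(n^{-1/2})$ remainder over $\rho\in[\rho_\varepsilon,1]$ that the paper's argument relies on is not what the convex-set Berry--Esseen bound delivers when few tokens are deleted. Your window, by contrast, simply shrinks as $s\to0$.

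\textbf{Small fixes.}
\begin{enumerate}
\item Three terms of size $\varepsilon\Phi(z)/3$ divided by $\mathbb{P}(Z<z)\ge\Phi(z)/2$ give $2\varepsilon$, so use $\varepsilon\Phi(z)/6$ per term.
\item The ``red-heavy'' gloss is wrong. The actual constraint is $H-\gamma s<z\sqrt{\gamma(1-\gamma)}(\sqrt n-\sqrt{n-s})$, whose right-hand side is positive. Nothing downstream uses the gloss, so this does not affect the proof.
\item The lattice concern is moot, since the Kolmogorov-distance Berry--Esseen bound holds for binomial sums as is.
\end{enumerate}
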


Theorem~\ref{thm:finite-sig-bound} shows that signature filtering behaves as a bounded perturbation of
the baseline z-test whenever the filter operates within this linear deletion regime.
This result is a conservative worst-case estimate: in practice, it offers a simple rule of thumb for configuring
deletion budgets, which can be refined by empirical calibration in specific deployments.




\subsection{Signature-Aware and Color-Adaptive Attacker}
\label{sec:model-C}
In this scenario, the attacker not only knows the deployed signature but can also freely choose both the tokens and their colors when crafting a text. This setting corresponds to a fully compromised watermark, e.g., the attacker has effectively learned the secret partition of the watermark \cite{reynolds2025breaking}.
%
%
%
Theorem \ref{thm:sqrt-deterministic} characterizes how much adversarial editing is needed to overturn a robustly correct unwatermarked decision under full color control. Specifically, when the pre-filter z-score lies below the threshold by a fixed margin, a deletion budget of order $\Theta(\sqrt n)$ is both sufficient and necessary to \emph{deterministically} flip the decision. 
This result holds independently of the coloring assumption behind the watermark.

\begin{theorem}
\label{thm:sqrt-deterministic}
Fix a constant $\eta>0$. For every sufficiently large text length $n$,
a color-adaptive attacker who knows the signature can construct a text with
$Z \le z-\eta$ such that deleting $O(\sqrt{n})$ tokens ensures $\Perror = 1$. 
Conversely, for any text with $Z \le z-\eta$, no
deterministic flip is possible with $o(\sqrt n)$ deletions.
\end{theorem}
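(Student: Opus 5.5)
The argument is purely deterministic arithmetic on the \textsc{Kgw} statistic $Z_{\textrm K}(T)=(N_g-\gamma n)/\sqrt{\gamma(1-\gamma)n}$. Because the attacker fixes both tokens and colors, the text is a fixed object and $\Perror=1$ just means that the post-filter statistic exceeds the threshold with certainty. Write $\sigma=\sqrt{\gamma(1-\gamma)}$. Suppose the filter deletes $s$ tokens, of which $r$ are red and $s-r$ green. Then $n'=n-s$ and $N_g'=N_g-(s-r)$, so
\begin{align}
Z' \;=\; \frac{N_g-(s-r)-\gamma(n-s)}{\sigma\sqrt{n-s}} \;=\; \frac{Z\sigma\sqrt n + r-(1-\gamma)s}{\sigma\sqrt{n-s}} .
\end{align}
Both directions of the theorem come from this identity. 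The same computation applies to any standardized count statistic, which is why the conclusion does not depend on the coloring model.

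For sufficiency, the attacker builds a text of length $n$ with green count $N_g=\lfloor \gamma n + (z-\eta)\sigma\sqrt n\rfloor$, so that $Z\le z-\eta$. All red tokens that the filter should remove are placed on token types (or $n$-grams) in the known signature $S$. All other tokens use types outside $S$. Concretely, $s=\lceil c\sqrt n\rceil$ red tokens are given signature types, with $c$ a constant to be chosen. Deleting exactly these tokens gives $r=s$, and the identity becomes
\begin{align}
Z'=\frac{Z\sigma\sqrt n+\gamma s}{\sigma\sqrt{n-s}}\ \ge\ \frac{(z-\eta)\sigma\sqrt n+\gamma c\sqrt n}{\sigma\sqrt n} = z-\eta+\gamma c/\sigma .
\end{align}
Choosing $c>\eta\sigma/\gamma$ plus a small slack then gives $Z'\ge z$. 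The slack absorbs the floor and ceiling rounding and the lower-order terms, and also makes $n'\ge n_0$ for large $n$. Since $Z<z$ and $Z'\ge z$ hold deterministically, the conditional probability is $1$.

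For necessity, take any text with $Z\le z-\eta$ and any deletion of $s=o(\sqrt n)$ tokens, even an adversarially chosen one. The numerator of the identity increases by at most $r-(1-\gamma)s\le \gamma s$, which is attained when every deleted token is red. Hence
\begin{align}
Z'\le \frac{(z-\eta)\sigma\sqrt n+\gamma s}{\sigma\sqrt{n-s}} = \bigl(z-\eta+o(1)\bigr)\bigl(1+O(s/n)\bigr).
\end{align}
Here $\gamma s/(\sigma\sqrt n)=o(1)$ and $\sqrt{n/(n-s)}=1+o(1)$. So for all sufficiently large $n$ we get $Z'<z$, and no deterministic flip is possible.

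The step needing most care is the denominator shrinkage in the necessity direction. When $z-\eta>0$, the factor $\sqrt{n/(n-s)}$ inflates $Z'$, so the estimate must show that this inflation is $1+O(s/n)$. That is negligible against the fixed margin $\eta$, whereas the numerator gain is $O(s/\sqrt n)$. The bound has to be uniform over all texts with $Z\le z-\eta$, including those with very negative $Z$. This holds because the bound is monotone in $Z$, so the worst case is $Z=z-\eta$.

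A second point concerns the sufficiency construction under context-sensitive ($n$-gram) signatures. The signature types placed on the red tokens must not also cause green tokens to be deleted. I would handle this by using a single signature type (or one inadmissible $n$-gram pattern) whose occurrences are all red. If the signature cannot be matched in isolation, I would instead note that it suffices that $r-(1-\gamma)s\ge c'\sqrt n$ with $c'\sigma^{-1}>\eta$.
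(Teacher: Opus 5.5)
Your proof is correct. The sufficiency half is the paper's construction: put $N_g$ just below the threshold on the retained tokens, make all $s=\lceil c\sqrt n\rceil$ filtered tokens red, and take $c>\eta\sqrt{(1-\gamma)/\gamma}$. Your $\eta\sigma/\gamma$ is the same constant.

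The converse is where you take a different route. The paper argues one deletion at a time. It posits red-only deletion as the attacker's best case and bounds each step by $Z_{n-1}-Z_n\le K/\sqrt n$ with $K=2\gamma/\sqrt{\gamma(1-\gamma)}$. It then telescopes with an integral comparison to get $Z_{n-s}-Z_n\le 2Ks/\sqrt n$. You instead use the exact identity $Z'=(Z\sigma\sqrt n+r-(1-\gamma)s)/(\sigma\sqrt{n-s})$ and bound it once.

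This buys you three things:
\begin{itemize}
\item Mixed red/green deletions are covered without a separate optimality claim, since for fixed $s$ the choice $r=s$ is exactly the worst case.
\item Uniformity over all texts with $Z\le z-\eta$ follows from monotonicity in $Z$. The paper's step bound invokes $N_g-\gamma n=O(\sqrt n)$, which for very negative $Z$ holds only as a one-sided bound. That one-sided bound is all that is actually needed, but the paper leaves this implicit.
\item The constants match. With $s=c\sqrt n$, your bound forces $c\ge\eta\sqrt{(1-\gamma)/\gamma}-o(1)$ for any flip, which is exactly the sufficiency threshold. The paper's telescoping only yields $c\ge(\eta/4)\sqrt{(1-\gamma)/\gamma}$.
\end{itemize}

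The paper's incremental view, for its part, makes explicit the intuition that each deleted token can move the score by only $\Theta(n^{-1/2})$. Your remark on realizing the deletions under $n$-gram signatures also addresses a step the paper simply asserts.
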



We note that a color-adaptive attacker already has enough power to break the watermark without filtering. In such case, the underlying watermark has lost its secrecy, and no detection‑time module can repair it. 
If color‑adaptive threats are considered plausible, the appropriate response is to refresh the watermark key or combine watermarking with additional provenance signals.



\subsection{Distributionally Correlated Signatures}
\label{sec:model-D}
In practice, a signature might be deployed on the same distribution that informed the signature selection, e.g.,
when the training corpus shares low-entropy phrases such as boilerplate fragments or topical keywords with a benign user text. 
Hence, the red tokens removed by the filter are no longer a random sample, but a biased subset of the text.
Theorem \ref{thm:square-law} formalizes the \emph{worst case} in this setting: $\sqrt{n}$-scale adversarial deletions is necessary and sufficient to flip a decision {with an arbitrarily high probability} given admissible parameters.

\OMIT{
\begin{theorem}[Safe deletion with controlled data dependency]
\label{thm:delta-bounded-intrinsic}
Fix parameters
\[
    0<\gamma<1,\quad
    0<\delta<\min\{\gamma,1-\gamma\},\quad
    0<\varepsilon<1.
\]
Assume that
\[
z\ge z_\varepsilon
        \coloneqq\inf\Bigl\{z>0:\frac{1-\Phi(z)}{\Phi(3z/4)}\le\frac{\varepsilon}{4}\Bigr\}.
\]
\noindent
\textbf{Per-token probabilities and variance.}\;
For an arbitrary choice of
$p_i\in[\gamma-\delta,\gamma+\delta]$ let
$\tilde\sigma_{\!n}^{2}\coloneqq\sum_{i=1}^{n}p_i(1-p_i)$.

\medskip
\noindent
\textbf{Deletion budget.}\;
Denote by $\rho_\ast=\rho_\ast(z,\varepsilon)$ the correlation threshold
from Theorem\,4, and set
\[
   s_\star(n)
      \coloneqq\Bigl\lfloor
             \min\!\Bigl\{
                   \frac{z\,\tilde\sigma_{\!n}}{4\delta},
                   \;
                   (1-\rho_\ast^{\,2})n
             \Bigr\}
         \Bigr\rfloor_+ .
\]
\noindent
\textbf{Guarantee.}\;
Let an attacker (i) choose any signature
$S\subseteq[n]$ of size $s\le s_\star(n)$,
(ii) choose $p_i \in [\gamma-\delta,\gamma+\delta]$ for any $i \in n$,
and choose $p_i = \gamma$ for any $i \notin S$,
(iii) color tokens
independently, and (iv) delete the $s$ signature tokens.
Then for all sufficiently large $n$, it holds that
\[
   \mathbb{P}\bigl(Z'\ge z \mid Z<z\bigr)\;\le\;\varepsilon,
\qquad
   Z\coloneqq\frac{N_g-\gamma n}{\tilde\sigma_{\!n}},
\quad Z'\coloneqq\frac{N_g'-\gamma(n-s)}{\tilde\sigma_{n-s}} .
\]
where $N_g$ and $N_g'$ are the numbers of tokens before and after filtering.
\end{theorem}
\begin{proof}
Define $c_z \coloneqq \tfrac12\Phi(3z/4)$.

\emph{Step 1 (bias control).}
With $\psi\coloneqq\sum_{i\in S}(p_i-\gamma)$ we have $|\psi|\le\delta s\le z\tilde\sigma_{\!n}/4$.
Write $X\coloneqq\sum_{i=1}^{n}(1_{\{\text{token $t_i$ is green}\}}-p_i)$ with $\mathbb E[X]=0$.
Hence $Z=\psi/\tilde\sigma_{\!n}+X/\tilde\sigma_{\!n}$ satisfies
$Z<z\Rightarrow X/\tilde\sigma_{\!n}\le3z/4$.

\emph{Step 2 (lower bound for $\mathbb{P}(Z<z)$).}
Apply the non-identical Berry-Esseen inequality (constant 0.56) to
$X/\tilde\sigma_{\!n}$:
\[
   \mathbb{P}(Z<z)\;\ge\;\Phi(3z/4)-0.56/\tilde\sigma_{\!n}\;\ge\;c_z
\]
for sufficiently large $n$.

\emph{Step 3 (upper tail of $Z'$).}
Since $s\le(1-\rho_\ast^{\,2})n$, one has
$\tilde\sigma_{n-s}\ge\rho_\ast\tilde\sigma_{\!n}$.
A second Berry-Esseen application gives
\[
   \mathbb{P}(Z'\ge z)
     \;\le\;1-\Phi(z)+0.56/(\rho_\ast\tilde\sigma_{\!n})
     \;\le\;1-\Phi(z)+\tfrac12\varepsilon c_z
\]
for sufficiently large $n$.

\emph{Step 4 (conditioning).}
Since $\mathbb{P}(Z'\ge z,\,Z<z)\le\mathbb{P}(Z'\ge z)$
dividing the r.h.s.~by $\mathbb{P}(Z<z)\ge c_z$ and using $z\ge z_\varepsilon$ gives
$\mathbb{P}(Z'\ge z\mid Z<z)\le\varepsilon$.
\end{proof}
\begin{theorem}[Safe deletion under $\delta$-bounded data dependence]
\label{thm:delta-bounded-intrinsic}
Fix parameters
\[
    0<\gamma<1, \qquad
    0<\delta<\min\{\gamma,1-\gamma\}, \qquad
    0<\varepsilon<1 .
\]
Let
\[
    z_\varepsilon
        \coloneqq\inf\Bigl\{z>0:\,
              \frac{1-\Phi(z)}{\Phi(3z/4)}\le\frac{\varepsilon}{4}
           \Bigr\},
    \qquad
    z\;\ge\;z_\varepsilon .
\]

\paragraph{Per-token probabilities and variance.}
For any choice of
$p_i\in[\gamma-\delta,\gamma+\delta]$ $(1\le i\le n)$
define the (heteroscedastic) variance
\[
    \tilde\sigma_{\!n}^{2}\;\coloneqq\;\sum_{i=1}^{n}p_i(1-p_i).
\]

\paragraph{Deletion budget.}
Let $\rho_\ast=\rho_\ast(z,\varepsilon)\in(0,1)$ be
the correlation threshold from Theorem\,4 and set
\[
   s_\star(n)
      \coloneqq\Bigl\lfloor
             \min\!\Bigl\{
                   \tfrac{z\,\tilde\sigma_{\!n}}{4\delta},
                   \;(1-\rho_\ast^{\,2})n
             \Bigr\}
         \Bigr\rfloor_+
   \quad\bigl(\text{take }s_\star=0\text{ if the bracket is negative}\bigr).
\]

\paragraph{Guarantee.}
Let an attacker  
(i) choose any signature $S\subseteq[n]$ with $|S|=s\le s_\star(n)$;  
(ii) set $\,p_i\in[\gamma-\delta,\gamma+\delta]$ for $i\in S$ and $p_i=\gamma$ for $i\notin S$;  
(iii) color tokens independently according to those $p_i$;  
(iv) delete the $s$ signature tokens.

Then, for all sufficiently large $n$, the flip probability satisfies
\[
   \mathbb{P}\bigl(Z'\ge z \;\bigm|\; Z<z\bigr)\;\le\;\varepsilon,
   \qquad
   Z \coloneqq \frac{N_g-\gamma n}{\tilde\sigma_{\!n}},
   \quad
   Z' \coloneqq \frac{N_g' -\gamma(n-s)}{\tilde\sigma_{n-s}},
\]
where $N_g$ (resp.\ $N_g'$) is the number of green tokens before (resp.\ after) deletion.
\end{theorem}
\begin{proof}
Throughout, write
\[                    
     X
     \coloneqq\sum_{i=1}^{n}\bigl(\mathbf 1_{\{\text{token $t_i$ green}\}} - p_i\bigr),
     \qquad
     \mathbb{E}X = 0,
\]
so that $N_g = \sum_{i=1}^{n}p_i + X$.

\smallskip\noindent
\textbf{Step 1 (bias control).}  
The \emph{bias term}
$
   \psi \coloneqq \sum_{i\in S}(p_i-\gamma)
$
satisfies $|\psi|\le\delta s\le z\tilde\sigma_{\!n}/4$ by the
definition of $s_\star$.
Hence
\[
      Z\;=\;\frac{\psi}{\tilde\sigma_{\!n}}
            +\frac{X}{\tilde\sigma_{\!n}}
      \quad\Longrightarrow\quad
      \{Z<z\}\subseteq\Bigl\{\frac{X}{\tilde\sigma_{\!n}}\le\tfrac34 z\Bigr\}.
\]
(The algebra is just $z>\psi/\tilde\sigma_{\!n}+X/\tilde\sigma_{\!n}$
with $|\psi|/\tilde\sigma_{\!n}\le z/4$.)

\smallskip\noindent
\textbf{Step 2 (lower bound for $\mathbb{P}(Z<z)$).}  
The summands of $X$ are \emph{independent, mean $0$,
bounded by $1$, and with finite third absolute moment}  
($\lvert\mathbf 1-p_i\rvert^3\le1$).
Therefore the triangular-array Berry-Esseen theorem of
Bentkus \cite[Theorem\,2.1]{Bentkus2005}
applies\footnote{%
It requires only independence, zero mean, $\sum\sigma_i^2=1$
after normalisation, and finite third absolute moments.}.
With the universal constant $C=0.56$ from that theorem we obtain
\[
   \mathbb{P}\Bigl(\tfrac{X}{\tilde\sigma_{\!n}}\le\tfrac34 z\Bigr)
      \;\ge\;
      \Phi(3z/4)-\frac{0.56}{\tilde\sigma_{\!n}}
      \;\;=\;:
      c_z - \frac{0.56}{\tilde\sigma_{\!n}},
      \quad
      c_z\coloneqq\tfrac12\Phi(3z/4).
\]
Because $\tilde\sigma_{\!n}= \Omega(\sqrt n)$,
the subtracted term is $o(1)$ and is $<c_z/2$ for all $n$ large enough,
yielding
\[
      \mathbb{P}(Z<z)\;\ge\; \tfrac12 c_z
      \qquad\text{for large }n.
\]

\smallskip\noindent
\textbf{Step 3 (upper bound for $\mathbb{P}(Z'\ge z)$).}  
After deleting $s$ tokens we have a new triangular array
of length $n-s$.  
Its variance satisfies $\tilde\sigma_{n-s}\ge\rho_\ast\tilde\sigma_{\!n}$
by the choice of $s_\star$.  
Applying Bentkus' bound \emph{again} (now to a \emph{sum of i.i.d.\ Bernoulli's}
because all remaining $p_i=\gamma$ on non-signature positions) gives
\[
   \mathbb{P}(Z'\ge z)
      \;\le\;
      1-\Phi(z)\;+\;\frac{0.56}{\rho_\ast\tilde\sigma_{\!n}}.
\]
For large $n$ the last fraction is $\le\frac12\varepsilon c_z$
by the same $\Omega(\sqrt n)$ argument.

\smallskip\noindent
\textbf{Step 4 (conditioning).}
Now
\[
   \mathbb{P}(Z' \ge z\mid Z<z)
        \;=\;
        \frac{\mathbb{P}(Z' \ge z,\;Z<z)}{\mathbb{P}(Z<z)}
        \;\le\;
        \frac{\mathbb{P}(Z'\ge z)}
             {\frac12 c_z}
        \;\le\;
        \frac{2\bigl[1-\Phi(z)+\frac12\varepsilon c_z\bigr]}{c_z}.
\]
Because $z\ge z_\varepsilon$ implies
$(1-\Phi(z))/c_z\le\varepsilon/2$, the right-hand side
is $\le\varepsilon$, completing the proof.
\end{proof}

\begin{theorem}[Safe deletion with $\sqrt n$-budget]
\label{thm:delta-bounded-intrinsic-v2}
Fix parameters
\[
    0<\gamma<1,\quad
    0<\delta<\min\{\gamma,1-\gamma\},\quad
    0<\varepsilon<1,
\]
and let
$
      z_\varepsilon\coloneqq\inf\{z>0:(1-\Phi(z))/\Phi(3z/4)\le\varepsilon/4\}
$
with $z\ge z_\varepsilon$.

\paragraph{Per-token probabilities.}
Choose arbitrary
$p_i\in[\gamma-\delta,\gamma+\delta]$ $(1\le i\le n)$ and let
$
      \tilde\sigma_{n}^{2}\coloneqq\sum_{i=1}^{n}p_i(1-p_i).
$

\paragraph{Deletion budget.}
Define
\[
   s_\star(n)\coloneqq\Bigl\lfloor\frac{z\,\tilde\sigma_N}{4\delta}\Bigr\rfloor_+ .
\]

\paragraph{Guarantee.}
For every sufficiently large $n$ and every signature
$S\subseteq[n]$ with $|S|=s\le s_\star(n)$,
the flip probability satisfies
\[
   \mathbb{P}(Z'\ge z\mid Z<z)\;\le\;\varepsilon,
\]
where
$
   Z=(N_g-\gamma n)/\tilde\sigma_N
$
and
$
   Z'=(N_g'-\gamma(n-s))/\tilde\sigma_{n-s}.
$
\end{theorem}
}
\begin{theorem}
\label{thm:square-law}
Suppose that the attacker can choose the colors of the removed tokens with a deletion budget $s \ge 0$. 
We can compute constants $c = c(z,\gamma)$ and $z_\varepsilon$
such that for all sufficiently large text length $n$,
(i) if $s\le \lfloor c\sqrt n\rfloor$ and $z \ge z_{\varepsilon}$, then it holds that $\Perror \le \varepsilon$;
(ii) if $\varepsilon \in (0, \tfrac{1}{2})$, then there exists a constant $c_{\mathrm{flip}} > c$ such that $s \ge \lceil c_{\mathrm{flip}}\sqrt n\rceil$ implies $\Perror \ge 1-\varepsilon$.
\end{theorem}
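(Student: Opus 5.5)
We model the setting as in Sec.~\ref{sec:model-D}. The $n$ colors are independent $\mathrm{Ber}(\gamma)$ under $\mathcal H_0$. The attacker fixes a deletion set $S$ of size $s$ and may choose the colors on $S$; the worst case for part (i) is that every deleted token is red. We write $N_g$ for the green count on all $n$ tokens and $N_g^{\mathrm{rest}}$ for the green count on the $n-s$ retained tokens. We set $\sigma_m \coloneqq \sqrt{\gamma(1-\gamma)m}$, so that $Z = (N_g-\gamma n)/\sigma_n$ and $Z'=(N_g^{\mathrm{rest}}-\gamma(n-s))/\sigma_{n-s}$. The key algebraic step is to express both statistics through the single binomial sum $U\coloneqq (N_g^{\mathrm{rest}}-\gamma(n-s))/\sigma_{n-s}$, which is asymptotically $\mathcal N(0,1)$ by Berry--Esseen (\ref{app:BE-to-remainders}). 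Then $Z' = U$, and
\[
Z = \frac{\sigma_{n-s}}{\sigma_n}\,U + \frac{G_S-\gamma s}{\sigma_n},
\]
where $G_S$ is the number of green tokens in $S$. Red deletions ($G_S=0$) shift $Z$ down by $\gamma s/\sigma_n = \gamma c/\sqrt{\gamma(1-\gamma)}+o(1)$ when $s=c\sqrt n$. Since $\sigma_{n-s}/\sigma_n\to1$, the event $\{Z'\ge z, Z<z\}$ is asymptotically $\{z\le U< z+\kappa\}$ with $\kappa\coloneqq c\sqrt{\gamma/(1-\gamma)}$. Likewise $\{Z<z\}$ is asymptotically $\{U<z+\kappa\}$.

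For part (i), we bound
\[
\Perror \le \frac{\Phi(z+\kappa)-\Phi(z)+r_n}{\Phi(z+\kappa)-r_n}\le \frac{1-\Phi(z)+r_n}{\Phi(z)-r_n},
\]
where $r_n=O(1/\sqrt n)$ is the Berry--Esseen remainder, which also absorbs the $1+O(s/n)$ rescaling factor. We define $z_\varepsilon$ as the smallest $z$ with $(1-\Phi(z))/\Phi(z)\le\varepsilon/2$. For larger $n$ the remainder then contributes at most another $\varepsilon/2$. A sharper constant is possible: choose $c=c(z,\gamma)$ so that $\kappa$ is at most some fixed fraction of $z$, say $\kappa\le z/4$. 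In that case the numerator is $\Phi(5z/4)-\Phi(z)$, and the bound can be stated in the paper's style with $\Phi(3z/4)$-type quantities. Any mixture of red and green deletions only decreases the shift, so the all-red case is the worst case.

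For part (ii), we take $s=\lceil c_{\mathrm{flip}}\sqrt n\rceil$ all-red deletions. Then $\Perror \approx (\Phi(z+\kappa_{\mathrm{flip}})-\Phi(z))/\Phi(z+\kappa_{\mathrm{flip}})$. This ratio is bounded by $1-\Phi(z)$ over a denominator near $1$, so by itself it tends to $1-\Phi(z)$, not to $1$. Getting $\Perror\ge1-\varepsilon$ therefore requires more. The attacker must pick the deleted positions adaptively among the red tokens, deleting red tokens actually present in the text; this is the distributionally correlated case, where the filter removes observed red-biased tokens. With adaptive deletion of $k$ red tokens from the realized text, the shift in $Z'$ relative to $Z$ is about
\[
\frac{\gamma k}{\sigma_n}+\frac{(1-\gamma)\cdot\text{(nothing)}}{\sigma_n},
\]
more precisely $Z'\approx Z+\frac{k\gamma}{\sigma_n}\cdot\frac{1}{1-\gamma}\cdot(1-\gamma)$ after renormalization. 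Conditioning on $Z<z$ leaves $Z$ asymptotically a truncated normal. Choosing $\kappa_{\mathrm{flip}}$ with $\Phi(z-\kappa_{\mathrm{flip}})\cdot(\text{normalizer})\le\varepsilon$ makes $\mathbb P(Z+\kappa_{\mathrm{flip}}\ge z\mid Z<z)\ge1-\varepsilon$. The condition $\varepsilon<1/2$ guarantees that the required quantile exists with $c_{\mathrm{flip}}>c$.

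The main obstacle is part (ii): making precise that the deletions depend on the realized colors. We will handle this by noting that, given $Z<z$, there are still $\Theta(n)$ red tokens available, so $k=\lceil c_{\mathrm{flip}}\sqrt n\rceil$ reds can always be removed. The post-deletion statistic is then an explicit deterministic function, $Z'=(N_g-\gamma(n-k))/\sigma_{n-k}$. Its conditional law follows from Berry--Esseen applied to $N_g$ alone, with the remainder $r_n$ absorbed for large $n$.
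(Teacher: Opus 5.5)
Your two parts follow the paper's proof closely. In both, all-red deletions are the worst case. In (i), you divide the marginal binomial tail of the retained tokens by a lower bound on $\mathbb P(Z<z)$; your $\Phi(z+\kappa)\ge\Phi(z)$ is cleaner than the paper's $\tfrac12\Phi(2z)$. In (ii), $Z'$ is a deterministic shift of $Z$ by $\gamma s/\sigma_n$, followed by a truncated-normal bound. Your observation that (ii) cannot hold in the model you use for (i) is correct, and the paper's proof has the same split. Its part (a) takes $N_g'\sim\mathrm{Bin}(n-s,\gamma)$ with the removed tokens painted red. Its part (b) uses $\mathbb P(Z<z-\tfrac12c'\mid Z<z)=\Phi(z-\tfrac12c')/\Phi(z)$, i.e.\ it treats the pre-filter $Z$ as standard normal. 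That is true only when red tokens are removed from a natural text.

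The gap is that you stop halfway: the theorem is about one attacker, and you prove (i) for one model and (ii) for another. In your model for (i), the bound $(1-\Phi(z)+r_n)/(\Phi(z)-r_n)$ does not depend on $s$ at all. So there the $\sqrt n$ threshold is vacuous and (ii) is false. In the adaptive model of (ii), your argument for (i) no longer applies:

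\begin{itemize}
\item Suppose the filter removes $m\le s$ realized red tokens from a natural text. Then $Z'=(\sigma_n Z+\gamma m)/\sigma_{n-m}$. This replaces your garbled shift formula.
\item This quantity is maximized at $m=s$ whenever it can reach $z>0$, and swapping a green deletion for a red one only raises it.
\item Hence a flip occurs iff $z\sqrt{1-s/n}-\gamma s/\sigma_n\le Z<z$, and $\Perror\approx(\Phi(z)-\Phi(z-\kappa))/\Phi(z)$.
\end{itemize}

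This is far larger than $(1-\Phi(z))/\Phi(z)$. For $z=4$ and $\kappa=z/4$ it is about $1.3\times10^{-3}$ versus $3\times10^{-5}$, so your $z_\varepsilon$ is too small.

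The fix is to redo (i) in the adaptive model. There the restriction $\kappa\le z/4$, i.e.\ $c=\tfrac z4\sqrt{(1-\gamma)/\gamma}$, is essential rather than an optional sharpening. Flips then require $Z\ge\tfrac34 z-O(n^{-1/2})$, which gives $\Perror\le 2\bigl(1-\Phi(\tfrac34 z)\bigr)+o(1)$. Define $z_\varepsilon$ by $2\bigl(1-\Phi(\tfrac34 z)\bigr)\le\varepsilon/2$. Your (ii) then goes through in the same model. Finally, $c_{\mathrm{flip}}>c$ follows because $\varepsilon<\tfrac12$ gives $\Phi^{-1}(\varepsilon\Phi(z))<0$, hence $\kappa_{\mathrm{flip}}>z>\kappa$.
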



Theorem~\ref{thm:square-law} describes a worst-case scenario that principled signature designs should
avoid. This scenario occurs when a signature is too closely tuned to the deployment data and becomes
strongly correlated with natural text. Existing entropy- and semantics-aware methods can reduce this
correlation by down-weighting predictable or stale patterns~\cite{lee2024wrotecodewatermarkingcode,
lu2024entropybasedtextwatermarkingdetection,he2024watermarkssurvivetranslationcrosslingual}.
Because low-entropy phrases tend to appear in both
natural and watermarked text~\cite{kirchenbauer2024watermarklargelanguagemodels}, penalizing such phrases
during signature learning or filtering steers the system toward the linear-safe regime of
Theorem\,\ref{thm:finite-sig-bound}. In effect, we give up a small amount of best-case TPR to gain a lower
worst-case FPR. This trade-off is observed in our empirical comparison of \textsc{Kgw} and \textsc{Sweet}
(which ignores low-entropy tokens in the z-test) in Secs.\,\ref{sec:exp-accuracy}--\ref{sec:scalibility}.

\subsection{Signature Filtering for \textsc{Unigram} Watermarks}
\label{sec:unigram-colorblind}

In this subsection, we replace \textsc{Kgw}'s stepwise coloring scheme with the \emph{unigram coloring assumption} by \textsc{Unigram} \cite{zhao2023provablerobustwatermarkingaigenerated}, where each token type $\tau\in\mathcal V$ receives a single random color $C_\tau\sim\mathrm{Ber}(\gamma)$ that is reused by all of its occurrences in the text.
Formally, let $\ell$ be the number of distinct token types in a natural text, and $\{m_\tau\}_{\tau\in\mathcal V}$ be the pre-filter type multiplicities, namely, $m_\tau$ is the number of tokens of type $\tau\in\mathcal V$ in the text.
Write $n=\sum_{\tau \in \mathcal V} m_\tau$, $Q=\sum_{\tau \in \mathcal V} m_\tau^2$, $r={Q}/{n}$, and define their post-filter analogues $n'$, $Q'$, $r'$. Note that under unigram coloring, a filter either keeps or removes each type in full (so $m'_\tau\in\{0,m_\tau\}$).

\begin{theorem} 
\label{thm:unigram-budget}
Assume that no single token type carries a non-vanishing
fraction of the text, e.g., $\max_\tau m_\tau/n \to 0$ as $\ell\to\infty$.
%
Then there exists a constant $\beta = \beta(\varepsilon;z,\gamma,r,r')\in(0,1)$ such that for all $\ell$ sufficiently large, $Q' \ge \beta Q$ implies $\mathbb P(Z'\ge z \mid Z<z) \le \varepsilon$.
\end{theorem}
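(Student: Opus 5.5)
Under unigram coloring the green count is a weighted sum of i.i.d.\ type colors, $N_g=\sum_\tau m_\tau C_\tau$, so the natural standardized statistic is $Z=(N_g-\gamma n)/\sqrt{\gamma(1-\gamma)Q}$. After filtering, $N_g'=\sum_{\tau\in K} m_\tau C_\tau$ over the kept types $K$, with variance $\gamma(1-\gamma)Q'$. The pair $(Z,Z')$ is a standardized sum of independent (non-identical) bounded vectors. We decompose $N_g-\gamma n=A+B$, where $A$ is the centered contribution of kept types and $B$ that of removed types. Then $A$ and $B$ are independent, $Z'=A/\sqrt{\gamma(1-\gamma)Q'}$, and $\mathrm{Corr}(Z,Z')=\sqrt{Q'/Q}\ge\sqrt\beta$. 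The plan is to reduce the conditional error to a bivariate-normal computation in the correlation $\rho=\sqrt{Q'/Q}$, and then choose $\beta$ so that $\rho$ is close enough to $1$.

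\textbf{Step 1 (Gaussian approximation).} We apply the Berry--Esseen / Lindeberg CLT for independent non-identical summands to the vector $(A,B)$, or equivalently to each of $A$ and $B$ separately, since they are independent. The third-moment ratio for $N_g$ is bounded by $\sum m_\tau^3/Q^{3/2}\le \max_\tau m_\tau/\sqrt Q$. The hypothesis $\max_\tau m_\tau/n\to 0$, together with $Q\ge n\max_\tau m_\tau$ failing only in degenerate cases (more carefully, $\max m_\tau^2/Q\le \max m_\tau/ n\cdot n/\max m_\tau\cdot\ldots$), should give $\max_\tau m_\tau/\sqrt Q\to 0$. I would make this precise via $\max m_\tau^2\le (\max m_\tau/n)\, n \max m_\tau$ and $Q\ge \max m_\tau^2$, and if needed use the Lindeberg condition $\max_\tau m_\tau^2/Q\to0$ directly. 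Here the ratios $r,r'$ enter the constants of the bound: $Q=rn$, and since $Q'\ge \beta Q$, the kept part also has a vanishing Lyapunov ratio. The conclusion is that $(Z,Z')$ converges to a bivariate normal with unit variances and correlation $\rho\ge\sqrt\beta$, uniformly over filters satisfying the constraint, with an explicit error $o(1)$ as $\ell\to\infty$.

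\textbf{Step 2 (the Gaussian bound).} For a standard bivariate normal with correlation $\rho$, we write $Z=\rho Z'+\sqrt{1-\rho^2}W$ with $W$ independent of $Z'$. Then
\begin{equation*}
\mathbb P(Z'\ge z,\,Z<z)\le \mathbb P\bigl(Z'\ge z,\ W< -(\rho z - z)/\sqrt{1-\rho^2}\bigr)\le (1-\Phi(z))\,\Phi\bigl(z\sqrt{(1-\rho)/(1+\rho)}\bigr).
\end{equation*}
A simpler route suffices: $\mathbb P(Z'\ge z, Z<z)\le \mathbb P(|Z-Z'|>\delta)+\mathbb P(z\le Z'<z+\delta)$, where $\mathrm{Var}(Z-Z')=2(1-\rho)$. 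We divide by $\mathbb P(Z<z)\approx\Phi(z)\ge 1/2$. Choosing $\delta$ so that the density term $\delta\varphi(z)$ is at most $\varepsilon\Phi(z)/4$, and then $\rho$ close to $1$ via Chebyshev so that $2(1-\rho)/\delta^2\le \varepsilon\Phi(z)/4$, defines $\beta=\rho^2$. This $\beta$ depends only on $(\varepsilon,z)$, with $\gamma,r,r'$ entering through the Berry--Esseen error allocation.

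\textbf{Step 3 (assembly).} We absorb the $o(1)$ CLT errors, each at most $\varepsilon\Phi(z)/8$ for $\ell$ large, into the remaining budget, and conclude that $\mathbb P(Z'\ge z\mid Z<z)\le\varepsilon$. The main obstacle is Step 1. The theorem's hypothesis is phrased with $\max m_\tau/n$, whereas what the CLT needs is $\max m_\tau^2/Q\to0$, applied for both the full sum and the kept sum. I expect to use $r,r'$ precisely to control this transfer, using $Q'\ge\beta Q$ to keep the kept sum nondegenerate. If the stated hypothesis is genuinely weaker than Lindeberg, I would add the assumption that $r$ stays bounded, which is implicit in the dependence of $\beta$ on $r$.
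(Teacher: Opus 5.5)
\textbf{The gap: you analyze a different statistic from the one in the theorem.} In the theorem, $Z$ and $Z'$ are the detector's ordinary \textsc{Kgw} z-scores, normalized by $\sqrt{\gamma(1-\gamma)n}$ and $\sqrt{\gamma(1-\gamma)n'}$. Under unigram coloring their null variances are $r=Q/n$ and $r'=Q'/n'$, not $1$. Your $\sqrt{Q}$-normalized variables are the paper's $\widetilde Z=Z/\sqrt r$ and $\widetilde Z'=Z'/\sqrt{r'}$. The events you must control are therefore $\{\widetilde Z<z/\sqrt r\}$ and $\{\widetilde Z'\ge z/\sqrt{r'}\}$, whose thresholds differ whenever $r\ne r'$. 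That is how $r,r'$ enter $\beta$ (through the shifted thresholds in the paper's $f(\rho,r,r';z)$), not through the Berry--Esseen bookkeeping. Your conclusion that $\beta$ depends only on $(\varepsilon,z)$ is the symptom.

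With the correct thresholds, your Step~2 still works when $r'\le r$. On $\{|\widetilde Z-\widetilde Z'|\le\delta\}$ the flip event forces $\widetilde Z'\in[z/\sqrt{r'},\,z/\sqrt r+\delta)$, an interval of length at most $\delta$. When $r'>r$, however, this interval has length $z/\sqrt r-z/\sqrt{r'}+\delta$. Its Gaussian mass stays bounded away from $0$ as $\rho\to1$, so pushing $\rho$ toward $1$ cannot reach an arbitrary $\varepsilon$. That case needs a separate idea. For example, every removed type has $m_\tau\ge1$, so $Q-Q'\ge n-n'$, i.e.\ $(r-1)n\ge(r'-1)n'$. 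This gives $Q'/Q\le r'(r-1)/\bigl(r(r'-1)\bigr)<1$ when $r'>r$, so choosing $\beta$ above this value makes the hypothesis vacuous for that pair $(r,r')$. The paper's proof hides the same case distinction: its $\rho^\star\in(0,1)$ solving $f(\rho^\star,r,r';z)=\varepsilon/2$ exists only if $f(1,r,r';z)<\varepsilon/2$, and $f(1,r,r';z)=0$ exactly when $r'\le r$.

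\textbf{Step~1.} Your suspicion about the hypothesis is correct, but the proposed repair is not. First, $Q\le n\max_\tau m_\tau$ always, not $\ge$. Second, $\max_\tau m_\tau/n\to0$ together with bounded $r$ still does not give $\max_\tau m_\tau/\sqrt Q\to0$. Take one type of multiplicity $\sqrt n$ among $n-\sqrt n$ singletons: then $r\to2$ and $\max_\tau m_\tau/n\to0$, yet $\max_\tau m_\tau/\sqrt Q\to1/\sqrt2$. You have to assume $\max_\tau m_\tau/\sqrt Q\to0$ directly, which is what the paper's proof actually uses through $\beta_\ell$. The kept sum then inherits it from $Q'\ge\beta Q$, as you say.

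\textbf{Two smaller remarks.} Drop the first displayed bound in Step~2: it tends to $(1-\Phi(z))/2$, not $0$, as $\rho\to1$. On the other hand, your Chebyshev route is genuinely better than the paper's bivariate Berry--Esseen step. Because $\mathrm{Var}(\widetilde Z-\widetilde Z')=2(1-\rho)$ holds exactly, you only need univariate normal approximations: one for $\mathbb P(Z<z)$ and one for the mass of $\widetilde Z'$ near its threshold. The paper's bivariate remainder instead carries a factor $(1-\rho_{\mathrm{uni}})^{-3/2}$ that blows up as $\rho_{\mathrm{uni}}\to1$. It is therefore not uniform over $\rho_{\mathrm{uni}}\in[\rho_\star,1]$, although the paper's final step treats it as if it were.
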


Theorem\,\ref{thm:unigram-budget} delineates the \emph{type-level} safe deletion budget for a signature. Intuitively, it shows that the right ``sample size'' for unigram coloring is the \emph{type-mass} $Q=\sum_\tau m_\tau^2$: the correlation between the pre-filter and post-filter texts is $\rho=\sqrt{Q'/Q}$, and it is sufficient to keep a constant fraction of $Q$.
By contrast, Theorem\,\ref{thm:finite-sig-bound} provides a safe budget for removable tokens: keeping a constant fraction of tokens $n'/n\ge c$ suffices to control the false-positive risk with a pre/post correlation $\rho=\sqrt{n'/n}$ under \textsc{Kgw} coloring.
In intuition, $Q/n^2=\sum_\tau (m_\tau/n)^2$ measures how concentrated repetition is, and a safe budget must preserve the mass carried by frequent types. Technically, both theorems arise from the same bivariate normal approximation with a vanishing remainder. They differ only in the correlation parameter, namely $\sqrt{n'/n}$ at the token level versus $\sqrt{Q'/Q}$ at the type level, and thus the natural linear budgets are in $n$ versus in $Q$.

\subsection{Signature Filtering for \textsc{Exp} Watermarks}
\label{sec:model-E}

\new{
The preceding subsections focus on \textsc{Kgw} and \textsc{Unigram}, where the post-filter z-score can change substantially because the filter removes a non-random subset of evidence. For \textsc{Exp}, watermarking is typically more robust. Recall that, under $\mathcal{H}_0$, \textsc{Exp} assigns each realized token a keyed pseudorandom scalar $R_i=u_i[t_i]\sim \mathcal U[0,1]$ and a score $c_i=-\ln(1-R_i)\sim \mathrm{Exp}(1)$, and it computes a p-value against a threshold $\alpha$ of significance level.
Define a ``score secrecy'' condition that (i) the signature depends only on the observed tokens and does not access these hidden random values, and (ii) the per-position vectors $u_i$ and realized scalars $R_i$ are not manipulatable or predictable by the adversary. When this condition holds, we can derive a universal worst-case risk bound that does not require a safe deletion budget in the sense of Sec.\:\ref{sec:model-A}.}

\begin{theorem}
\label{thm:exp-universal}
\new{Under the score secrecy assumption, $\Perror \le \alpha/(1-\alpha)$ holds for any signature filter.}
\end{theorem}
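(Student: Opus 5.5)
The plan is to bound the conditional error by the unconditional post-filter tail and the baseline acceptance probability:
\[
\Perror=\frac{\mathbb P(Z'\ge z,\;Z<z\mid\mathcal H_0)}{\mathbb P(Z<z\mid\mathcal H_0)}\le\frac{\mathbb P(Z'\ge z\mid\mathcal H_0)}{\mathbb P(Z<z\mid\mathcal H_0)}.
\]
So it suffices to show two facts: the numerator is at most $\alpha$, and the denominator is at least $1-\alpha$. Dividing then gives $\Perror\le\alpha/(1-\alpha)$, with no dependence on the filter, the text length, or the number of deleted tokens. This is why no safe deletion budget is needed.

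The denominator is immediate. Under $\mathcal H_0$ the baseline \textsc{Exp} test is exact, as recalled in Sec.\,\ref{sec:prelim}: $p_{\textrm E}(T)\sim\mathcal U[0,1]$, so $\mathbb P(Z\ge z)=\mathbb P(p_{\textrm E}(T)<\alpha)=\alpha$ and $\mathbb P(Z<z)=1-\alpha$.

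For the numerator I would use the null-calibration argument of Sec.\,\ref{sec:prelim} in conditional form.
\begin{enumerate}
\item Condition on the observed token sequence $T$; this fixes the residual index set $I=I(T)$, whose size is $n'$.
\item By score secrecy part (i), the signature, and hence $I$, is a deterministic function of the observed tokens alone and does not access the hidden values $u_i$.
\item By part (ii), the realized scalars $R_i$, $i\in I$, are not manipulable or predictable by the adversary. The working formalization is that, conditional on $T$, the $R_i$ are i.i.d.\ $\mathcal U[0,1]$ and independent of the choice of $I$.
\item Consequently $X_{T'}\mid T\sim\Gamma(n',1)$, so $p_{\textrm E}(T')\mid T\sim\mathcal U[0,1]$ and $\mathbb P(Z'\ge z\mid T)=\alpha$ whenever $n'\ge1$.
\item If $n'<n_0$ the detector never fires, so this conditional probability is $0\le\alpha$.
\item Averaging over $T$ gives $\mathbb P(Z'\ge z\mid\mathcal H_0)\le\alpha$.
\end{enumerate}

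The main obstacle is step 3: justifying that conditioning on the observed tokens, and hence on the filter's choice, leaves the retained scores i.i.d.\ uniform. In reality $u_i$ is keyed on local context, so $R_i$ is a function of the tokens. The score secrecy assumption has to be read as the pseudorandom-oracle idealization: under $\mathcal H_0$, the key is independent of the text, so $\{R_i\}$ is i.i.d.\ uniform given the token sequence.

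Repeated contexts could also produce identical $u_i$. If needed, I would restrict to deduplicated positions or assume distinct contexts, as the \textsc{Exp} null model in Sec.\,\ref{sec:prelim} implicitly does.

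Finally, I would note that the bound is crude: it discards the negative correlation between $\{Z<z\}$ and $\{Z'\ge z\}$. That is acceptable because the theorem claims only a universal bound.
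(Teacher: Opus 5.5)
Your proposal is correct and matches the paper's proof: bound the joint probability by $\mathbb P(Z'\ge z)=\alpha$, which follows from the null calibration of the filtered \textsc{Exp} statistic under score secrecy, and divide by the exact baseline acceptance probability $\mathbb P(Z<z)=1-\alpha$. Your conditioning on $T$ and your treatment of the case $n'<n_0$ only spell out what the paper's proof asserts directly.
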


\new{
Intuitively, for an \textsc{Exp} detector under score secrecy, each retained token in the natural text contributes a fresh random value drawn from the same null distribution, and the test explicitly recalibrates to the retained length $n'$. Deleting tokens based on their \emph{content} is therefore like discarding some draws without seeing their random values: it does not allow the filter to systematically increase the length-normalized accumulated score. 
Below, we briefly discuss the \textsc{Exp} analogues of the threat models we have considered for \textsc{Kgw}:}

\begin{itemize}
    \item \new{\emph{Signature-aware but score-blind attacker.}
    Theorem~\ref{thm:exp-universal} holds when the attacker can fully adapt the text to the deployed signature, but
    cannot infer or influence the hidden \textsc{Exp} pseudorandom scores.}

    \item \new{\emph{Distributionally correlated signatures.}
    Even if the signature is trained on a corpus that strongly overlaps with the deployment distribution, the deletion decisions still depend only on token patterns. Under score secrecy, this does not bias the \textsc{Exp} scores for a natural text, so the same theorem holds.}

    \item \new{\emph{Score-adaptive attacker.} If the attacker can predict the per-token scores \(c_i\) and craft a text such that the signature deletes only tokens with near-zero scores, the same scaling in Theorem \ref{thm:sqrt-deterministic} applies: \(\Theta(\sqrt{n})\) deletions are sufficient and necessary to deterministically flip a baseline decision with \(Z\le z-\eta\).} 
\end{itemize}

\new{In practice, $\alpha$ is typically set to $10^{-4}$ \cite{pan2024markllmopensourcetoolkitllm}, making $\Perror$ effectively negligible by Theorem~\ref{thm:exp-universal}. This robustness result is consistent with our empirical findings: in all evaluated settings, applying signature filtering to \textsc{Exp} introduced no observable increase in false positive rate ($\le 0.1\%$) for the baseline detector.}

\smallskip\textit{\neww{Clarification on scale.}} \neww{We note that the false-positive rate analyzed in this section is a per-text quantity. The asymptotic parameter is the token length $n$ of the examined text, not the number of texts in a corpus. Hence, for a fixed signature filter and decision threshold, applying the detector to more documents does not by itself alter the per-text Type-I error guarantees. Corpus-level deployment, including the trade-off between TPR and FPR induced by deploying multiple signatures, will be discussed in Sec.\:\ref{sec:scalibility}.}

\OMIT{
Theorem\,\ref{thm:square-law} shows that distributional overlap can invalidate the linear safe budget asserted by Theorem\,\ref{thm:finite-sig-bound}.
The literature has offered many remedies to restore data independence, e.g., using entropy- or semantics-aware detectors
\cite{lee2024wrotecodewatermarkingcode,lu2024entropybasedtextwatermarkingdetection,liu2024semanticinvariantrobustwatermark,he2024watermarkssurvivetranslationcrosslingual}, which systematically attenuate correlations by down-weighting predictable tokens, preventing stale, high-overlap $n$-grams from dominating the test statistic.
These strategies help reduce the dependency between signature and human texts in practice, restoring the linear safety margin provided by Theorem\,\ref{thm:finite-sig-bound}.
Specifically, since low-entropy $n$-grams tend to appear in both watermarked and natural texts \cite{kirchenbauer2024watermarklargelanguagemodels,zhao2023provablerobustwatermarkingaigenerated}, one can penalize low-entropy $n$-grams during signature selection to reduce potential data correlation, which effectively trades the signature's best-case TPR for a lower worst-case FPR. This trade-off is observed in our empirical comparison on the performance of \textsc{Kgw} vs \textsc{Sweet}, see Sec.\,\ref{sec:experiments}.
}

\section{Evaluation}
\label{sec:experiments}

We evaluate our method on high-entropy natural language and low-entropy code from
standard benchmarks for watermark evaluation. For natural-language documents, we use \textsc{C4}~\cite{raffel2020exploring}. \new{For low-entropy code snippets, we use \textsc{Mbpp}~\cite{austin2021program}, \textsc{HumanEval}~\cite{chen2021evaluating}, and \textsc{Code-Search-Net}~\cite{husain2019codesearchnet}.}
Our experiments follow the standard setup~\cite{kirchenbauer2024watermarklargelanguagemodels,pan2024markllmopensourcetoolkitllm}: we tokenize each text, use the first 30 tokens as a prompt, and the next 200 tokens as the unwatermarked continuation. A watermark algorithm then generates watermarked 200-token continuations for every prompt, yielding parallel collections of watermarked and natural texts.
%

We compare four watermark schemes \textsc{Kgw}, \textsc{Sweet}, \textsc{Unigram}, \textsc{Exp}
\cite{kirchenbauer2024watermarklargelanguagemodels,lee2024wrotecodewatermarkingcode,zhao2023provablerobustwatermarkingaigenerated,aaronson2022watermarking}
on six LLMs: Opt-1.3b, Opt-6.7b, Llama2-13b, Llama3.1-8b, Qwen2.5-14b, Phi-3-Medium-14b. Watermarks are embedded with the MarkLLM toolkit~\cite{pan2024markllmopensourcetoolkitllm}, signatures are optimized with the Gurobi solver~\cite{gurobi}, and all experiments run on a computer with an Intel i9-14900 CPU, an RTX-5080 GPU, and 64 GB of RAM.

\subsection{Detection Capability of Signature Filters}
\label{sec:exp-accuracy}

\begin{figure*}[ht]
    \centering
    \includegraphics[width=\textwidth]{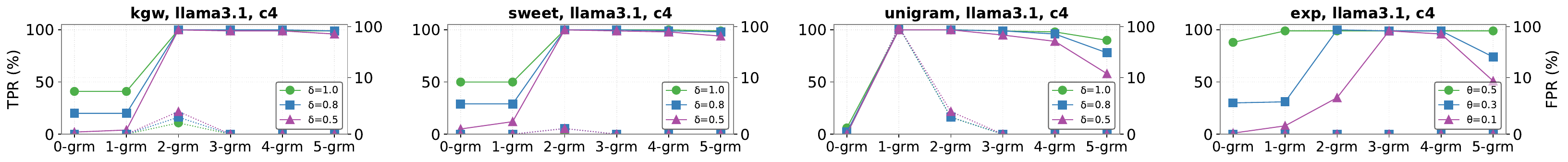}
    \includegraphics[width=\textwidth]{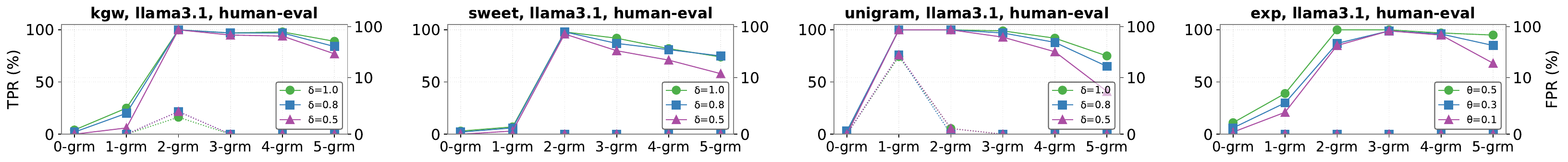}
    \includegraphics[width=\textwidth]{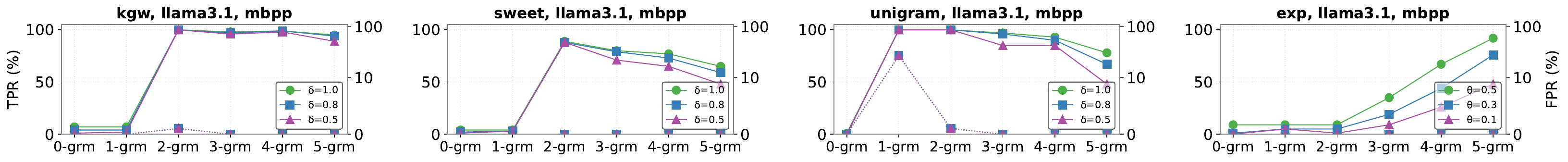}
    \caption{    
    \new{The TPR/FPR of $n$-gram signature filtering for Llama3.1-8b on C4 (top), \textsc{Mbpp} (middle), and \textsc{HumanEval} (bottom). Columns correspond to Kgw, Sweet, Unigram, and Exp. TPR (solid curves) is shown on the left y-axis in linear scale, while FPR (dashed curves) is shown on the right y-axis in log scale. The x-axis reports the signature order, where 0-gram corresponds to the baseline detector without filtering. The MILP solver timeout is 60 seconds per signature.} 
    }
    \label{fig:data1000}
\end{figure*}

\begin{figure*}[ht]
    \centering
    \includegraphics[width=\textwidth]{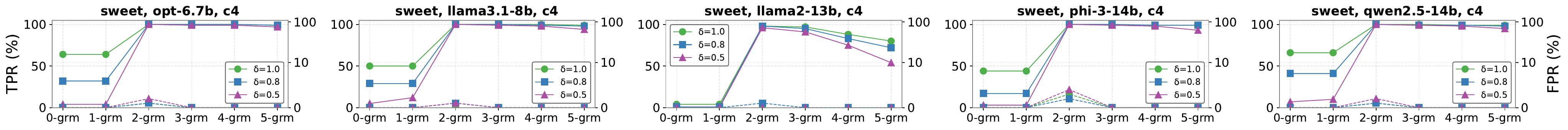}
    \includegraphics[width=\textwidth]{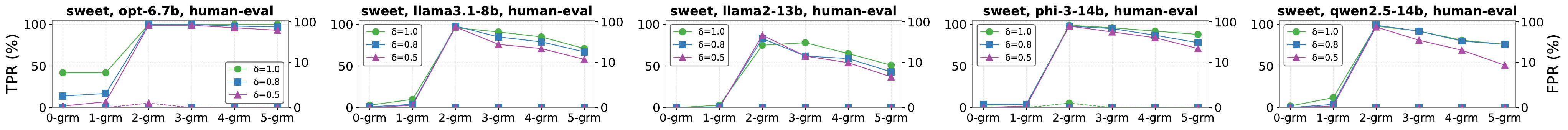}
    \includegraphics[width=\textwidth]{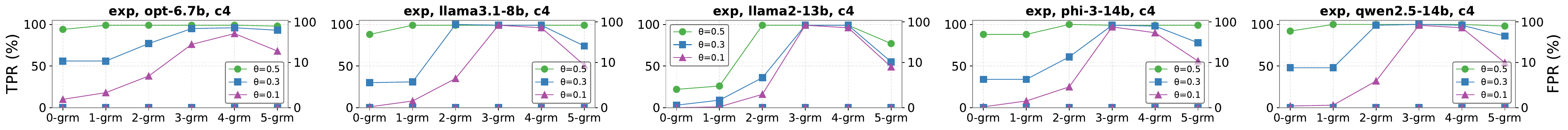}
    \includegraphics[width=\textwidth]{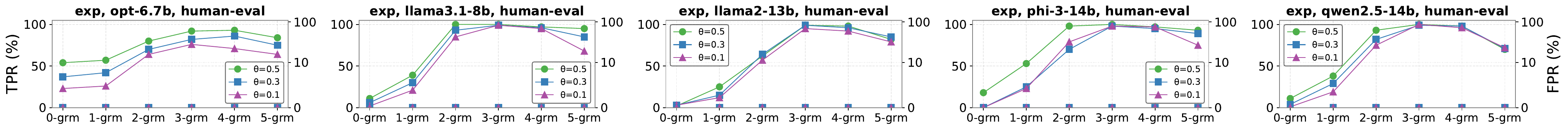}
    \caption{\new{This figure replicates experiments in Fig.\,\ref{fig:data1000} across LLMs. From top to bottom: \textsc{Sweet}+C4, \textsc{Sweet}+\textsc{HumanEval}, \textsc{Exp}+C4, and \textsc{Exp}+\textsc{HumanEval}. The results show that the qualitative trend of filtering is insensitive to the text generator.}}
    \label{fig:data1000_2}
\end{figure*}

\new{Figs.\,\ref{fig:data1000}--\ref{fig:data1000_2} summarize complementary views of signature filtering in weak-signal settings. Fig.\,\ref{fig:data1000} focuses on Llama3.1 on high-entropy (C4) and low-entropy (\textsc{Mbpp} and \textsc{HumanEval}) texts and shows TPR/FPR by $n$-gram order. Fig.\,\ref{fig:data1000_2} replicates the TPR/FPR curves for \textsc{Kgw} and \textsc{Exp} over C4 and HumanEval across five language models.}

As expected, all watermark families attain near-optimal FPR at 0-gram, but their TPR drops quickly as the watermark strength weakens. Introducing an $n$-gram signature consistently lifts TPR, with two caveats. First, 1-gram signatures are too coarse. Since type-level deletion by the signature ignores position-dependent coloring, the solver seldom finds effective 1-grams to tell apart watermarked and natural texts. This failure mode is stark for \textsc{Unigram}, where token color is tied to token type across documents. This leads the 1-gram filter to remove the same red types in both classes, pushing both TPR and FPR to 100\%. Second, different watermarks have different sweet spots. For \textsc{Kgw}-style schemes (\textsc{Kgw}, \textsc{Sweet}, and \textsc{Unigram}), 2-grams strike the best balance across strengths; for \textsc{Exp}, useful signal sits in slightly longer local patterns, making 3-gram or 4-gram signatures more effective.

\new{Fig.\,\ref{fig:data1000_2} shows that these sweet spots are remarkably stable across text generators:
2--3-gram signatures consistently deliver the largest TPR gains for \textsc{Kgw} at low FPR, while 3--4-grams do the same for \textsc{Exp}. The spread in TPR/FPR across models at a fixed $n$-gram order is modest compared with the jump from 0-gram to the best $n$-gram filter, indicating that signature design is largely governed by the watermark family and the underlying corpus statistics rather than idiosyncrasies of a particular LLM.
Generally, high-order patterns become more scarce as the watermark strength declines. Hence, long $n$-grams either occur too infrequently to contribute useful deletions or over-delete idiosyncratic fragments, reducing the effective sample and dampening detection. This explains why, even though filtering still beats the baseline as the signal weakens, high $n$-gram orders no longer improve on the sweet spot.} 


\subsection{Detection Capability under Text Edits}
\label{sec:exp-robustness}

\begin{figure}[h]
    \centering
    \includegraphics[width=\columnwidth]{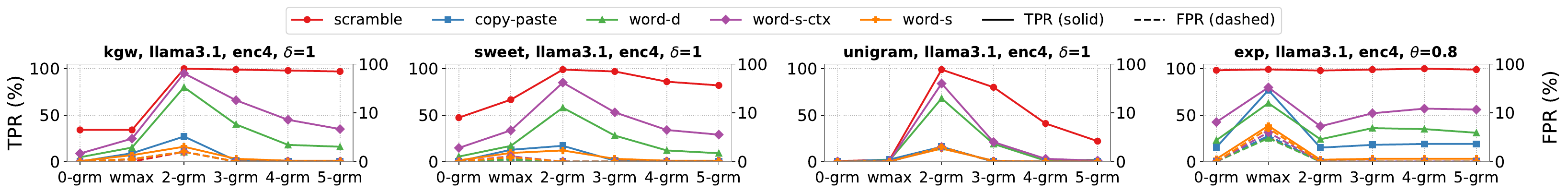}
    \includegraphics[width=\columnwidth]{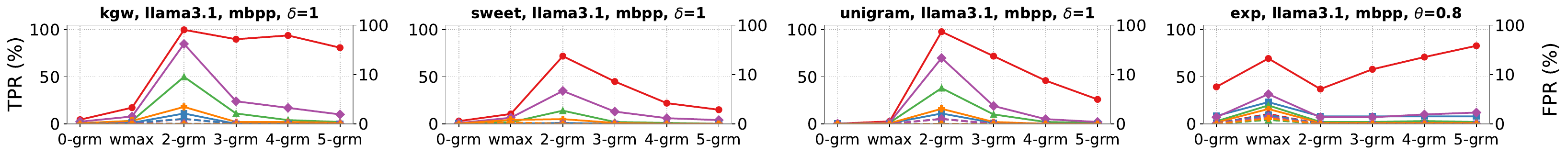}
    \caption{
        \new{The TPR/FPR of $n$-gram signature filtering for Llama3.1-8b on C4 (top) and \textsc{Mbpp} (bottom) in the low-strength ($\delta=1$ and $\theta=0.8$) and edited-text regimes. 0-gram means no filter, and \texttt{wmax} corresponds to the \textsc{WinMax}-enhanced baseline.}
    }
    \label{fig:robustness}
\end{figure}

We stress-test the signatures from Sec.\,\ref{sec:exp-accuracy} against five types of text edits (see \citealt{kirchenbauer2024reliabilitywatermarkslargelanguage,pan2024markllmopensourcetoolkitllm}): sentence-level shuffling (\emph{scramble}) and dilution (\emph{copy-paste} 25\% watermarked text into a natural host), plus three word-level perturbations---30\% deletion (\emph{word-d}), 50\% synonym replacement via WordNet (\emph{word-s}), and 50\% context-aware substitution via BERT (\emph{word-s-ctx}).
We compare our approach with \textsc{WinMax} \cite{kirchenbauer2024reliabilitywatermarkslargelanguage}, an advanced detection-time module that enhances watermark signals to resist text edits. Instead of testing the whole document once, \textsc{WinMax} slides overlapping windows of various lengths across the text, recomputing the watermark score on each window. The maximum per-window score is used for the final decision. 

Fig.\,\ref{fig:robustness} outlines comparison results over both high-entropy (C4) and low-entropy (\textsc{Mbpp}) corpora. Overall, \textsc{Kgw}-style watermarks exhibit short-range dependence: the 2-gram order matches the correlation length and remains effective after edits, obtaining consistent gains over \textsc{WinMax}. Increasing the signature order beyond 2-grams generally reduces detection power: word-level edits quickly destroy longer contexts, making higher-order patterns more difficult to match and thereby less effective. \neww{The contrast with \textsc{Exp} is consistent with Sec.\,\ref{sec:exp-accuracy}, where \textsc{Exp} benefits most from higher-order signatures even on the original texts. Word-level edits break the longer contexts these signatures rely on, making filtering ineffective over all $n$-gram orders.}

\neww{To quantify when the adaptive-window scoring of \textsc{WinMax} has an advantage, we compute a post-hoc localization metric $\Delta_{\mathrm{win}} \coloneqq \max_W Z(W) - Z(T)$ on the edited text $T$, where $\max_W Z(W)$ is the best window z-score identified by \textsc{WinMax}, and $Z(T)$ is the z-score of the full document. Larger $\Delta_{\mathrm{win}}$ therefore indicates that the surviving watermark evidence is more spatially concentrated, so window-based detection has more room to improve over a single global filter.
This localization effect becomes more pronounced after edits: in the same stress test, the mean $\Delta_{\mathrm{win}}$ of \textsc{Exp} exceeds that of \textsc{Kgw}, \textsc{Sweet}, and \textsc{Unigram} by about 0.5--1.2 across edit types. This indicates that surviving evidence is more spatially concentrated for \textsc{Exp} than for \textsc{Kgw}-style watermarks. A fixed signature may therefore lose coverage of part of the surviving signal, whereas \textsc{WinMax} can re-localize it by scanning all possible windows on the edited text.}

Accordingly, in Fig.\,\ref{fig:robustness}, \textsc{WinMax} consistently attains higher TPR on \textsc{Exp}, though at the cost of a slightly higher FPR due to the maximizing effect. By contrast, signatures improve the global green ratio in \textsc{Kgw}-style watermarks by excising many small red clusters whose locations remain relatively stable even after editing. Across both watermark families and corpora, filtering enhances detection with little FPR inflation, consistent with our false-positive analyses in Sec.\,\ref{sec:FPR-analysis}.

\subsection{Scalability of Signature Filters}
\label{sec:scalibility}

\begin{table*}[ht]
\centering
\caption{\new{Watermark performance on the first $\mathrm{50k}$ texts in \textsc{Code-Search-Net} (left block) and C4 (right block) under a 60s soft time limit per signature. This table fixes $B\!=\!\mathrm{1k}$ and varies $G$, shifting from the most aggressive bagging ($G\!=\!\mathrm{1k}$) to no bagging ($G\!=\!\mathrm{50k}$). The columns show the TPR, FPR, and F1 Score of the watermark schemes in percent.
The last row provides the average computation time for each signature. The average filtering time is within milliseconds per signature.}}
\label{tab:scalability}
\resizebox{\textwidth}{!}{%
\begin{tabular}{|l|ccc|ccc|ccc|ccc|ccc|ccc|ccc|ccc|} 
\toprule

& \multicolumn{3}{c}{\textbf{Kgw}} & \multicolumn{3}{c}{\textbf{Sweet}} & \multicolumn{3}{c}{\textbf{Unigram}} & \multicolumn{3}{c|}{\textbf{Exp}} 
& \multicolumn{3}{c}{\textbf{Kgw}} & \multicolumn{3}{c}{\textbf{Sweet}} & \multicolumn{3}{c}{\textbf{Unigram}} & \multicolumn{3}{c|}{\textbf{Exp}} \\
\cmidrule{2-4} \cmidrule{5-7} \cmidrule{8-10} \cmidrule{11-13}
\cmidrule{14-16} \cmidrule{17-19} \cmidrule{20-22} \cmidrule{23-25}

& \scriptsize TPR & \scriptsize FPR & \scriptsize F1 
& \scriptsize TPR & \scriptsize FPR & \scriptsize F1 
& \scriptsize TPR & \scriptsize FPR & \scriptsize F1 
& \scriptsize TPR & \scriptsize FPR & \scriptsize F1 
& \scriptsize TPR & \scriptsize FPR & \scriptsize F1 
& \scriptsize TPR & \scriptsize FPR & \scriptsize F1 
& \scriptsize TPR & \scriptsize FPR & \scriptsize F1 
& \scriptsize TPR & \scriptsize FPR & \scriptsize F1 \\
\midrule
$G\!=\!\mathrm{1k}$   & 99 & 4 & 98 & 99 & 1 & 99 & 99 & 14 & 93 & 78 & 0 & 87    & 99 & 17 & 92 & 99 & 3 & 98 & 99 & 3 & 98 & 95 & 0 & 97 \\
$G\!=\!\mathrm{10k}$  & 94 & 2 & 96 & 82 & 0 & 90 & 73 & 11 & 79 & 43 & 0 & 60    & 91 & 10 & 91 & 72 & 1 & 83 & 65 & 2 & 78 & 72 & 0 & 84 \\
$G\!=\!\mathrm{25k}$  & 92 & 1 & 95 & 76 & 0 & 86 & 67 & 10 & 76 & 37 & 0 & 54    & 87 & 8 & 89 & 63 & 1 & 77 & 58 & 1 & 73 & 67 & 0 & 80 \\
$G\!=\!\mathrm{50k}$  & 89 & 1 & 94 & 69 & 0 & 82 & 62 & 9 & 73 & 32 & 0 & 48    & 83 & 6 & 88 & 57 & 0 & 73 & 54 & 1 & 70 & 62 & 0 & 77 \\
No filter& 10 & 0 & 18 & 10 & 0 & 18 & 18 & 5 & 29 & 31 & 0 & 47    & 9 & 0 & 17 & 8 & 0 & 15 & 11 & 0 & 20 & 61 & 0 & 76 \\
\hline
Sol.~time & \multicolumn{3}{c|}{5 sec.} & \multicolumn{3}{c|}{4 sec.} & \multicolumn{3}{c|}{10 sec.} & \multicolumn{3}{c|}{67 sec.} & \multicolumn{3}{c|}{4 sec.} & \multicolumn{3}{c|}{4 sec.} & \multicolumn{3}{c|}{6 sec.} & \multicolumn{3}{c|}{61 sec.} \\
\bottomrule
\end{tabular}}
\end{table*}


\begin{table*}[ht]
\centering
\caption{\new{Ablation on $B$ with $G\!=\!1.2$k, evaluated on the first 1.2k texts in \textsc{Code-Search-Net} (left block) and C4 (right block).}}
\label{tab:ablation_B_g1200}
\resizebox{\textwidth}{!}{%
\begin{tabular}{|l|ccc|ccc|ccc|ccc|ccc|ccc|ccc|ccc|} 
\toprule

& \multicolumn{3}{c}{\textbf{Kgw}} & \multicolumn{3}{c}{\textbf{Sweet}} & \multicolumn{3}{c}{\textbf{Unigram}} & \multicolumn{3}{c|}{\textbf{Exp}} 
& \multicolumn{3}{c}{\textbf{Kgw}} & \multicolumn{3}{c}{\textbf{Sweet}} & \multicolumn{3}{c}{\textbf{Unigram}} & \multicolumn{3}{c|}{\textbf{Exp}} \\
\cmidrule{2-4} \cmidrule{5-7} \cmidrule{8-10} \cmidrule{11-13}
\cmidrule{14-16} \cmidrule{17-19} \cmidrule{20-22} \cmidrule{23-25}

& \scriptsize TPR & \scriptsize FPR & \scriptsize F1 
& \scriptsize TPR & \scriptsize FPR & \scriptsize F1 
& \scriptsize TPR & \scriptsize FPR & \scriptsize F1 
& \scriptsize TPR & \scriptsize FPR & \scriptsize F1 
& \scriptsize TPR & \scriptsize FPR & \scriptsize F1 
& \scriptsize TPR & \scriptsize FPR & \scriptsize F1 
& \scriptsize TPR & \scriptsize FPR & \scriptsize F1 
& \scriptsize TPR & \scriptsize FPR & \scriptsize F1 \\
\midrule
$B\!=\!1.2$k  & 100 & 0 & 100 & 100 & 0 & 100 & 100 & 9 & 96 & 80 & 0 & 89 & 100 & 8 & 96 & 99 & 4 & 98 & 100 & 1 & 100 & 94 & 0 & 97 \\
$B\!=\!0.6$k  & 90  & 0 & 95  & 89  & 0 & 94  & 78  & 8 & 84 & 55 & 0 & 71 & 88  & 4 & 92 & 82 & 2 & 89 & 73  & 0 & 84  & 79 & 0 & 88 \\
$B\!=\!0.4$k  & 84  & 0 & 91  & 81  & 0 & 90  & 66  & 8 & 76 & 46 & 0 & 63 & 80  & 2 & 88 & 73 & 1 & 84 & 61  & 0 & 76  & 74 & 0 & 85 \\
$B\!=\!0.3$k  & 78  & 0 & 88  & 76  & 0 & 86  & 60  & 7 & 72 & 43 & 0 & 60 & 75  & 2 & 85 & 68 & 0 & 81 & 54  & 0 & 70  & 72 & 0 & 84 \\
No filter & 11 & 0 & 20 & 9 & 0 & 16 & 18 & 6 & 29 & 32 & 0 & 48 & 8 & 0 & 15 & 8 & 0 & 15 & 11 & 0 & 20 & 64 & 0 & 78 \\
\hline
Sol.~time & \multicolumn{3}{c|}{3 sec.} & \multicolumn{3}{c|}{4 sec.} & \multicolumn{3}{c|}{8 sec.} & \multicolumn{3}{c|}{62 sec.} & \multicolumn{3}{c|}{4 sec.} & \multicolumn{3}{c|}{4 sec.} & \multicolumn{3}{c|}{5 sec.} & \multicolumn{3}{c|}{65 sec.} \\
\bottomrule
\end{tabular}}
\end{table*}

\OMIT{To evaluate the effectiveness of signatures over a large corpus, we consider two operational modes:
(i) \emph{Predictive filtering}. In the predictive mode, a single signature learned from the first 1k texts is reused to filter all subsequent texts.
(ii) \emph{Bag-$G$} divides the corpus into groups of $G$ texts. For each group, it computes a signature for the \emph{first} 1k texts. At detection time, the filter applies these signatures separately to the entire corpus and adopts the \emph{largest} z-score for the final decision.
Fig.\,\ref{fig:data5000} outlines the results based on the first 5k texts in C4.

Predictive filtering causes the least TPR and FPR inflation; both metrics steadily \emph{decrease} with corpus size.
The downward trend arises because the fixed signature primarily targets distribution-level disruptions (e.g., common red-leaning $n$-grams) that remain stable in held-out data; as the evaluation set grows, training-only patterns contribute less, yielding less overfitting (lower FPR) and slightly tempered gains (lower TPR).
The 2-gram signatures still significantly improve the no-filter \textsc{Kgw}-style baseline, but higher-order ($n\!\ge\! 3$) $n$-grams degrade rapidly: since longer patterns are rarer in distribution, the patterns learned from the training data tend to overfit and under-generalize when applied to unseen texts. For \textsc{Exp}, the signal concentrates in higher-order local correlations; the 2-gram predictive filter hence often misses the relevant structure and yields only marginal change.

Bagging consistently achieves near-optimal detection with high-order $n$-gram signatures across watermarks.
The bagging filter's high TPR is expected since the TPR of the or-ensemble of $\{f_i\}_i$ is $\max_i\mathrm{TPR}(f_i)$. What may be surprising is its empirical FPR, which is much lower than the worst-case union bound $\sum_i \mathrm{FPR}(f_i)$. A closer examination reveals that misclassified natural texts are mainly those containing numerous \textit{low-entropy} red tokens. These texts are scattered evenly in the corpus and mislead filters trained on different groups. This fact explains why, for example, one 2-gram signature already has an FPR as high as 5\% for \textsc{Kgw}, but five 2-gram signatures only jointly raise the FPR to 7\% (Fig.\,\ref{fig:data5000}, bottom left). It also explains why \textsc{Sweet}, which ignores low-entropy tokens in the text on scrutiny, consistently has the lowest post-filter FPR among these \textsc{Kgw}-style watermarks, whereas \textsc{Unigram}, which shares the same red token types across watermarked and natural texts, consistently has the highest post-filter FPR.


The empirical evidence suggests that bagging is the preferred operational mode to apply a signature filter, as successive local training data are sufficient to sustain global detection. In practice, signatures can be learned on rolling groups of freshly generated text, allowing the detector to adapt on the fly to data drift by aggregating individual scores. 
}


\new{To evaluate the performance of signature filtering over large corpora, we consider a parameterized deployment strategy
called \emph{Bag($B$,$G$)}. At generation time, the training data is divided into groups of $G\ge1$ texts. For each group, we compute a signature on the first $B\le G$ texts in that group. At detection time, all group signatures are applied to the input document, and the largest z-score is used for the decision.}

Tables\,\ref{tab:scalability}--\ref{tab:ablation_B_g1200} report end-to-end performance of signature filtering over 50k low-entropy \textsc{Code-Search-Net} Python snippets and 50k high-entropy \textsc{C4} documents on Opt-1.3b. We focus on the low strength regime $\delta=0.5$ and $\theta=0.3$, using signatures of 2-grams for \textsc{Kgw} and \textsc{Sweet}, 3-grams for \textsc{Unigram}, and 4-grams for \textsc{Exp}. Table~\ref{tab:scalability} fixes $B\!=\!\mathrm{1k}$ and considers $G \in \{$1k, 10k, 25k, 50k$\}$ over the entire datasets, while \new{Table~\ref{tab:ablation_B_g1200} fixes $G\!=\!\mathrm{1.2k}$ and considers $B \in \{$1.2k, 0.6k, 0.4k, 0.3k$\}$ over the first 1.2k texts.} Several patterns emerge:

\begin{itemize}[leftmargin=1.2em, itemsep=2pt, topsep=5pt]
\item \textit{\textsc{Kgw}/\textsc{Sweet} maintains a controllable TPR--FPR frontier.}
On \textsc{Code-Search-Net}, Bagging attains near-optimal TPR (99\%) with low FPR (1--4\%) for $G\!=\!\mathrm{1k}$. Increasing \textit{G} trades a small TPR loss for further FPR reductions.
On C4, the same frontier shifts upward in FPR for $G\!=\!\mathrm{1k}$, while $G\!=\!\mathrm{25k}$ tightens FPR to single digits at modest TPR cost. 
These patterns reflect the effect of text entropy: high-entropy corpora demand larger $G$ to suppress spurious red clusters.


\item \textit{\textsc{Unigram} shows local-context reins in FPR.}
Using 3-grams markedly lowers FPR relative to shorter context:
on C4, FPR lands in the 1–3\% range across modes with high TPR; on Code-Search-Net, FPR is higher but now bounded near 10–14\%.
The gain comes from breaking type-level coupling with light contextualization, which curbs the spurious flips seen with lower-order filters.

\item \textit{\textsc{Exp} achieves near-zero FPR with strength-dependent TPR.}
In Table\,\ref{tab:scalability}, \textsc{Exp} consistently exhibits near-zero false positives, while its true positive rates vary in the expected way with mode coverage. Because \textsc{Exp} evidence tends to appear in short bursts, larger group sizes can miss some group-specific local structure and thus lose recall. For $G\!=\!\mathrm{50k}$, filtering shows nearly no benefit over the baseline.
\end{itemize}


\neww{It is worth noting that the FPR trends in Tables \ref{tab:scalability}--\ref{tab:ablation_B_g1200} differ systematically with corpus entropy across watermark families. Specifically, \textsc{Kgw} exhibits lower false positives on the low-entropy than high-entropy corpora, while \textsc{Unigram} shows the reverse pattern. A plausible explanation is that, in \textsc{Kgw}, the green/red partition varies across positions, so recurring short templates in low-entropy code can be filtered more consistently. Bagging in small groups on high-entropy text is more prone to fitting idiosyncratic red clusters, and thus tends to increase false alarms. \textsc{Sweet} resembles \textsc{Kgw} but ignores low-entropy tokens in the z-test. This attenuates the entropy effect and helps keep \textsc{Sweet}'s false positives low on both corpora. By contrast, \textsc{Unigram} ties token color to token type across documents. This coupling is stronger in low-entropy code, where the same token types recur across code snippets heavily, and it is weaker in high-entropy natural language texts with greater lexical diversity. For \textsc{Exp}, the false positives remain essentially near zero across both corpora, which is consistent with Theorem~\ref{thm:exp-universal} under the score secrecy assumption.}

\new{
These results provide practical guidance to tune the parameters $(B,G)$ along the TPR--FPR frontier.
Generally, reducing $G$ (or increasing $B$) raises both TPR and FPR by producing more idiosyncratic signatures whose ensemble maximum tends to exceed the threshold. In contrast, increasing $G$ (or reducing $B$) accelerates the decline of both TPR and FPR, as a less representative signature generalizes more weakly.
When $B$ is too small relative to $G$, signatures fail to capture meaningful corpus-level patterns, and filtering degrades toward the baseline behavior. 
The effects of batch and group size follow from standard underfitting vs.~overfitting considerations and the multiple-testing nature of ensemble learning \cite{polikar2012ensemble}. 
}

\OMIT{\subsection{Discussion}

Across four watermark schemes on high‑entropy and low‑entropy corpora, signature filtering boosts true‑positive rates in the weak‑signal and low‑variation regimes where baseline detectors struggle. 
Our empirical results show that watermark families differ significantly in their interaction with filtering. As a rule of thumb, we suggest the following guidance for deployment: (i) adopt 2‑gram signatures for \textsc{Kgw} and \textsc{Sweet}, (ii) use 3‑gram for \textsc{Unigram} and \textsc{Exp}, (iii) set Bag-1k as the default streaming mode, and (iv) implement post‑deployment calibration of parameters using in‑domain holdout data with periodic data drift checks.
}

\section{Discussion and implications}
\label{sec:discussion}


\subsection{Theoretical implications for LLM watermarking}

\new{Our analysis shows that filtering at detection time can be added on top of existing watermark tests. It does not require statistical assumptions that are stronger than those already used by the baseline detector. Under the standard \textsc{Kgw} coloring model, deleting a fixed subset of tokens from an unwatermarked text does not change the null distribution of the z-statistic. It mainly reduces the effective sample size, which slightly widens the confidence intervals. This independence model is idealized, but it is the same assumption made by the original watermark schemes. \neww{When the deletion rule becomes correlated with hidden colors, the guarantee may weaken, which is precisely the motivation for the threat analysis in Sec.\,\ref{sec:FPR-analysis}. In our experiments on repeated, correlated, and edited texts, signature filtering remains effective even when independence holds only approximately.}}

\new{The false-positive bounds in Sec.\,\ref{sec:FPR-analysis} provide a practical rule of thumb for operating signature filters. In the color-blind setting, one can safely delete a number of tokens that grows linearly with the text length. With this choice, the conditional false positive rate $\Perror = \mathbb P(Z' \ge z_0 \mid Z < z_0, \mathcal H_0)$ can be kept below any chosen tolerance level. The situation changes when an attacker can fully control token colors. In that case, any watermark test that relies only on color statistics is inherently robust only up to $\sqrt{n}$-scale deletions. Below this scale, worst-case false positives remain bounded. Beyond it, even an optimally designed hypothesis test can be forced to flip some correct decisions. This $\sqrt{n}$ threshold reflects a fundamental limit of color-based watermarking, not a specific weakness of signature filtering.}

\new{We also observe consistent locality patterns in Sec.\,\ref{sec:exp-accuracy}. In particular, 2--3-gram signatures work best for \textsc{Kgw}-style schemes, while 3--4-grams work best for \textsc{Exp}. These results suggest that most usable watermark evidence comes from short-range token dependencies. In this sense, signature filtering serves as a lightweight pre-processing layer that concentrates this evidence, along with an explicit and tunable deletion budget that practitioners can manage.}

\subsection{Practical implications for information processing and management}


\medskip\noindent\textbf{Drop-in enhancement and design guidelines.}
Signature filtering operates entirely at detection time. It only requires access to historical
watermarked and natural texts to learn signatures via MILP. This makes it a lightweight plug-in for information retrieval, indexing, and content management pipelines that already deploy watermarks but must operate under strict false-positive budgets.
In large or streaming corpora, we consider a bagging deployment mode, which learns signature batches on rolling text groups and takes the maximum post-filter z-score across these signatures. 
Practitioners can select batch sizes, group sizes, and decision thresholds according to their risk tolerance and computational budget, much as they currently tune watermark strength or operating points of existing detectors.

\medskip\noindent\textbf{How signature filtering differs from other techniques.}
Relative to other detection-time enhancement methods like \textsc{Ewd} and \textsc{WinMax},
our work is more friendly to governance.
Techniques that amplify watermark signals by changing the test statistic introduce
new scores that must be calibrated and explained alongside watermark tests. By contrast,
signature filtering keeps the underlying detection unchanged and instead learns a reusable filter that
modifies which tokens enter the test. This yields (i) explicit control of the additional Type-I error through
$\Perror$, (ii) interpretability in terms of which contexts are removed, and (iii) ease of integration
into policy-driven pipelines where each component must expose its contribution to overall
risk. These features make filtering particularly attractive for management
settings in which detection outputs support triage and provenance attribution.

\medskip\noindent\textbf{\neww{Cross-lingual and multimodal extensions.}}
\neww{
Conceptually, signature filtering can be extended to any watermark test that aggregates per-unit evidence and remains calibrated after deleting units from observable content. For multilingual watermarked documents \citep{he2024watermarkssurvivetranslationcrosslingual}, our two-stage detector may be applied unchanged by modifying the representation used for signature learning.
When multilingual LLMs share the same tokenizer, signatures can be learned either separately per language or jointly with language tags. 
When translation is a routine step, one can instead translate both the training corpus and candidate texts into a pivot language, after which the same signature learning and detecting pipeline can be applied.
Tokens or short phrases can also be mapped into cross-lingual semantic clusters, and the same optimization can be carried out over these high-level units. Signature filtering thus aims to remove ambiguous clusters that dilute evidence \cite{Liu2024bSemanticInvariant}.

For images and other modalities, one can discretize the signal into a finite token stream (e.g., pixels, patches, or quantized latents) and apply pseudorandom coloring or scoring to these units, such that signatures suppress 
regions that consistently provide little evidence or exhibit high variance \citep{Fernandez2023StableSignature}.
Because the deletion rule still depends only on observable features, the null-calibration argument in Sec.\:\ref{sec:prelim} remains valid in these scenarios. Nevertheless, to keep optimization feasible at scale, practical solutions will hinge on precise encoding of sufficient conditions, efficient pruning of candidate patterns, and scalable implementation of operation modes.}

\medskip\noindent\textbf{Robustness to dynamic content distributions.}
In production, a deployed signature may become stale when the deployment context distribution has drifted enough that the post-filter detector no longer satisfies its target TPR/FPR operating point. This is a common issue for machine-learning systems deployed under concept drift or dataset shift \citep{Gama2014Survey,Lu2019ConceptDriftReview}. Gradual covariate drift is expected to reduce recall first, because learned $n$-gram contexts will match less often or remove less red evidence. This behavior is consistent with the smooth TPR decay observed in the Bag$(B,G)$ experiments of Sec.\,\ref{sec:scalibility}. The FPR remains governed by null calibration, but drift toward highly repetitive prose may push signatures for \textsc{Kgw}-style watermarks toward the distributionally correlated regime characterized in Theorem \ref{thm:square-law}.

Operationally, data drift can be monitored using standard methods such as two-sample tests or domain-discriminator tests \citep{Gretton2012KernelTwoSample,Rabanser2019FailingLoudly}. If data drift is found to be fast, the bagging strategy in Sec.\,\ref{sec:scalibility} can be extended with an adaptive moving-window mechanism: signatures are learned from recent batches, stale signatures outside the window are discarded or downweighted, and newly optimized signatures are added as the stream evolves. This mirrors existing adaptive and streaming machine-learning techniques for concept drift \citep{Bifet2007ADWIN}. We note that, while standard monitoring and rolling calibration can enhance a filter's resilience to gradual dynamic content changes, signatures still need to be re-optimized after significant concept shifts, tokenizer or model changes, and watermark key rotation.

\subsection{Limitations and future directions}

Several limitations qualify our contributions and motivate future work. First, our theoretical guarantees assume that the watermark's statistical model remains informative after filtering. This assumption can break if an attacker can manipulate or infer token colors. For example, the attacker might partially recover the green/red partition or obtain model internals. In that case, any color-based detector (including signature filtering) becomes vulnerable, and the linear-safe regime may no longer apply.

A second limitation arises from distributional correlation. When signatures are trained and deployed on highly repetitive corpora, the learned filter can become correlated with benign text. This correlation can mimic adversarial deletions and push the system toward the $\sqrt{n}$-scale fragile regime characterized in Theorem~\ref{thm:square-law}.
In practice, such risk can be mitigated by combining filtering with entropy-aware or semantics-aware mechanisms that down-weight highly predictable tokens. Our experiments already show benefits from such mechanisms (for example, \textsc{Sweet}). An important next step is to fully characterize and evaluate these hybrid systems, determining how to allocate deletion budgets and weights across the different signals.

\neww{Another challenge is learning effective signatures in applications with scarce data, such as highly specialized domains where it is difficult to assemble representative watermarked and natural documents for training. Table \ref{tab:ablation_B_g1200} quantifies this sensitivity, showing that reducing the per-group training size $B$ lowers both TPR and FPR across all watermark families. The gains over the baseline persist in most settings but can become modest. This observation suggests several mitigation strategies, like pooling adjacent batches or reducing the group size $G$ such that each signature is estimated from more representative data, updating signatures incrementally as new texts arrive, and prioritizing conservative signature orders in scarce-data domains to avoid brittle patterns. A systematic investigation of these strategies is our subsequent goal.}

\neww{
A practical deployment limitation is key dependence. The signatures computed by our MILP are based on key-conditioned evidence, so rotating the watermark key changes the optimization objective and weakens the learned signatures. Although the offline recomputation cost is modest for \textsc{Kgw}-style watermarks, it is noticeably higher for \textsc{Exp} (see Tables \ref{tab:scalability}--\ref{tab:ablation_B_g1200}). Accordingly, signature filtering is most attractive when keys are stable over a deployment interval or when re-optimization can run asynchronously in the background.} 

Finally, while our results suggest that signature filtering can serve as a useful component in provenance
pipelines, it does not replace human judgment or eliminate the need for complementary signals. In high-stakes
settings, watermark-based decisions should be interpreted as one input among many, alongside human review,
retrieval-based corroboration, and content-authenticity metadata. Designing such socio-technical systems and empirically studying how managers and end users understand, trust, and act on watermark evidence remains a critical area for future research. 


\OMIT{
\paragraph{Future work}
Signature filtering can be combined with evidence aggregation~\citep{xiong2025delphiagent} and with content‑authenticity signals studied for AI‑generated text and media, offering managers a layered strategy for provenance risk.
Two directions are especially relevant toward this goal. The first direction is \emph{multi‑signal fusion}, which involves combining signature filtering with existing detectors such as detectors based on entropy or attention patterns \cite{wang2025benfordipm}. The resulting hybrid system is then embedded into a content-trust pipeline to reduce residual risk at fixed review budgets~\citep{xylogiannopoulos2024chatgpt}. The second direction is \emph{policy integration}, which translates our conditional false-positive guarantees into service-level objectives, e.g., ``false-positive rate $\le$ 0.10\% at one-million items per day'' or ``95\% of borderline documents reviewed within 2 hours.''
These objectives can then be applied to indexing, moderation, and academic-integrity workflows, enabling organizations to translate detection performance into auditable thresholds and embed them in formal disclosure policies.


\OMIT{
\medskip
\noindent
\textbf{Implications for information systems.}
From the perspective of content provenance and information integrity, signature filtering functions as a \emph{drop-in module} for existing watermark pipelines:
(i) it leaves generation untouched, 
(ii) it is compatible with multiple detectors, and 
(iii) it can be computed once (or amortized over rolling groups) and reused at scale.
This makes it suitable for search and repository operators who must decide—often automatically and at the \emph{segment level}—whether text likely contains AI-generated content, with downstream consequences for indexing, ranking, and policy enforcement.

\medskip
\noindent
\textbf{Managerial takeaways.}
\begin{itemize}
  \item \emph{When to activate the filter.} Use signature filtering as a second-stage check \emph{only} when the baseline test is inconclusive; this design limits the failure mode to flipping a correct ``unwatermarked'' decision and lets risk be quantified as a \emph{conditional} Type-I error.
  \item \emph{Default choices.} For K\textsc{gw}-style and related schemes, $2$-gram signatures offer the best accuracy–cost balance in our experiments; for Unigram and Exp variants, prefer conservative deletion budgets and monitor solver timeouts.
  \item \emph{Deployment guardrails.} Enforce a residual length (e.g., $\ge 30$ tokens) to keep normal approximations valid; calibrate operating points on held-out natural text; and cap deletions by the safe budgets from \S\ref{sec:FPR-analysis} (linear in $n$ under color-blind assumptions; $\tilde O(\sqrt n)$ fragility under color-adaptive or distributionally correlated scenarios).
  \item \emph{Streaming at scale.} Use \emph{bagging} over small, recent groups to adapt to drift; in practice this retained near-oracle TPR while keeping FPR well below naive union-bound estimates, at modest compute overhead.
\end{itemize}

\medskip
\noindent
\textbf{Limitations and ethics.}
Signature filtering assumes the watermark’s statistical model remains informative after filtering; adversaries who adapt token colors or engineer distributional correlation can erode guarantees.
To reduce unintended FPR inflation on benign low-entropy material (e.g., boilerplate, code), pair filters with entropy-aware prechecks and log flips for audit.
The method should support \emph{triage and provenance attribution}, not punitive adjudication in isolation; human review and multi-signal evidence (e.g., metadata, retrieval-based corroboration) remain essential in high-stakes settings.

\medskip
\noindent
\textbf{Reproducibility and reuse.}
We release code and scripts for signature construction and evaluation, together with instructions to reproduce the figures end-to-end.
For operational reuse, we provide default configurations (recommended $n$-gram order, safe deletion caps, residual-length thresholds) and example pipelines for both predictive and bagging modes.


\medskip
\noindent
In sum, signature filtering offers a practical, theory-backed, and deployment-ready enhancement for watermark detection that aligns with information-processing needs: it improves recall where managers need it most, preserves calibrated risk, and scales to real corpora with simple operational guardrails.

}
\section{Conclusion}

We presented \emph{signature filtering}, a detection-time plug‑in that deletes a compact pre‑learned set of statistically disruptive tokens before running a baseline watermark test. Under the \textsc{Kgw} assumption, the approach preserves the null distribution of the z‑test, and our finite‑sample and asymptotic analyses delineate safe deletion budgets and quantify worst‑case false‑positive risks. 

\paragraph{Implications for IP\&M practice.}
Signature filtering is designed to slot into existing information pipelines (see Fig.\,\ref{fig:workflow}), enabling several management‑facing uses:
\begin{enumerate}[leftmargin=*,itemsep=1pt,topsep=2pt]
  \item \textbf{Provenance at index time.} As a pre‑index gate or as a ranking feature, the filter raises separation between natural and watermarked text in predictable prose, improving \emph{index hygiene} without modifying generation. This supports content provenance, collection curation, and retrieval quality control at scale.
  \item \textbf{Repository curation and audit.} Document‑ or segment‑level flags produced by the filter can be logged as metadata for downstream auditing, compliance, and lifecycle policies (e.g., differential retention or human review), complementing verification frameworks in IP\&M that reason over multi‑agent or multi‑signal evidence chains~\citep{xiong2025delphiagent}.
  \item \textbf{Platform governance and trust.} For user‑generated content such as product reviews, where AI paraphrases can erode consumer trust, provenance signals can inform moderation and transparency policies~\citep{xylogiannopoulos2024reviews}. Our filter provides a model‑agnostic signal that can be fused with other cues (style, network, or attention‑law features)~\citep{wang2025benfordipm}.
  \item \textbf{Streaming operations.} Bagging over rolling groups amortizes signature computation and sustains high TPR with low FPR inflation in streams, aligning with continuous ingestion settings (news feeds, enterprise knowledge bases).
\end{enumerate}

\paragraph{Connections to managerial research.}
Our results contribute a \emph{provable, deployable} component to broader IP\&M agendas on information quality, credibility, and automated verification. Signature filtering can be combined with evidence aggregation (e.g., agent‑based verifiers)~\citep{xiong2025delphiagent} and with content‑authenticity signals studied for AI‑generated text and media~\citep{wang2025benfordipm}, offering managers a layered strategy for provenance risk.

\paragraph{Limitations and guidance.}
Theory and experiments highlight two practical cautions. First, distributional correlation between training and deployment data can inflate conditional FPR if deletion budgets are too aggressive; our bounds and ablations provide \emph{safe operating regions} and emphasize moderate $n$‑gram orders. Second, watermark families differ in their interaction with filtering (e.g., Unigram can raise FPR if operated at 1‑gram with aggressive deletion). As a rule of thumb for production, we recommend: (i) 2‑gram signatures for Kgw and Sweet, (ii) 3‑gram for Exp with stricter deletion caps, (iii) \emph{bagging‑1k} as the default streaming mode, and (iv) post‑deployment calibration of thresholds using in‑domain holdout data with periodic drift checks.

\paragraph{Outlook.}
Two directions are especially relevant for IP\&M. First, \emph{multi‑signal fusion}: combining signature filtering with orthogonal detectors (e.g., entropy‑ or attention‑law‑based) and trust pipelines can reduce residual risk at fixed review budgets~\citep{wang2025benfordipm,xiong2025delphiagent}. Second, \emph{policy integration}: translating conditional‑FPR guarantees into service‑level objectives for indexing, moderation, and academic‑integrity workflows will help organizations set auditable thresholds and disclosure policies. 

\subsection{Concluding Remarks}

\OMIT{
Signature filtering amplifies watermark signals while maintaining control over false positives. Comprehensive experiments across watermark algorithms, attack scenarios, model sizes, and corpus scales confirm it as an effective detection-time upgrade. For \textsc{Kgw} and \textsc{Sweet}, 2-gram signatures offer the best trade-off between detection power and solver cost; by contrast, they tend to raise the FPR for \textsc{Unigram} and trigger more solver timeouts on \textsc{Exp}, indicating the need for scheme-specific tuning. In practice, lightweight bagging sustains near-optimal performance at scale, and future work should explore richer aggregation techniques (e.g., majority voting, hierarchical filters) as well as faster signature construction via continuous relaxations or randomized rounding.
}

Signature filtering amplifies watermark signals while maintaining control over false positives.
Our finite-sample bounds and adversarial analysis delineate safe deletion budgets and quantify the worst-case FPR.
Experiments across watermark families, attack suites, model sizes, and corpus scales further validate the empirical performance. Signatures trained on a small seed corpus can potentially retain high detection power when applied to unseen text; lightweight bagging preserves near-oracle performance even at streaming scale.  
Among the settings we examined, 2-grams offer the best detection-cost balance for \textsc{Kgw} and \textsc{Sweet}, but raise the FPR for \textsc{Unigram} and incur more solver timeouts on \textsc{Exp}, signaling the need for tailored tuning.  
Future work could explore richer aggregation strategies (e.g., majority voting, hierarchical filters) and faster signature construction via relaxations or randomized rounding.
}




\appendix
\section{MILP Formulation for \textsc{Exp} Signatures}
\label{app:aar-obj}

\OMIT{Our MILP formulation requires a linear sufficient condition
for watermark detection. Fix a text $T=\langle t_1,\dots,t_n\rangle$.
For \textsc{Kgw}-style watermarks,
we have used $N_g(T) \ge \rho n$ as a sufficient condition for testing
$N_g(T) \ge \gamma n + z_0 \sqrt{\gamma(1-\gamma)n}$, where $\rho$ is determined
by $\gamma$, $n$, and $z_0$ (see Sec.\,\ref{sec:ILP-formulation}).
For \textsc{Exp} watermarks, we must ensure that
the linear sufficient condition implies $\mathbb{P}(X_n \!\ge\! X_T) < \alpha$,
where $X_n \sim \Gamma(n,1)$, and $X_T \coloneq \sum_{i=1}^n c_i$
is the sum of the per-token \textsc{Exp} scores.}

In the following, let \(X_n\sim\Gamma(n,1)\) with \(n\ge 1\) and let $\gamma(x,n) \coloneqq \mathbb{P}(X_n \ge x)$ denote its survival function.
We first present a sufficient condition that makes \textsc{Exp} watermarks compatible with our signature constraints.

\begin{proposition}
Fix a significance level $\alpha \in (0,1)$ and define \(c(\alpha)>1\) as the unique root of  
$c\,e^{-(c-1)}=\alpha$ on $(1,\infty)$.
Then $x \ge c(\alpha)\,n$ is a sufficient condition for $\gamma(x,n)<\alpha$ and $n>1$.
\end{proposition}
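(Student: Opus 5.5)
The plan is a Chernoff (exponential Markov) bound on the Gamma upper tail, combined with monotonicity of the survival function $\gamma(\cdot,n)$ in $x$. Since $\gamma(x,n)$ is non-increasing in $x$, it suffices to show $\gamma(c(\alpha)n,n)<\alpha$. That is, we bound $\mathbb P(X_n\ge cn)$ with $c=c(\alpha)$.

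First, I will justify that $c(\alpha)$ is well defined. Let $f(c)\coloneqq c\,e^{-(c-1)}$. Then $f'(c)=(1-c)e^{-(c-1)}<0$ on $(1,\infty)$, with $f(1)=1$ and $f(c)\to 0$ as $c\to\infty$. So $f$ is a continuous strictly decreasing bijection from $(1,\infty)$ onto $(0,1)$, and for each $\alpha\in(0,1)$ there is a unique root $c(\alpha)>1$.

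Next, the Chernoff step. For $t\in(0,1)$, the moment generating function is $\E[e^{tX_n}]=(1-t)^{-n}$, because $X_n$ is a sum of $n$ i.i.d.\ $\mathrm{Exp}(1)$ variables. Markov's inequality then gives
\[
\mathbb P(X_n\ge cn)\le e^{-tcn}(1-t)^{-n}.
\]
Choosing the optimizer $t=1-1/c$, which lies in $(0,1)$ because $c>1$, yields
\[
\mathbb P(X_n\ge cn)\le \bigl(c\,e^{-(c-1)}\bigr)^n=f(c)^n=\alpha^n .
\]
Since $\alpha\in(0,1)$, for $n>1$ we get $\alpha^n<\alpha$, which gives the strict inequality $\gamma(x,n)<\alpha$ for every $x\ge c(\alpha)n$.

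The only delicate point is strictness, since the Chernoff bound alone gives only $\le\alpha$ when $n=1$. I read the clause ``and $n>1$'' as placing us in the regime where $\alpha^n<\alpha$, so strictness is immediate there. For completeness I would also treat $n=1$ directly using the exact tail. There $\mathbb P(X_1\ge c)=e^{-c}=f(c)/(ce)=\alpha/(ce)$, and $ce>1$, so this is $<\alpha$. Hence the sufficient condition in fact holds for all $n\ge1$.

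I expect no real obstacle here. The only care needed is checking that the optimizing $t$ lies in $(0,1)$ and stating the monotonicity in $x$ that lets us pass from $x=cn$ to all $x\ge cn$.
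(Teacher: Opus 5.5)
Your proposal is correct and matches the paper's proof: the same Chernoff bound with the optimizer $t^\star = 1-1/c$ gives $\mathbb{P}(X_n \ge cn) \le (c\,e^{-(c-1)})^n = \alpha^n < \alpha$ for $n>1$, and the same monotonicity of $f(s)=s\,e^{-(s-1)}$ on $(1,\infty)$ defines $c(\alpha)$. The paper leaves the monotonicity of $\gamma(\cdot,n)$ in $x$ implicit, which you state explicitly, and it does not treat $n=1$, where your exact check $e^{-c}=\alpha/(ce)<\alpha$ extends the result.
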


\begin{proof}
Write \(x=sn\) with a stretch factor \(s>1\).  
Since \(\Gamma(n,1)\) can be expressed as a sum of \(n\) independent \(\mathrm{Exp}(1)\) variables,
we have \(X_n=\sum_{i=1}^{n}Y_i\) with each \(Y_i\sim \mathrm{Exp}(1)\).
For any \(t\in(0,1)\), Markov's inequality gives
$
\mathbb{P}(X_n \ge x)
 =\mathbb{P} \bigl(e^{tX_n}\ge e^{tx}\bigr)
 \le e^{-t x}\,\E[e^{tX_n}]
 = e^{-t x}\,\bigl(\E[e^{tY_1}]\bigr)^{n}
   = e^{-t x}\,(1-t)^{-n},
$
since the m.g.f.~of \(\mathrm{Exp}(1)\) is \(\E[e^{tY_1}]=(1-t)^{-1}\) for \(t<1\).
Choosing the optimal \(t^{\star} = 1-s^{-1} \in(0,1)\) 
then yields the explicit upper tail bound
$\mathbb{P} (X_n\ge s\,n) \le \bigl(s\,e^{-(s-1)}\bigr)^{n}.$

Note that the function \(f(s) = s\,e^{-(s-1)}\) satisfies
\(f'(s) = e^{-(s-1)}(1-s) < 0\) for all \(s>1\), so \(f\) is strictly decreasing on \((1,\infty)\),
with \(f(1)=1\) and \(\lim_{s\to\infty} f(s)=0\). Therefore, for each \(\alpha\in(0,1)\) there is a unique
solution \(c(\alpha)>1\) to \(f(s)=\alpha\).
Let \(s=c(\alpha)\) be this unique solution; plugging it into the upper tail bound yields
$\mathbb{P}(X_n\ge c(\alpha)n)\le\alpha^{n}<\alpha$
for all $n > 1$.
Hence, \(\gamma(x,n)=\mathbb{P}(X_n \ge x)<\alpha\)
whenever \(x\ge c(\alpha)\,n\).
\end{proof}

Our experiments fix $\alpha=10^{-4}$ following the default setting of MarkLLM \cite{pan2024markllmopensourcetoolkitllm}.
Since $|T'| \in [30, 200]$, we can \emph{numerically} compute the best linear sufficient condition for detecting \textsc{Exp} watermarks.

\begin{proposition}
\label{prop:gamma_linear_1.83}
Fix $\alpha = 10^{-4}$ and suppose that $n \in [30, 200]$.
Then $x \ge 1.826\,n$ implies that $\gamma(x,n)<\alpha$.
\end{proposition}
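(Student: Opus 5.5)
The plan is to reduce the claim to a finite, exactly computable check. The Chernoff bound from the previous proposition is too weak here: at $n=30$ and $s=1.826$ it gives only $(s\,e^{-(s-1)})^{30}\approx e^{-6.7}\approx 10^{-3}$. I will therefore use the exact Gamma tail. The argument has three steps: monotonicity in $x$, the Poisson identity, and a finite check over $n$.

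\emph{Step 1 (monotonicity in $x$).} For fixed $n$, the survival function $\gamma(x,n)=\mathbb{P}(X_n\ge x)$ is non-increasing in $x$. So it suffices to prove $\gamma(1.826\,n,\,n)<10^{-4}$ for every $n$ in the range.

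\emph{Step 2 (exact formula).} Since $|T'|$ is a token count, $n$ is an integer in $\{30,\dots,200\}$. For integer $n$, the Gamma--Poisson duality gives
\[
\gamma(x,n)=\mathbb{P}\bigl(\mathrm{Poisson}(x)\le n-1\bigr)=e^{-x}\sum\nolimits_{k=0}^{n-1}\frac{x^k}{k!}.
\]
This is a finite sum of positive terms. At $x=1.826\,n$ it can be evaluated with rigorous error control, for example interval or rational arithmetic, or summing in log-space with an explicit bound on rounding error.

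\emph{Step 3 (finite check).} I will evaluate $g(n)\coloneqq\gamma(1.826\,n,\,n)$ for all $171$ values $n=30,\dots,200$ and verify $g(n)<10^{-4}$ for each. I expect the maximum to be at $n=30$, since the normalized mean $X_n/n$ concentrates around $1<1.826$. The constant $1.826$ is presumably chosen so that $g(30)$ falls just below $10^{-4}$.

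To make the argument less reliant on brute force, I would first try to prove that $g(n)$ is decreasing in $n$. Then only $n=30$ needs checking. One route is to compare consecutive terms, writing $g(n+1)/g(n)$ via the Poisson sums. Another is to use the fact that the tail probability of the sample mean, $\mathbb{P}(\bar Y_n\ge s)$, for i.i.d.\ $\mathrm{Exp}(1)$ variables and $s>1$, is eventually decreasing. I expect this monotonicity step to be the main obstacle, because a clean analytic proof is delicate. If it does not go through quickly, I will fall back on the exhaustive certified evaluation over all $171$ integers, which is fully rigorous and cheap.

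The remaining care point is numerical precision near $n=30$. There the margin below $10^{-4}$ is small, so the computation must carry an explicit error bound rather than rely on floating-point output alone.
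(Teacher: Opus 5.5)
Your plan is correct and is essentially the paper's argument. Both use monotonicity of $\gamma(x,n)$ in $x$ to reduce the claim to a finite numerical check over the integers $n=30,\dots,200$, with the binding case at $n=30$, where indeed $\gamma(54.78,30)\approx 9.86\times10^{-5}$. The paper instead computes the exact quantile ratios $r_n=x^\star(n,\alpha)/n$ and observes $\max_n r_n=r_{30}\approx 1.82505<1.826$, whereas you evaluate the tail at $x=1.826\,n$ directly via the Poisson identity. The two are equivalent, and your insistence on certified error bounds near $n=30$ makes explicit a rigor point the paper leaves implicit.
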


\begin{proof}
Given $n$ and $\alpha$, let \(x^{\star} = x^{\star}(n,\alpha)\) be the unique solution for
$\gamma(x^{\star}\!,n)=\alpha$. (Such a unique solution exists since
$\gamma(0,n)=1$ and $\gamma(x,n) \to 0$ as $x\to\infty$.)
Define \(r_n\coloneqq x^{\star}/n\).
For every integer \(n\in[30,200]\), we can compute $r_n$ numerically.
The maximal ratio occurs at $r_{30} = \max_{30\le n\le200} r_n \approx 1.82505$.
For any constant \(c > r_{30}\), \(n\in[30,200]\), and $x \ge c\,n$, it holds that
$
x \ge c\,n 
      > r_{30}\, n
      \ge r_n\, n
      = x^{\star}.
$
Since \(\gamma(x,n)\) is strictly decreasing in \(x\), we have
$
\gamma(x,n)<\gamma(x^{\star}\!, n) = \alpha.
$
Consequently, \(x\ge1.826\,n\) ensures that
\(\gamma(x,n) < \alpha\).
\end{proof}

\OMIT{
\section{Hardness of Optimal Signature Selection}
\label{app:sig-complexity}

We define the \emph{optimal signature selection} problem as follows. Given $m$ texts $T_1, \dots, T_m$ over vocabulary $\mathcal{V}$ and $m$ thresholds $\rho_1,\ldots,\rho_m$, we want to find a signature $S \subseteq \mathcal{V}$ maximizing the number of texts $T_i$ such that the green fraction of $T_i$ is at least $\rho_i$ after filtering. 

\begin{proposition}
\label{thm:np-hard}
    Optimal signature selection is NP-hard.
\end{proposition}

\begin{proof}
We reduce from the \emph{Maximum Feasible Subsystem (Max-FS) of 0--1 Linear Inequalities} \cite{amaldi1995complexity}, where we are given a set of $m$ linear inequalities in the form of
$\sum_{j=1}^n a_{i,j}\,x_j \ge 0$, with $x_j \in \{0,1\}$ for $i=1,\dots,m$.
The task is to find an assignment for $x_1,\dots,x_n$ that maximizes the number of satisfied inequalities.

We construct an instance of the signature selection problem as follows:
$\mathcal{V}$ consists of $n$ token types $w_1, \dots, w_n$, one for each variable $x_j$.
For the $i$-th inequality $\sum_{j=1}^n a_{i,j}x_j \ge 0$, we create a text $T_i$
according to $a_{i,j}$'s values: 
    \begin{itemize}
        \item If $a_{i,j} > 0$, let $w_j$ appear $a_{i,j}$ times in green and $0$ times in red in $T_i$.
        \item If $a_{i,j} < 0$, let $w_j$ appear $0$ times in green and $-a_{i,j}$ times in red in $T_i$.
        \item If $a_{i,j} = 0$, do not use $w_j$ in $T_i$ (so it does not affect that text).
    \end{itemize}
Finally, we set the threshold $\rho_i = 0.5$ for all $i$.

Observe that, for each text $T_i$, keeping a token $w_j$ (i.e.,\ not including it in the signature) contributes positively if $a_{i,j}>0$ and negatively if $a_{i,j}<0$. More precisely, if we let $K = \{ j : w_j$ is kept$\}$, then $N_{g,i}' \ge 0.5\,N_i'$ if and only if
$
    \sum_{j \in K} (\text{\#green of $w_j$ in $T_i'$}) \ge \sum_{j \in K} (\text{\#red of $w_j$ in $T_i'$})
$
holds for the filtered text $T_i'$. By construction,
$
    \sum_{j \in K} \bigl(\#\text{green of $w_j$ in $T_i'$} - \#\text{red of $w_j$ in $T_i'$}\bigr)
    = \sum_{j=1}^n a_{i,j}\,x_j,
$
with $x_j = 1$ if $w_j$ is kept and $x_j = 0$ otherwise. Thus, $T_i'$ satisfies ${N_{g,i}'} \ge 0.5\,{N_i'}$ if and only if $\sum_{j=1}^n a_{i,j}\,x_j \ge 0.$
Hence, an assignment that satisfies at least $k$ of the original inequalities corresponds precisely to a signature $S = \{w_j : x_j=0\}$ that yields at least $k$ texts with green fraction $\ge 0.5$.

Since Max-FS is NP-complete, this reduction shows that the corresponding
decision version of optimal signature selection is
NP-hard. Membership in NP is immediate, because for any candidate signature
we can compute, in polynomial time, how many texts meet their thresholds and
check whether this number is at least $k$. Hence, the decision version of
optimal signature selection is NP-complete, and the optimization problem is
NP-hard.
\end{proof}



}

\section{Berry--Esseen Remainders}
\label{app:BE-to-remainders}
This section establishes the Berry--Esseen remainders we will use throughout the remaining appendices. 
Let $X_{1},\dots,X_{n}$ be independent random vectors in $\mathbb{R}^{d}$ with
$\mathbb{E}[X_{i}] = 0$. Define
$S_{n}\coloneqq \sum_{i=1}^{n}X_{i}$
with
$\Sigma_{n}\coloneqq \operatorname{Cov}(S_{n})$.
Write $Z_{d}\sim\mathcal{N}(0,I_{d})$ and
$
  \Phi_{d}(A)\coloneqq \mathbb{P}(Z_{d}\in A)
$
for any Borel set $A\subseteq\mathbb{R}^{d}$. Define the
\emph{standardized third moment}
$
  \mu_{n}\coloneqq \sum_{i=1}^{n}
               \mathbb{E}\bigl\lVert\Sigma_{n}^{-1/2}X_{i}\bigr\rVert_{2}^{3}.
$
We use $\Phi(x)$ to denote the standard normal cumulative distribution function, and use $\Phi_d(x_1,\dots,x_d)$ to denote its standard $d$-variate version.

\begin{proposition}[\citealt{bentkus2005lyapunov}]
\label{thm:BE}
There is a constant $c_{\mathrm{BE}}$ such that for every convex Borel set $A\subseteq\mathbb{R}^{d}$, it holds that
$
  \bigl| \mathbb{P} \bigl(\Sigma_{n}^{-1/2}S_{n}\in A\bigr) - \Phi_{d}(A) \bigr|
  \;\le\; c_{\mathrm{BE}}\cdot d^{1/4}\cdot\mu_{n}.
$
\end{proposition}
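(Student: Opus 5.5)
Since this proposition is quoted from \citet{bentkus2005lyapunov}, the plan below reconstructs the standard argument rather than a new one. The first step is a normalization. The map $x\mapsto\Sigma_n^{-1/2}x$ is linear, so it sends convex sets to convex sets. It also leaves $\mu_n$ unchanged once we rewrite $X_i$ as $\Sigma_n^{-1/2}X_i$. We may therefore assume $\Sigma_n=I_d$ and prove
\[
\sup_{A\ \mathrm{convex}}\bigl|\mathbb P(S_n\in A)-\Phi_d(A)\bigr|\le c_{\mathrm{BE}}\,d^{1/4}\mu_n .
\]
The bound is trivial when $\mu_n$ exceeds a fixed constant, because the left side is at most $1$. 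So we may also assume $\mu_n$ is small.

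\textbf{Geometric input.} The dimension factor $d^{1/4}$ enters only through Gaussian isoperimetry for convex sets. By the results of Ball and Nazarov, for every convex $A$ and every $\varepsilon>0$,
\[
\Phi_d\bigl(A^{\varepsilon}\setminus A^{-\varepsilon}\bigr)\le C\,d^{1/4}\varepsilon ,
\]
where $A^{\varepsilon}$ denotes the outer $\varepsilon$-neighbourhood of $A$ and $A^{-\varepsilon}$ the inner parallel body. We then replace $\mathbf 1_A$ by a smooth function $\varphi_\varepsilon$ satisfying $\mathbf 1_{A^{-\varepsilon}}\le\varphi_\varepsilon\le\mathbf 1_{A^{\varepsilon}}$, whose $k$-th derivatives are of order $\varepsilon^{-k}$. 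Such a function can be built from the distance to $A$, or by convolving with a small Gaussian. This reduces the problem to bounding $|\mathbb E\varphi_\varepsilon(S_n)-\mathbb E\varphi_\varepsilon(Z_d)|$, at an extra cost of $O(d^{1/4}\varepsilon)$.

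\textbf{Lindeberg replacement and the main obstacle.} We replace the $X_i$ one at a time by independent Gaussians $Y_i$ with the same covariance. At each swap we Taylor-expand to third order. The first two moments of $X_i$ and $Y_i$ match, so each swap costs roughly $\varepsilon^{-3}\,\mathbb E\|X_i\|^3$. Applied naively, this gives a total error of order $\mu_n\varepsilon^{-3}+d^{1/4}\varepsilon$, and optimizing over $\varepsilon$ yields only the rate $\mu_n^{1/4}$. Recovering the linear dependence on $\mu_n$ is the main obstacle.

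For this step we follow Bentkus's induction on $n$ with the constant carried as an inductive hypothesis. In the $i$-th hybrid sum, we do not differentiate $\varphi_\varepsilon$ directly. Instead we use the Gaussian part $\sum_{j>i}Y_j$ to smooth, since that part has covariance close to $I_d-\sum_{j\le i}\operatorname{Cov}(X_j)$. Its density then supplies the derivatives, with size governed by the remaining variance rather than by $\varepsilon$. The non-Gaussian part $\sum_{j<i}X_j$ is handled by the inductive bound applied to shifted and dilated convex sets, using the fact that the class of convex sets is invariant under affine maps. This produces a recursive inequality of the form
\[
\delta_n\le C\,d^{1/4}\mu_n+\tfrac12\,\delta_n' ,
\]
where $\delta_n'$ is the same discrepancy for sums with fewer terms. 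Summing the geometric series closes the induction with a universal constant $c_{\mathrm{BE}}$.

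The delicate bookkeeping is twofold. First, the inductive step must be applied only to partial sums whose covariance is not too degenerate. When the covariance is too degenerate, that portion is bounded by the trivial estimate together with the small-$\mu_n$ assumption. Second, each derivative of the Gaussian density must contribute exactly one factor $d^{1/4}$ via the isoperimetric bound, not a larger power of $d$.
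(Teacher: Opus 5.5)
The paper does not prove this proposition. It imports the bound from Bentkus (2005) and only instantiates it with $d\in\{1,2\}$, so the citation is the paper's entire argument and the $d^{1/4}$ factor plays no role there.

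Your preliminary reductions are sound:
\begin{itemize}
\item affine invariance of the convex class and of $\mu_n$;
\item triviality once $\mu_n$ is bounded below;
\item Ball's estimate $\Phi_d(A^{\varepsilon}\setminus A)\le 4d^{1/4}\varepsilon$ and its inner analogue as the only source of the dimension factor;
\item the smoothing sandwich between $\mathbf 1_{A^{-\varepsilon}}$ and $\mathbf 1_{A^{\varepsilon}}$ at cost $O(d^{1/4}\varepsilon)$.
\end{itemize}

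As a proof, however, the sketch has a genuine gap exactly where you locate the main obstacle. The recursion $\delta_n\le Cd^{1/4}\mu_n+\tfrac12\delta_n'$ is asserted, and it does not follow from the mechanism you describe. Suppose each swap is bounded in absolute value by letting the remaining Gaussian part supply the derivatives; that part has covariance about $\tau_iI_d$, with $\tau_i\downarrow 0$ along the Lindeberg order. Then the $i$-th remainder is at best of order $\mathbb{E}\|X_i\|^3\min\{\varepsilon^{-1}\tau_i^{-1},\varepsilon^{-2}\tau_i^{-1/2}\}$ times a strip probability. That strip probability, even after invoking the inductive bound, is $Cd^{1/4}\varepsilon+2\delta'$, and its Gaussian part is genuinely of order $\varepsilon$ (already for half-spaces).

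The Gaussian part alone therefore contributes $d^{1/4}\sum_{i:\,\tau_i\ge\varepsilon^2}\mathbb{E}\|X_i\|^3/\tau_i$. In the i.i.d.\ case each dyadic range $\tau_i\in[2^{-j-1},2^{-j}]$ contributes a constant multiple of $\mu_n$, so the total is of order $d^{1/4}\mu_n\log(1/\varepsilon)$. Discarding only the degenerate end of the ordering cannot remove this.

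The same sum, divided by $\varepsilon$, multiplies $\delta'$. Making that coefficient at most $\tfrac12$ forces $\varepsilon\gtrsim\mu_n\log(1/\varepsilon)$, which feeds back into the smoothing error $Cd^{1/4}\varepsilon$. Your scheme therefore yields $d^{1/4}\mu_n\log(1/\mu_n)$, not $d^{1/4}\mu_n$. Removing this logarithm while keeping a contraction factor below one is precisely the nontrivial content of Bentkus's argument, and your proposal supplies no mechanism for it.

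Three further points need correcting.
\begin{itemize}
\item \emph{Smoothness of $\varphi_\varepsilon$.} A distance-based $\varphi_\varepsilon$ is only $C^{1,1}$: its gradient is $\lesssim\varepsilon^{-1}$ and Lipschitz with constant $\lesssim\varepsilon^{-2}$. Mollifying at scale $\varepsilon$ in $\mathbb{R}^d$ costs a factor $\sqrt d$ per derivative, because a Gaussian kernel keeps $\varphi_\varepsilon$ close to $1$ on $A$ only if its coordinate scale is about $\varepsilon/\sqrt d$ (since $\|Z_d\|\approx\sqrt d$). So third derivatives of size $\varepsilon^{-3}$ are not available dimension-free. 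In particular, the last swaps, where almost no Gaussian mass remains, need a separate argument.
\item \emph{Dimension bookkeeping.} For a unit vector $u$ and $\phi_\tau$ the $\mathcal N(0,\tau I_d)$ density, $\int|\partial_u^k\phi_\tau|=\tau^{-k/2}\mathbb{E}|\mathrm{He}_k(Z_1)|$, where $\mathrm{He}_k$ is the Hermite polynomial. Gaussian derivatives therefore contribute no power of $d$. The factor $d^{1/4}$ must enter once per swap, through the strip probability; one factor per derivative would give $d^{3/4}$.
\item \emph{Anisotropic partial sums.} For non-identical summands the partial covariances $\Sigma^{(i)}=\sum_{j<i}\operatorname{Cov}(X_j)$ can be strongly anisotropic. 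Dilating by $(\Sigma^{(i)})^{-1/2}$ then inflates an $\varepsilon$-strip by a factor $\lambda_{\min}(\Sigma^{(i)})^{-1/2}$. Your condition ``not too degenerate'' has to be made quantitative before the inductive bound can be invoked.
\end{itemize}

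For the paper's purposes, simply citing the result is the right move.
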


Recall that $\sigma_{n}\coloneqq\sqrt{\gamma(1-\gamma)n}$
is the standard deviation of the centered Bernoulli
sum \(N_g-\gamma n=\sum_{i=1}^{n}(G_{i}-\gamma)\),
where each \(G_{i}\sim\operatorname{Ber}(\gamma)\)
indicates whether token $i$ is green under the green ratio $\gamma$.
Proposition\,\ref{thm:BE} instantiated into the following bounds for $d=1$ (univariate) and $d=2$ (bivariate):

\smallskip
\noindent\textbf{Univariate remainder $\delta_{n}$.}
Consider the scalar statistic
$Z=(N_g-\gamma n)/\sigma_{n}$. For any $z\in\mathbb R$, it holds that
\begin{align*}
\label{eq:deltaN-derivation}
\Bigl|\mathbb{P}\bigl(Z\le z\bigr)-\Phi(z)\Bigr| \;\le\; c_{\mathrm{BE}}\,\mu_n 
\;=\;c_{\mathrm{BE}}\,\frac{\sum_{i=1}^{n}\mathbb E\lvert G_{i}-\gamma\rvert^{3}}
     {\sigma_{n}^{3}}
\;=\;
c_{\mathrm{BE}}\,\frac{n\gamma(1-\gamma)\bigl[(1-\gamma)^{2}+\gamma^{2}\bigr]}
     {\sigma_{n}^{3}}
\;\le\;\frac{c_{\mathrm{BE}}}{\sigma_{n}} \;\eqqcolon\; \delta_{n}.
\end{align*}

\smallskip
\noindent\textbf{Bivariate remainder $r_{n,n'}$.}
Consider the z-scores
$(Z'\!,\, Z) = \bigl(
  (N_g'-\gamma n')/\sigma_{n'},\,
     (N_g-\gamma n)/\sigma_{n} \bigr)$
after and before signature filtering. The bivariate Berry--Esseen gives
\begin{align*}
\Bigl|\mathbb{P}(Z' < z,\, Z < z)-\Phi_{2}(A)\Bigr|
\;\le
\sup_{A\text{ convex}}
\Bigl|\mathbb{P}\bigl[(Z'\!,\,Z)\in A\bigr]-\Phi_{2}(A)\Bigr|
\;\le\;
\frac{c_{\mathrm{BE}}\cdot 2^{1/4}\cdot 2\sqrt{2}}{\gamma(1-\gamma)\,\rho^3\,\sigma_n}
\;\eqqcolon\; r_{n,n'}.
\end{align*}


\section{False Positive Analysis for Signature Filtering}
\label{app:FPR-analysis}

This section provides the details for Sec.\,\ref{sec:model-A}. We first derive the exact FPR of filtering via binomial-hypergeometric tails. We then offer a Gaussian approximation (Proposition\,\ref{thm:flip-approx}) of the tail probability under a simple variance condition. Finally, we show that the FPR remains below any prescribed tolerance level by limiting the deletion to a linear fraction of the token count and prove Theorem\,\ref{thm:finite-sig-bound} in Sec.\,\ref{sec:model-A}.

\subsection{Exact false-positive probability}
\label{sec:exact-flip}
Fix a signature $S$ and a candidate text $T$ of length $n$.
Let $n'$ denote the number of tokens that survive filtering.
Under $\mathcal H_0$, we can define $N_g \coloneqq  \sum_{i=1}^n G_i$ as the number of green tokens in $T$ under the \textsc{Kgw} coloring assumption, where each $G_i \sim \mathrm{Ber}(\gamma)$.
For a color-blind attacker, we can define $N_g' \coloneqq  \sum_{i=1}^n Y_i\,G_i$ as the number of retained green tokens, where $Y_i$ is an indicator that token~$t_i$ survives. Thus, conditioned on $N_g=x$, it holds that
$
  N_g' \mid (N_g=x) \sim
  \operatorname{HyperGeo}(x,\,n',\,n).
$
Define
$k_z \coloneqq \gamma n + z\sigma_n$ and
$k'_z \coloneqq \gamma n' + z\sigma_{n'}$,
the empirical green fraction thresholds before and after filtering.
The exact probability that a text fails the watermark test before filtering (i.e., $N_g<k_z$) but passes the test after filtering (i.e., $N_g' \ge k'_z$) is
\begin{align}
  \mathbb{P}_{\mathrm{flip}}(n,n',z)
  \;\coloneqq\;
  \sum_{x=0}^{\lfloor k_z-1\rfloor}
  \binom{n}{x}\,\gamma^{x}(1-\gamma)^{n-x}\,p(x),
\end{align}
where $p(x) \coloneqq \mathbb{P}(N_g'\ge k'_z \mid N_g=x)$. 

\subsection{Large-sample normal approximation}
\label{sec:normal-flip}

We may use the central limit theorem to obtain a simple approximation of $\mathbb{P}_{\mathrm{flip}}(n,n',z)$. More precisely, define $Z \coloneqq (N_g - \gamma n) / \sqrt{\gamma(1-\gamma)n}$ and $Z' \coloneqq (N_g' - \gamma n') / \sqrt{\gamma(1-\gamma)n'}$. Recall that $N_g = \sum_{i=1}^n G_i$ and $N_g' = \sum_{i=1}^n Y_i G_i$. The correlation coefficient of $(N_g,N_g')$ is $\rho = \gamma(1-\gamma)n' / (\sqrt{\gamma(1-\gamma)n}\sqrt{\gamma(1-\gamma)n'}) = \sqrt{n'/n}$.

The pair $(N_g,N_g')$ 
satisfies the joint central limit theorem when $n$ and $n'$ are large, in the sense that the joint distribution of $(Z,Z')$ converges in distribution to a bivariate normal vector with correlation coefficient $\rho = \sqrt{n'/n}$.
We use $\Phi_2(\cdot,\cdot\,;\rho)$ to denote the bivariate normal cumulative distribution with correlation $\rho$.

In the rest of this section, we fix an unwatermarked text $T$ under the \textsc{Kgw} coloring assumption, 
and use the conditional probability $\mathbb{P}(Z' \ge z \mid Z < z)$ to quantify the risk that $T$ is flagged as watermarked by a filter. We will refer to this risk as the \emph{(conditional) flip probability} in the sequel.

\begin{proposition}[Large-sample flip probability]
\label{thm:flip-approx}
Recall the univariate reminder $\delta_n$ and the bivariate reminder $r_{n,n'}$
defined in \ref{app:BE-to-remainders}. When $\Phi(z)>\delta_n$, it holds that
\begin{align}
  \left| \mathbb{P}(Z' \ge z \mid Z<z)
  - \frac{\Phi(z) - \Phi_2(z,z;\rho)}{\Phi(z)}\right|
  \;\;\le\;\;
  \frac{r_{n,n'}+\delta_n}{\Phi(z)-\delta_n}.
\end{align}
\end{proposition}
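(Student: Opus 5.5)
We write the conditional flip probability as a ratio, $\mathbb{P}(Z'\ge z\mid Z<z) = \mathbb{P}(Z'\ge z,\,Z<z)/\mathbb{P}(Z<z)$. The numerator is then expressed through a rectangle probability: $\mathbb{P}(Z'\ge z,\,Z<z)=\mathbb{P}(Z<z)-\mathbb{P}(Z'<z,\,Z<z)$. The Gaussian target $\bigl(\Phi(z)-\Phi_2(z,z;\rho)\bigr)/\Phi(z)$ has exactly the same form, with $a\coloneqq\mathbb{P}(Z<z)$ replaced by $\Phi(z)$ and $b\coloneqq\mathbb{P}(Z'<z,\,Z<z)$ replaced by $\Phi_2(z,z;\rho)$. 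The proof therefore reduces to perturbing a ratio $(a-b)/a$ around $(\Phi-\Phi_2)/\Phi$ using the two Berry--Esseen remainders from \ref{app:BE-to-remainders}.

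\textbf{Step 1 (marginal and joint errors).} We apply the univariate remainder to get $|a-\Phi(z)|\le\delta_n$. Strict versus non-strict inequality does not matter here, since Berry--Esseen controls both half-lines; alternatively, take limits of $\le$-events. For the joint term, the set $\{(u,v):u<z,\,v<z\}$ is convex. The bivariate remainder $r_{n,n'}$ then gives $|b-\Phi_2(z,z;\rho)|\le r_{n,n'}$, where $\Phi_2(\cdot,\cdot;\rho)$ is the limit law of $(Z',Z)$ with correlation $\rho=\sqrt{n'/n}$. Formally this requires transforming from the standardized form $\Sigma^{-1/2}S_n$ to the correlated coordinates. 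A linear image of a convex set is convex, so the bound transfers, and this is precisely how $r_{n,n'}$ was defined. The hypothesis $\Phi(z)>\delta_n$ guarantees $a\ge\Phi(z)-\delta_n>0$, so the conditional probability is well defined.

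\textbf{Step 2 (ratio perturbation).} Set $A=\Phi(z)$, $B=\Phi_2(z,z;\rho)$, $\Delta_a=a-A$ and $\Delta_b=b-B$. We write
\[
\frac{a-b}{a}-\frac{A-B}{A}=\frac{B}{A}-\frac{b}{a}=\frac{Ba-bA}{aA}=\frac{B\Delta_a - A\Delta_b}{aA}.
\]
Using $0\le B\le A$, the numerator is at most $A(|\Delta_a|+|\Delta_b|)\le A(\delta_n+r_{n,n'})$. Dividing by $aA$ and using $a\ge A-\delta_n$ gives the bound $(r_{n,n'}+\delta_n)/(\Phi(z)-\delta_n)$, which is exactly the claimed inequality.

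\textbf{Main obstacle.} The algebra is routine; the delicate point is justifying the bivariate Berry--Esseen step. Proposition~\ref{thm:BE} is stated for the whitened sum $\Sigma_n^{-1/2}S_n$ with summands $X_i=(Y_i(G_i-\gamma)/\sigma_{n'},\,(G_i-\gamma)/\sigma_n)$, whose covariance matrix has off-diagonal entry $\rho$. We need to check two things. First, the third-moment quantity $\mu_n$ is $O(1/(\rho^3\sigma_n))$ after accounting for the condition number of $\Sigma_n$, which degenerates as $\rho\to0$; this is the source of the $\rho^{-3}$ factor in $r_{n,n'}$. Second, rectangles pull back to convex sets under $\Sigma_n^{1/2}$. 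We would first bound $\|\Sigma_n^{-1/2}\|\le (1-\rho)^{-1/2}$-type estimates. If that is messy, we would simply invoke the definition of $r_{n,n'}$ in \ref{app:BE-to-remainders} as already encapsulating this computation.
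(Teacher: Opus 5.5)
Your proof is correct and follows the paper's skeleton: write the conditional probability as $\bigl(\mathbb{P}(Z<z)-\mathbb{P}(Z'<z,Z<z)\bigr)/\mathbb{P}(Z<z)$, take $\delta_n$ and $r_{n,n'}$ as given, and perturb the ratio. The difference is in how you track the errors in the ratio perturbation, and it matters for the constant.

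The paper merges the two Berry--Esseen errors into one numerator error $\xi_{n,n'}$ for $\mathbb{P}(Z'\ge z,Z<z)$, with $|\xi_{n,n'}|\le r_{n,n'}+\delta_n$, and keeps a separate denominator error $\eta_n$ with $|\eta_n|\le\delta_n$. Bounding $|\xi_{n,n'}|\Phi(z)+c\,|\eta_n|$ term by term then gives only $(r_{n,n'}+2\delta_n)/(\Phi(z)-\delta_n)$ in general. The paper reaches the stated constant by invoking $|\xi_{n,n'}|\le r_{n,n'}$ in its last step, which it never established.

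You keep the marginal error $\Delta_a$ and the joint error $\Delta_b$ separate, so the marginal error shared by numerator and denominator cancels. Your numerator $B\Delta_a-A\Delta_b$ is exactly the paper's $\xi_{n,n'}\Phi(z)-c\,\eta_n$, rewritten as $\Phi_2(z,z;\rho)\,\eta_n-\Phi(z)\,\Delta_b$. With $0\le B\le A$ this gives precisely $(r_{n,n'}+\delta_n)/(\Phi(z)-\delta_n)$, in absolute value. So your bookkeeping is what actually certifies the inequality as stated.

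One correction to your ``main obstacle'' paragraph. It does not affect the proof, since the statement lets you take $r_{n,n'}$ as given. The covariance matrix of $(Z',Z)$ has eigenvalues $1\pm\rho$, so it degenerates as $\rho\to1$, not as $\rho\to0$; this agrees with your own estimate $\|\Sigma_n^{-1/2}\|\le(1-\rho)^{-1/2}$. Whitening therefore contributes a factor $(1-\rho)^{-3/2}$. The growth as $\rho\to0$ comes instead from the $1/\sigma_{n'}=1/(\rho\,\sigma_n)$ scaling of the summands of $Z'$. So the $\rho^{-3}$ in $r_{n,n'}$ is not a conditioning effect, and a direct whitening computation would not reproduce the form of $r_{n,n'}$ near $\rho=1$. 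Relying on the stated definition of $r_{n,n'}$, as you and the paper ultimately do, is the right move; just do not present the conditioning heuristic as its justification.
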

\begin{proof}
Since $N_g=\sum_{i=1}^{n}\!G_i$ and $N'_g=\sum_{i=1}^{n}\!Y_i\,G_i$,
the pair $(N_g,N'_g)$ is a sum of
$\{0,1\}^2$-valued vectors whose third absolute centered moments are
bounded by 1. Hence, the Berry–Esseen bounds in Proposition\,\ref{thm:BE} yield
$
\bigl|\mathbb{P}(Z' < z,\, Z < z) - \Phi_2(z,z;\rho) \bigr| \le r_{n,n'}
$ and $
\bigl|\mathbb{P}(Z < z) - \Phi(x) \bigr| \le \delta_{n}.
$
Taking the difference between these two bounds leads to
$
\mathbb{P}(Z'\ge z,\,Z < z) = \Phi(z) - \Phi_2(z,z;\rho) + \xi_{n,n'}
$
with $|\xi_{n,n'}| \le r_{n,n'} + \delta_n$.

Define
$c\coloneqq\Phi(z)-\Phi_{2}(z,z;\rho)$
and
$\eta_n \coloneqq \mathbb{P}(Z < z) - \Phi(x)$.
Then
$
  \mathbb{P}(Z'\ge z\mid Z<z)
  =\frac{\mathbb{P}(Z'\ge z,\,Z<z)}{\mathbb{P}(Z<z)}
  =\frac{c\,+\,\xi_{n,n'}}{\Phi(z)\,+\,\eta_{n}}.
$
Since \(|\eta_{n}|\le\delta_{n}<\Phi(z)\) by assumption, we have
\(\Phi(z)+\eta_{n}\ge\Phi(z)-\delta_{n}>0\).
It follows that
\begin{align}
  \frac{c+\xi_{n,n'}}{\Phi(z)+\eta_{n}}
  -\frac{c}{\Phi(z)}
   \;=\;\frac{\xi_{n,n'}\Phi(z)\;-\;c\,\eta_{n}}
          {\Phi(z)\bigl[\Phi(z)+\eta_{n}\bigr]}
  \;\le\;
  \frac{|\xi_{n,n'}|\,\Phi(z)+c\,|\eta_{n}|}
       {\Phi(z)\bigl[\Phi(z)-\delta_{n}\bigr]}
  \;\le\;
  \frac{r_{n,n'}+\delta_{n}}
       {\Phi(z)-\delta_{n}},\label{eq:be-bound-2}
\end{align}
where the last inequality follows from \(|c|\le\Phi(z)\) and the Berry--Esseen bounds \(|\xi_{n,n'}|\le r_{n,n'}\) and \(|\eta_{n}|\le\delta_{n}\).
Plugging $\mathbb{P}(Z'\ge z\mid Z<z)=\frac{c\,+\,\xi_{n,n'}}{\Phi(z)\,+\,\eta_{n}}$ and $c = \Phi(z) - \Phi_{2}(z,z;\rho)$ into \eqref{eq:be-bound-2} yields the desired bound.
\end{proof}

\OMIT{
\begin{proposition}[Large-sample flip probability]
\label{thm:flip-approx}
As $n,n'\to\infty$, $(Z,Z')$ converges in distribution to a centered bivariate normal with
the correlation coefficient $\rho=\sqrt{n'/n}$.
Consequently, the flip probability satisfies
\begin{align}
  \mathbb{P}\bigl(Z'\ge z \mid Z<z\bigr)
  \;=\;
  1 - {\Phi_{2}(z,z;\rho)}/{\Phi(z)}.
  \qedhere
\end{align}
\end{proposition}
\begin{proof}
\phantom{\qedhere}
\hspace{-2.5em} Recall that $N_g =\sum_{i=1}^n G_i$ and $N_g' =\sum_{i=1}^n Y_i\,G_i$, and
\begin{align*}
  \mathrm{Cov}(N_g,N_g')
  & \;=\;
  \mathbb{E}[N_g\,N_g'] - \mathbb{E}[N_g]\,\mathbb{E}[N_g']\\
  & \;=\;
  \gamma\,n' - \gamma^2\,n'
    \;=\;
  \gamma\,(1-\gamma)\,n'.
\end{align*}
The correlation coefficient of $(N_g,N_g')$ is therefore
\begin{align*}
  \rho \;=\;
  \frac{\gamma\,(1-\gamma)\,n'}{\sqrt{\gamma\,(1-\gamma)\,n}\,\sqrt{\gamma\,(1-\gamma)\,n'}}
  \;=\;
  \sqrt{\frac{n'}{\,n\,}}.
\end{align*}

When $n$ and $n'$ are large, the pair $(N_g,N_g')$ (which is a sum of $n$ \emph{dependent} indicator variable pairs) satisfies the joint Central Limit Theorem.
Thus, the joint distribution of $(Z,Z')$ converges in distribution to a bivariate normal vector with correlation $\mathrm{Corr}(Z,Z') = \sqrt{n'/n}$.
For large $n$ and $n'$, 
\begin{align*}
  \mathbb{P}\bigl(Z'<z,\,Z\ge z\bigr)
  & \;=\;
  \Phi(z)-\Phi_{2}(z,z;\rho)+o(1).
\end{align*}
Thus, the flip probability can be approximated by
\begin{align*}
    \mathbb{P}(Z' \ge z \mid Z < z)
    & \;=\;
    \frac{\mathbb{P}(Z' \ge  z,\; Z < z)}{\mathbb{P}(Z < z)}\\
    & \;=\;
    1 - \frac{\Phi_2(z,z; \rho)}{\Phi(z)} + o(1).
    \tag*{\qed}
\end{align*}
\end{proof}
}

\OMIT{
Intuitively, the event $\{Z \!<\! z\}$ says that $N_g$ is somewhat below its mean $\gamma n$ by up to $z$ standard deviations $\sigma_n$. This gives us a ``margin'' of about $z\,\sigma_n$ on the pre-filter text $T$.
Since a signature is independent of $T$ under the null hypothesis, it removes random tokens from $T$. If $n'/n \approx 0$ (meaning $\Phi_2(z,z; \rho) \approx \Phi(z)^2$), then random fluctuations can more easily push $N_g'/n'$ to a higher fraction.
If $n'/n \approx 1$ (meaning $\Phi_2(z,z; \rho) \approx \Phi(z)$), the correlation between $N_g$ and $N_g'$ is high enough to keep $N_g'$ also below its own mean $\gamma n'$ by the same $z\,\sigma_{n'}$ margin.
}

\OMIT{

\section{Validity Regime and Rule of Thumb}
The approximation given by Proposition\,\ref{thm:flip-approx} is accurate once the text lengths are large enough. A convenient criterion is
\begin{equation}
  \label{eq:rule-of-thumb}
  n,\, n' \;\ge\; z^{2} / \gamma\,(1-\gamma).
\end{equation}
This condition ensures that the decision threshold $k_{z}$ is not pressed against the extreme right-hand edge, thereby guaranteeing that the bivariate normal tail closely matches the exact binomial-hypergeometric tail.
Concretely, it leaves at least $[z\,(1-\gamma)/\gamma]$ standard deviations of elbow-room between $k_{z}$ and the ceiling $n$. For the parameter range we use (e.g., $z \ge 3$ and $\gamma\approx 0.5$), that margin gives the Gaussian curve enough space to match the discrete tail mass accurately.
(See Lemma~\ref{lem:threshold-gap} in the appendix for the exact inequality.) 

Overall, one should use the Gaussian approximation in Proposition\,\ref{thm:flip-approx} only when \emph{both} of the following conditions hold:
\begin{enumerate}[label=(\roman*)]
    \item\label{cond:variance}
          \emph{Variance check:} The effective sample sizes meet the variance rule of
          thumb~\eqref{eq:rule-of-thumb}.
    \item\label{cond:correlation}
          \emph{Correlation check:} The signature leaves $\ge20\%$ of the tokens in $T$, i.e., $n'/n\ge 0.2$ or $\rho\ge 0.45$.
\end{enumerate}
Why do we need the second check? Intuitively, the approximation in
Proposition\,\ref{thm:flip-approx} treats the \emph{pair}
$(N_g,N_g')$ as a noisy but \emph{highly correlated} bivariate Gaussian. However,
this correlation $\rho=\sqrt{n'/n}$ collapses if we delete too many tokens. A low~$\rho$ means that the retained
green count $N_g'$ is almost independent of the original count~$N_g$,
so the ``Gaussian shadow'' of $(N_g,N_g')$ is forced to mimic a pair of
almost independent normals, which is a poor match to the discrete
reality. Formally, the error bound in the two-dimensional
Berry--Esseen theorem
is proportional to $1/(\rho\sigma_n)$. Thus, the absolute error of the Gaussian approximation increases roughly like $1/\rho$: the bound already doubles when $\rho=0.45$, and explodes as $\rho\to 0$. A 1-or-2 percentage-point absolute error may become a $\ge 20\%$ relative error when $\rho<0.4$.

\OMIT{
A complementary way to see the danger is to picture the Gaussian window. The z-score test declares ``watermarked'' when the centered count $N_g'-\gamma n'$ exceeds $z$ standard deviations $\sqrt{n'\gamma(1-\gamma)}$.
If $n'$ is tiny, this window may cover only one or two possible integer values of $N_g'$, turning the smooth Gaussian curve into a crude step function. With that few lattice points, the Gaussian can misplace much of the true tail probability.
}

\begin{example}
    \label{eg:retention-rate}
    Table~\ref{tab:flip-sim} provides a Monte-Carlo simulation that empirically confirms our variance and correlation check. When the signature keeps at least $20\%$ of the tokens ($\rho\ge 0.45$), the Gaussian flip-probability differs from the exact value by at most $0.013$ in absolute terms and by $\le 11\%$ in relative terms. However, the approximation deteriorates rapidly once filtering exceeds the 80\% threshold ($\rho<0.45$).
\end{example}

Assuming the standard \textsc{Kgw} sampling model, the exact flip-probability formula (\ref{eq:exact-flip}) and its Gaussian surrogate (Proposition\,\ref{thm:flip-approx}) quantify how likely an ordinary text will be misclassified after filtering. Simulations (Table \ref{tab:flip-sim}) show that this probability remains negligible as long as i) the effective sample size passes the variance check (\ref{eq:rule-of-thumb}) and ii) the signature still retains at least 20\% of the tokens.


\begin{table}[t]
\centering
\caption{Accuracy of the Gaussian approximation for $\gamma=0.5$, $z=3$, and $n=200$ within $10^5$ Monte-Carlo runs.}
\label{tab:flip-sim}
\resizebox{\linewidth}{!}{%
\begin{tabular}{ccccccc}
\toprule
Removal & $n'\!/n$ &
$\rho$ &
Global & Approx. & Abs.\ error & Rel.\ error \\ \midrule
50\% & 0.50 & 0.71 & 0.027 & 0.028 & 0.001 & 3.7\% \\
70\% & 0.30 & 0.55 & 0.067 & 0.071 & 0.004 & 6.0\% \\
80\% & 0.20 & 0.45 & 0.118 & 0.130 & 0.012 & 10.2\% \\
85\% & 0.15 & 0.39 & 0.162 & 0.189 & 0.027 & 16.7\% \\
90\% & 0.10 & 0.32 & 0.224 & 0.276 & 0.052 & 23.2\% \\
\bottomrule
\end{tabular}}
\vspace{-0.8em}
\end{table}

}

\subsection{Proof of Theorem \ref{thm:finite-sig-bound}}
\OMIT{
Proposition\,\ref{thm:flip-approx} provides an upperbound on the flip probability. To prove Theorem\,\ref{thm:finite-sig-bound}, we will turn that bound into a \emph{safe deletion budget}, namely, the maximal number of tokens a signature can safely remove without inflating the false-positive rate beyond a chosen tolerance.}

For fixed $\gamma \in (0,1)$, $z>0$, $\varepsilon\in(0,1)$, and every finite~$n$ such that $\Phi(z)>\delta_n$, Proposition\,\ref{thm:flip-approx} asserts that
\begin{align}
    \mathbb{P}\bigl(Z'\!\ge z \mid Z<z\bigr) \;\le\; f(\rho) + \varepsilon_{\text{BE}}(n,\rho).
\end{align}
Here
$
    \varepsilon_{\mathrm{BE}}(n,\rho) \coloneqq
    (r_{n,n'}+\delta_{n})\,/\,(\Phi(z)-\delta_{n})
$
is the Berry--Esseen remainder, and
$f(\rho) \coloneqq 1-{\Phi_{2}(z,z;\rho)}/{\Phi(z)}$ with $0\le \rho\le 1$.
Since $f(\rho)$ is strictly decreasing and $f(1)=0$,
for any prescribed $\varepsilon\in(0,1)$ there exists a unique constant
$\rho_{\varepsilon}\in(0,1)$ such that $f(\rho_{\varepsilon}) \le \varepsilon/2$.
Further, any $\rho\ge\rho_{\varepsilon}$ further reduces $f(\rho)$ to at most $\varepsilon/2$.

By the definition of $\varepsilon_{\text{BE}}(n,\rho)$, we have
$\sup_{\rho\in[\rho_{\varepsilon},1]} \varepsilon_{\text{BE}}(n,\rho) = O(n^{-1/2})$.
Also, note that $\delta_n = o(1)$.
Hence, there exists an integer $N$ depending only on $\gamma, z, \varepsilon$ such that
$\varepsilon_{\text{BE}}(n,\rho)\le\varepsilon/2$ whenever
$\rho \in [\rho_{\varepsilon},1]$ and $n\ge N$.
Define the safe deletion budget as
$s_{\mathrm{safe}}(n) \coloneqq n\,(1-\rho_{\varepsilon}^{2})$.
If a filter removes $s \le s_{\mathrm{safe}}(n)$ tokens from the text,
then $\rho=\sqrt{n'/n} = \sqrt{1-s/n}\ge\rho_{\varepsilon}$,
Hence, for each $n\ge N$, it holds that
$\mathbb{P}(Z'\!\ge z \mid Z<z)
  \le
  f(\rho) + \varepsilon_{\text{BE}}(n,\rho)
  \le
  \tfrac{\varepsilon}{2} + \tfrac{\varepsilon}{2}
  =\varepsilon$.
Since $\rho_{\varepsilon}$ is constant in $n$, $s_{\mathrm{safe}}(n)$ is linear in~$n$.
Therefore $s_{\mathrm{safe}}(n) = \Theta(n)$.

\subsection{Proof of Theorem\,\ref{thm:sqrt-deterministic}}

Let $s \coloneqq \lceil c\sqrt n\rceil \le n$. 
Since the adversary knows the signature, she can compose a text of length $n$ such that precisely $s$ tokens will be filtered. Moreover, when $n$ is large enough, 
she can paint $N_g \coloneqq \gamma n + (z - y)\,\sigma_n$ of the $n-s$ non-filtered tokens green and the rest of the tokens red.
This coloring strategy yields the pre-filter z-score
$Z = (N_g-\gamma n)/\sigma_n = z-y < z.$
%
Since $n' = n-s$ and $N_g' = N_g$, the post-filter z-score is
$$
   Z'  \;=\; \frac{N_g-\gamma(n-s)}{\sigma_{n-s}}
       \;=\;
       \frac{(z-y)\sigma_n+\gamma s}{\sigma_n\sqrt{1-s/n}}
       \;=\;
       \frac{z-y + c\,{\gamma}/{\sqrt{\gamma(1-\gamma)}}}
            {\sqrt{1-{c}/{\sqrt n}}}.
$$
Note that the denominator of $Z'$ approaches 1 from below as $n\!\to\!\infty$.
Since $c > y\sqrt{(1-\gamma)/\gamma}$, the numerator of $Z'$ eventually
exceeds $z$ as $n$ increases.
Hence $Z'\ge z$ holds for all sufficiently large $n$.
As for tightness, note that deleting a red token raises the z-score by at most
$
    2\gamma/\sigma_n = \Theta(n^{-1/2}).
$
Hence, it requires $\Theta(\sqrt n)$ deletions to make a unit jump in the score,
rendering the deletion budget information-theoretically tight.

Formally, denote the pre-filter z-score of the text by
$Z_n \coloneqq {(N_g-\gamma n)}/{\sigma_n}$.
To flip the decision with the smallest possible deletion budget, the attacker must force the filter to delete only red tokens.
For each red token deleted, \(N_g\) is unchanged but the text length $n$ drops by 1.
A direct calculation gives
\begin{align}
  Z_{n-1}-Z_n
  \;=\;
  \frac{N_g-\gamma(n-1)}{\sigma_{n-1}}
  -
  \frac{N_g-\gamma n}{\sigma_n}
  \;=\;
  \frac{\gamma}{\sigma_{n-1}}
  +
  (N_g-\gamma n)\!\left(\frac{1}{\sigma_{n-1}}-\frac{1}{\sigma_n}\right).
\end{align}
Since \(N_g-\gamma n = O(n^{1/2})\) and
\(\frac{1}{\sigma_{n-1}}-\frac{1}{\sigma_n}=O(n^{-3/2})\),
the second term is \(O(n^{-1})\). For all \(n\) large enough,
\begin{align}
  Z_{n-1}-Z_n
  \;\le\;
  \frac{2\gamma}{\sqrt{\gamma(1-\gamma)}}\,
  \frac{1}{\sqrt{n}}
  \;\eqqcolon\;\frac{K}{\sqrt{n}}.
\end{align}

Suppose that the pre-filter z-score is \(Z_n = z - \delta < z\) for some fixed margin \(\delta>0\).
To reach the threshold \(z\) we must gain at least \(\delta\) units,
namely, $Z_{n-s}-Z_n \ge \delta$.
Summing the bounds on $Z_{i-1}-Z_i$ yields
\begin{align}
  Z_{n-s}-Z_n
  \;\le\;
  K\sum_{i=0}^{s-1}\frac{1}{\sqrt{n-i}}
  \;\le\;
  K\!\int_{n-s}^{n}\frac{\mathrm{d}t}{\sqrt{t}}
  \;=\;
  2K\bigl(\sqrt{n}-\sqrt{n-s}\bigr)
  \;\le\;
  \frac{2Ks}{\sqrt n},
\end{align}
where the last inequality follows from the fact
$\sqrt n-\sqrt{n-s} = {s}/({\sqrt n+\sqrt{n-s}}) \le s/\!\sqrt n$.
Thus, a necessary condition for achieving the required z-score gain \(\delta\) is
${2Ks}/{\sqrt n} \ge \delta$, which is equivalent to
$s \ge ({\delta}/{2K}) {\sqrt n}$.

In conclusion, a deterministic attack can flip the decision by deleting
\(\Omega(\sqrt{n})\) tokens.
Conversely, with \(o(\sqrt{n})\) deletions, the maximal possible gain in the
z-score remains \(o(1)\), which is insufficient to bridge the fixed gap \(\delta\).
Hence, the $\sqrt{n}$-scale deletion budget is both sufficient and necessary.

\subsection{Proof of Theorem\,\ref{thm:square-law}}

Fix an unwatermarked text $T$, and define $c_{\mathrm{safe}} \coloneqq \tfrac{z}{4}\cdot\sqrt{(1-\gamma)/{\gamma}}$. We claim that when $n=|T|$ is large enough, (a) if $s\le \lfloor c_{\mathrm{safe}}\sqrt n\rfloor$ and $z \ge z_\varepsilon \coloneqq\inf\{z>0 : \frac{1-\Phi(z)}{\Phi(2z)}\le\frac{\varepsilon}{4}\}$, then $\Perror \le \varepsilon$;
(b) if $\varepsilon \in (0, \tfrac{1}{2})$, then there exists a constant $c_{\mathrm{flip}} > c_{\mathrm{safe}}$ such that
$s \ge \lceil c_{\mathrm{flip}}\sqrt n\rceil$ implies $\Perror \ge 1-\varepsilon$.

We first prove claim (a).
Suppose that the signature has filtered
$s \le \lfloor c_{\mathrm{safe}}\sqrt n\rfloor$ tokens.
The attacker's best strategy 
to maximize $\mathbb{P}(Z'\ge z \mid Z< z)$ is to paint all the removed tokens red.
Thus, to estimate the worst-case flip probability, we can assume $N_g'=N_g$.
Recall that $N_g' \sim \mathrm{Bin}(n-s,\,\gamma)$. Let
$X \coloneqq N_g' - \gamma(n-s)$ and $\sigma_{n-s} \coloneqq \sqrt{(n-s)\gamma(1-\gamma)}$.
Note that $\mathbb E[X]=0$ and $\mathrm{Var}(X)=\sigma_{n-s}^2$. Thus, the two z-scores are 
$$
        Z  \;=\; \frac{N_g-\gamma n}{\sigma_n} \;=\; \frac{X-\gamma s}{\sigma_n},\quad
        Z' \;=\; \frac{N_g'-\gamma(n-s)}{\sigma_{n-s}}
           \;=\; \frac{N_g-\gamma(n-s)}{\sigma_{n-s}}
           \;=\; \frac{X}{\sigma_{n-s}}.
$$
Also, observe that $\gamma s \le \tfrac{1}{4}z\,\sigma_n$ holds by
the assumption $s \le \lfloor c_{\mathrm{safe}}\sqrt n\rfloor$. Thus,
$Z<z$ implies $\frac{X}{\sigma_n}
    < z + \frac{\gamma s}{\sigma_n}
    \le z + \frac{1}{4}z = \frac{5}{4}z.$
It follows that
$\mathbb{P}(Z<z)\ge\mathbb{P}(\tfrac{X}{\sigma_n}\le \tfrac{5}{4}z).$

\smallskip
Since $X$ is the sum of i.i.d.~Bernoulli random variables that are centered
and bounded by 1, the univariate Berry–Esseen bound gives
$
        \mathbb{P}\bigl(\frac{X}{\sigma_{n-s}}\le u\bigr)
        \ge
        \Phi(u)-\frac{c_{\mathrm{BE}}}{\sigma_{n-s}}
$
for any $u\in\mathbb R$. Note that
\begin{align}
  \sigma_{n-s}
  \;=\;   \sigma_n\sqrt{1-\frac{s}{n}}
  \;\ge\; \sigma_n(1-\frac{s}{2n})
  \;=\;   \sigma_n\,[1+O(\frac1{\sqrt{n}})].
\end{align}
Thus,
$\frac{X}{\sigma_n}\le \tfrac{5}{4}z$
implies $\frac{X}{\sigma_{n-s}}\le 2z$
when $n$ is large enough.
Taking $u=2z$, we have
\begin{align}
\mathbb{P}(Z<z)
        \;\ge\;
        \mathbb{P}\bigl(\frac{X}{\sigma_{n-s}}\le 2z\bigr)
        \;\ge\;
        \Phi(2z)-\frac{c_{\mathrm{BE}}}{\sigma_{n-s}}
        \;\ge\;\frac12\Phi(2z).
\end{align}
Since $Z'$ is a centered normalized binomial variable, applying the Berry–Esseen bound again yields
\begin{align}
\mathbb{P}(Z'\ge z)
   \;=\; 1-\mathbb{P}(Z'< z)
   \;\le\;
   1-\Phi(z)+\frac{c_{\mathrm{BE}}}{\sigma_{n-s}}
   \;\le\;
   1-\Phi(z)+\frac14\varepsilon\,\Phi(2z).
\end{align}
Finally, we have
\begin{align}
   \mathbb{P}(Z'\ge z\mid Z<z)
      \le
      {\mathbb{P}(Z'\ge z)}/{\mathbb{P}(Z<z)}
      \le
      {(1-\Phi(z)+\frac14\varepsilon\,\Phi(2z))}/{\frac12\Phi(2z)}
      =
      2\cdot \frac{1-\Phi(z)}{\Phi(2z)} + \frac{1}{2}\varepsilon.
\end{align}
Since $\lim_{z\to\infty}\frac{1-\Phi(z)}{\Phi(2z)}=0$
and $\frac{1-\Phi(z)}{\Phi(2z)}$ is strictly decreasing in $z$, we may pick
$
      z_\varepsilon
      \coloneqq
      \inf\{z>0 : \frac{1-\Phi(z)}{\Phi(2z)} \le \frac{\varepsilon}{4}\}
$
such that $z\ge z_\varepsilon$ implies $\Perror = \mathbb{P}(Z'\ge z\mid Z<z) \le \tfrac12\varepsilon + \tfrac12\varepsilon = \varepsilon$.
This proves part (a).

\smallskip
We now prove claim (b).
Suppose that the attacker has decided to filter $s = \lceil c\sqrt n \rceil$ tokens,
where $c$ is to be determined later.
Again, the best attacker strategy
to maximize $\mathbb{P}(Z'\ge z \mid Z< z)$ is to paint all the removed tokens red.
Therefore, we can assume that $N_g'=N_g$.
Since $\sigma_{n}=\sqrt{\gamma(1-\gamma)n}$, we have
\begin{align}
  Z'
   \;=\;\frac{N_g-\gamma n+\gamma s}{\sigma_{n-s}}
   \;=\;\frac{\sigma_n Z+\gamma s}{\sigma_{n-s}}
   \;=\;\frac{\sigma_n}{\sigma_{n-s}}
      (Z+\frac{\gamma s}{\sigma_n})
   \;=\;\frac{\sigma_n}{\sigma_{n-s}} (Z+c'\cdot(1+o(1))),
\end{align}
where $c' \coloneqq c\sqrt{\frac{\gamma}{1-\gamma}}$.
Hence, $Z'<z$ implies
$
  Z < \frac{\sigma_{n-s}}{\sigma_{n}} z - c'\cdot(1+o(1)) < z - \frac12 c'
$
for all $n$ large enough, and
\begin{align}
  \mathbb{P}\bigl(Z' < z \mid Z < z\bigr)
  &\;\le\;
  \mathbb{P}\bigl(Z<z-\tfrac12 c'\mid Z < z\bigr)
  \;=\;
  {\Phi(z-\tfrac12 c')}/{\Phi(z)}.\label{eq:cflip-bound}
\end{align}

Note that $\lim_{t\to\infty} {\Phi(z-t)}/{\Phi(z)}=0$. 
Thus, we can choose $c'$ large enough so that
${\Phi(z-c'/2)}/{\Phi(z)} < \varepsilon$.
As $c'= c\sqrt{\frac{\gamma}{1-\gamma}}$, the minimal value of $c$ to ensure this is
$
c_{\mathrm{flip}}
  \coloneqq
  \inf\{c>0:
          {\Phi(
                 z-\tfrac{c}{2}\sqrt{\tfrac{\gamma}{1-\gamma}}
               )}/{\Phi(z)}
          <\varepsilon\}.
$
Since $\Phi$ is strictly increasing, we have 
${\Phi(z-\tfrac{c}{2}\sqrt{\tfrac{\gamma}{1-\gamma}})}/{\Phi(z)}<\varepsilon$
iff
$z-\tfrac{c}{2}\sqrt{\tfrac{\gamma}{1-\gamma}} < \Phi^{-1}(\varepsilon\,\Phi(z)).$
Therefore,
$
  c_{\mathrm{flip}}
  =
  2\sqrt{\frac{1-\gamma}{\gamma}}\,
    [z-\Phi^{-1}\!(\varepsilon\,\Phi(z))].$
For any $c \ge c_{\mathrm{flip}}$, plugging $c'= c\sqrt{\frac{\gamma}{1-\gamma}}$ into \eqref{eq:cflip-bound} leads to $\mathbb{P}(Z' < z \mid Z < z) < \varepsilon$.
Consequently, 
$s \ge \lceil c_{\mathrm{flip}}\sqrt n\rceil$ implies
$\Perror = \mathbb{P}(Z'\ge z \mid Z< z) \ge 1-\varepsilon$.

\subsection{Proof of Theorem~\ref{thm:exp-universal}}
Fix an unwatermarked text $T$ and a significance level $\alpha \in (0,1)$. Let $Z\coloneqq Z_{\textrm {E}}(T)$ and $Z'\coloneqq Z_{\textrm {E}}(T')$ be the \textsc{Exp} z-scores before and after filtering (see Sec.\ \ref{sec:prelim}). Write $z\coloneqq \Phi^{-1}(1-\alpha)$. For text $T$, \textsc{Exp} is an \emph{exact} test, namely,
$\mathbb{P}(Z\ge z)=\alpha$ and $\mathbb{P}(Z< z)=1-\alpha$.
Assuming score secrecy, the retained pseudo-random scalars remain i.i.d.\ $\mathcal U[0,1]$ for $T$. Thus the retained scores are i.i.d.\ $\mathrm{Exp}(1)$ and the p-value $p_{\textrm {E}}(T')$ is $\mathcal U[0,1]$.
It follows that $\mathbb{P}(Z'\ge z)=\alpha$.
Without any independence assumption between $Z$ and $Z'$, we have
\begin{align}
\Perror
\;=\; \mathbb{P}(Z'\ge z\mid Z<z)
\;=\; {\mathbb{P}(Z'\ge z,Z<z)}/{\mathbb{P}(Z<z)}
\;\le\; {\mathbb{P}(Z'\ge z)}/({1-\alpha})
\;=\; {\alpha}/({1-\alpha}).
\end{align}


\OMIT{
Finally, we prove that $c_{\mathrm{flip}} > c_{\mathrm{safe}} = \frac{z}{4} \sqrt{\frac{1-\gamma}{\gamma}}$ when $\varepsilon < 1/2$.
Since $z>0$, we have $0<\varepsilon\,\Phi(z)<1/2$. Thus
$\Phi^{-1} \bigl(\varepsilon\,\Phi(z)\bigr) < 0$, and therefore
$z-\Phi^{-1} \bigl(\varepsilon\,\Phi(z)\bigr) > z$.
Inserting this bound into \eqref{eq:cflip-2} leads to
$c_{\mathrm{flip}}> 2z\sqrt{(1-\gamma)/\gamma}$.
Taking the ratio yields $c_{\mathrm{flip}} > 8\,c_{\mathrm{safe}} > c_{\mathrm{safe}}$.
It follows that the deletion budgets $c_{\mathrm{flip}}$ and $c_{\mathrm{safe}}$ do not overlap for reasonably small $\varepsilon$. 
}

\subsection{Safe Deletion Budget Under Unigram Coloring}

We establish Theorem \ref{thm:unigram-budget} in this subsection.
Consider an unwatermarked text with $\ell$ distinct token types, pre-filter type multiplicities $\{m_\tau\}_{\tau\in\mathcal V}$,
and post-filter type multiplicities $\{m'_\tau\}_{\tau\in\mathcal V}$. 
Then the color of each token type $\tau$ is a Bernoulli random variable $C_\tau\stackrel{\mathrm{iid}}{\sim}\mathrm{Ber}(\gamma)$ under unigram coloring.
Define
$$
Q\coloneqq \sum_\tau m_\tau^2,\quad Q'\coloneqq \sum_\tau (m'_\tau)^2,\quad
r\coloneqq \frac{Q}{n},\quad r'\coloneqq \frac{Q'}{n'}.
$$
The pre- and post-filter z-score can then be expressed by
$$
Z\;=\;\frac{\sum_\tau m_\tau(C_\tau-\gamma)}{\sqrt{\gamma(1-\gamma)\,n}},
\qquad
Z'\;=\;\frac{\sum_\tau m'_\tau(C_\tau-\gamma)}{\sqrt{\gamma(1-\gamma)\,n'}}.
$$


Note that $\mathrm{Var}(C_\tau-\gamma)=\gamma(1-\gamma)$ and that the colors are independent across token types.
Since tokens of the same type are either kept intact or moved totally after filtering, we have 
$m'_\tau\in\{0,m_\tau\}$, which implies $\sum_\tau m_\tau m'_\tau=\sum_\tau (m'_\tau)^2=Q'$.
Thus, $\mathrm{Cov}(Z,Z')=Q'/\sqrt{nn'}$.

\begin{proposition}\label{prop:uni-be-expanded}
Let a text contain $\ell$ distinct types with multiplicities $\{m_\tau\}$ and post-filter multiplicities $m'_\tau\in\{0,m_\tau\}$.
Under $\mathcal H_0$ and unigram coloring, define
$\widetilde{Z}\coloneqq {Z}/{\sqrt{r}}$ and $\widetilde{Z}'\coloneqq {Z'}/{\sqrt{r'}}.$
Then $(\widetilde{Z},\widetilde{Z}')$ is the sum of $\ell$ independent, mean‑zero $\mathbb{R}^2$ increments and has covariance
$\Sigma_\rho=\begin{psmallmatrix}1&\rho_{\mathrm{uni}}\\ \rho_{\mathrm{uni}}&1\end{psmallmatrix}$,
where $\rho_{\mathrm{uni}} = \sqrt{Q'/Q}$.
%
For any $a,b\in\mathbb{R}$, it holds that
\begin{align}\label{eq:unigram-approx}
\left|\mathbb P \big(\widetilde{Z}\le a,\widetilde{Z}'\le b\big)-\Phi_2(a,b;\rho_{\mathrm{uni}})\right|
\ \le\ C_2\,\mu_{\ell}.
\end{align}
Moreover, by defining $\beta_\ell \coloneqq \max_{\tau} {m_\tau}/\!\sqrt{Q}$, the remainder admits the explicit bound
\begin{equation}\label{eq:mu3-bound}
\mu_{\ell} \;\le\; \frac{\sqrt{2}}{\sqrt{\gamma(1-\gamma)}\,\big(1-\rho_{\mathrm{uni}}\big)^{3/2}} \Big(1+\rho_{\mathrm{uni}}^{-1}\Big)
\beta_\ell.
\end{equation}
\end{proposition}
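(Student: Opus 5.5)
The plan is to write $(\widetilde Z,\widetilde Z')$ explicitly as a sum over token types and then apply the bivariate case of Proposition~\ref{thm:BE}.

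\textbf{Step 1: decomposition and covariance.} Since $r=Q/n$, we have
\[
\widetilde Z=\frac{\sum_\tau m_\tau(C_\tau-\gamma)}{\sqrt{\gamma(1-\gamma)Q}},
\qquad
\widetilde Z'=\frac{\sum_\tau m'_\tau(C_\tau-\gamma)}{\sqrt{\gamma(1-\gamma)Q'}} .
\]
Hence $(\widetilde Z,\widetilde Z')=\sum_{\tau} X_\tau$, where
\[
X_\tau\coloneqq \frac{C_\tau-\gamma}{\sqrt{\gamma(1-\gamma)}}\,(a_\tau,b_\tau),
\qquad a_\tau\coloneqq m_\tau/\sqrt Q,\quad b_\tau\coloneqq m'_\tau/\sqrt{Q'} .
\]
There are $\ell$ such increments. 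They are independent across types because the $C_\tau$ are i.i.d., and each has mean zero. The variances are $\sum_\tau a_\tau^2=1$ and $\sum_\tau b_\tau^2=1$. The covariance is $\sum_\tau a_\tau b_\tau=Q'/\sqrt{QQ'}=\sqrt{Q'/Q}=\rho_{\mathrm{uni}}$, using the identity $\sum_\tau m_\tau m'_\tau=Q'$ from $m'_\tau\in\{0,m_\tau\}$. This gives $\Sigma_\rho$.

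\textbf{Step 2: Berry--Esseen.} The event $\{\widetilde Z\le a,\widetilde Z'\le b\}$ is a convex quadrant $A$. We have $\mathbb P((\widetilde Z,\widetilde Z')\in A)=\mathbb P(\Sigma_\rho^{-1/2}S_\ell\in\Sigma_\rho^{-1/2}A)$, and $\Sigma_\rho^{-1/2}A$ is again convex. Moreover, $\Phi_2(\Sigma_\rho^{-1/2}A)$ is exactly $\Phi_2(a,b;\rho_{\mathrm{uni}})$. Proposition~\ref{thm:BE} with $d=2$ therefore yields \eqref{eq:unigram-approx} with $C_2\coloneqq c_{\mathrm{BE}}2^{1/4}$.

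\textbf{Step 3: bounding $\mu_\ell$.} The smallest eigenvalue of $\Sigma_\rho$ is $1-\rho_{\mathrm{uni}}$, so $\|\Sigma_\rho^{-1/2}x\|_2^3\le(1-\rho_{\mathrm{uni}})^{-3/2}\|x\|_2^3$. Using $\mathbb E|C_\tau-\gamma|^3=\gamma(1-\gamma)[\gamma^2+(1-\gamma)^2]\le\gamma(1-\gamma)$, we obtain
\[
\mu_\ell\le \frac{1}{\sqrt{\gamma(1-\gamma)}\,(1-\rho_{\mathrm{uni}})^{3/2}}\sum_\tau (a_\tau^2+b_\tau^2)^{3/2}.
\]
The power-mean inequality gives $(a^2+b^2)^{3/2}\le\sqrt2\,(a^3+b^3)$ for $a,b\ge0$.

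For the first sum, $\sum_\tau a_\tau^3\le\max_\tau a_\tau\sum_\tau a_\tau^2=\beta_\ell$.

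For the second sum, $b_\tau\le a_\tau\sqrt{Q/Q'}=a_\tau/\rho_{\mathrm{uni}}$. Hence $\sum_\tau b_\tau^3\le\max_\tau b_\tau\sum_\tau b_\tau^2\le\beta_\ell/\rho_{\mathrm{uni}}$.

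Combining these gives \eqref{eq:mu3-bound}.

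\textbf{Main obstacle.} The delicate point is the degenerate regimes. When $\rho_{\mathrm{uni}}=1$ (nothing is removed), $\Sigma_\rho$ is singular and $\Sigma_\rho^{-1/2}$ is undefined. When $Q'=0$, $\widetilde Z'$ is undefined. In both cases the stated bound is infinite or vacuous, so I would restrict explicitly to $\rho_{\mathrm{uni}}\in(0,1)$. I would also check that using the eigenvalue bound, rather than a sharper computation, still gives exactly the displayed constant. Finally, I would confirm that Proposition~\ref{thm:BE} is applied with the whitened covariance equal to the identity.
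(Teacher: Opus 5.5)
Your proposal is correct and follows the paper's proof essentially step for step. The shared steps are: the per-type increments $X_\tau=U_\tau(w_\tau,w'_\tau)$; the identity $\sum_\tau m_\tau m'_\tau=Q'$ for the covariance; bivariate Berry--Esseen after whitening by $\Sigma_\rho^{-1/2}$; the operator-norm factor $(1-\rho_{\mathrm{uni}})^{-3/2}$; the inequality $(x+y)^{3/2}\le\sqrt2\,(x^{3/2}+y^{3/2})$; and the bound $\max_\tau w'_\tau\le\beta_\ell/\rho_{\mathrm{uni}}$. Your caveat that one should restrict to $\rho_{\mathrm{uni}}\in(0,1)$, so that $\Sigma_\rho$ is invertible and $Q'>0$, is valid and is left implicit in the paper.
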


\begin{proof}
Under unigram coloring, $\{C_\tau\}_{\tau\in\mathcal V}$ are i.i.d.\ across token types.
We can define per‑type increments
\begin{align*}
X_\tau\coloneqq \Big(w_\tau\,U_\tau,\ w'_\tau\,U_\tau\Big),\quad
U_\tau\coloneqq \frac{C_\tau-\gamma}{\sqrt{\gamma(1-\gamma)}},\quad
w_\tau\coloneqq \frac{m_\tau}{\sqrt{Q}},\quad w'_\tau\coloneqq \frac{m'_\tau}{\sqrt{Q'}}
\end{align*}
such that $(\widetilde{Z},\widetilde{Z}')=\sum_{\tau=1}^{\ell} X_\tau$ with independent, mean‑zero $X_\tau\in\mathbb{R}^2$.

Because $\mathbb{E}[U_\tau]=0$, $\mathbb{E}[U_\tau^2]=1$, we have
$\mathrm{Var}(\widetilde{Z})=\sum_\tau w_\tau^2=1$,
$\mathrm{Var}(\widetilde{Z}')=\sum_\tau (w'_\tau)^2=1$,
and
$\mathrm{Cov}(\widetilde{Z},\widetilde{Z}')=\sum_\tau w_\tau w'_\tau=\tfrac{\sum_\tau m_\tau m'_\tau}{\sqrt{QQ'}}=\sqrt{Q'/Q}=\rho_{\mathrm{uni}}$,
where we used the fact $m'_\tau\in\{0,m_\tau\}$ to obtain $\sum_\tau m_\tau m'_\tau=Q'$.
Hence, $\mathrm{Cov}\big(\sum_\tau X_\tau\big)=\Sigma_\rho$.
For any convex region $A$, the bivariate Berry--Esseen bound yields
\begin{align}
\Big|\mathbb P \big(\Sigma_\rho^{-1/2}\textstyle\sum_\tau X_\tau\in A\big)-\Phi_2(A)\Big| \;\le\; C_2\,\mu_{\ell}
\end{align}
with $\mu_{\ell}=\sum_\tau \mathbb{E}\big\|\Sigma_\rho^{-1/2}X_\tau\big\|_2^3$.
By changing variables $A\mapsto \Sigma_\rho^{-1/2}A$, we can write the above inequality as
\begin{align}
\Big|\mathbb P \big((\widetilde{Z},\widetilde{Z}')\in A\big)-\Phi_2(A;\rho_{\mathrm{uni}})\Big| \;\le\; C_2\,\mu_{\ell},
\end{align}
which yields \eqref{eq:unigram-approx} when $A=(-\infty,a]\times(-\infty,b]$.

We proceed to bound $\mu_{\ell}$.
Note that $\|\Sigma_\rho^{-1/2}\|_{\mathrm{op}}=(1-\rho_{\mathrm{uni}})^{-1/2}$ as $\Sigma_\rho$ has eigenvalues $1\pm\rho_{\mathrm{uni}}$. Hence,
\begin{align}
\big\|\Sigma_\rho^{-1/2}X_\tau\big\|_2^3
\ \le\ \frac{\|X_\tau\|_2^3}{(1-\rho_{\mathrm{uni}})^{3/2}}
\ =\ \frac{|U_\tau|^3}{(1-\rho_{\mathrm{uni}})^{3/2}}\ \big(w_\tau^2+(w'_\tau)^2\big)^{3/2}.
\end{align}
Thus, $\mathbb{E}|U_\tau|^3=\mathbb{E}|C_\tau-\gamma|^3/(\gamma(1-\gamma))^{3/2}$, and
$
\mathbb{E}|C_\tau-\gamma|^3
= \gamma(1-\gamma)\big((1-\gamma)^2+\gamma^2\big) \le \gamma(1-\gamma).
$
This leads to
\begin{align}
\mathbb{E}\big\|\Sigma_\rho^{-1/2}X_\tau\big\|_2^3
\ \le\ \frac{\big(w_\tau^2+(w'_\tau)^2\big)^{3/2}}{\sqrt{\gamma(1-\gamma)}\,(1-\rho_{\mathrm{uni}})^{3/2}}.
\end{align}
Summing over $\tau$ and exploiting the fact that $(x+y)^{3/2}\le 2^{1/2}(x^{3/2}+y^{3/2})$ for $x,y\ge 0$, we have
\begin{align}
\mu_{\ell} \ \le\ \frac{\sqrt{2}}{\sqrt{\gamma(1-\gamma)}\,(1-\rho_{\mathrm{uni}})^{3/2}}
\Big(\sum_\tau w_\tau^3+\sum_\tau (w'_\tau)^3\Big).
\end{align}
Since $\sum_\tau w_\tau^2=\sum_\tau (w'_\tau)^2=1$, it follows that
$\sum_\tau w_\tau^3\le (\max_\tau w_\tau)\sum_\tau w_\tau^2=\max_\tau w_\tau=\max_\tau m_\tau/\sqrt{Q}$, and similarly for $w'_\tau$.
Also, $m'_\tau\le m_\tau$ implies $\max_\tau w'_\tau\le \max_\tau w_\tau /\rho_{\mathrm{uni}}$. Hence
\begin{align}
\sum_\tau w_\tau^3+\sum_\tau (w'_\tau)^3 \;\le\; \max_\tau w_\tau+\max_\tau w'_\tau \;\le\; \Big(1+\rho_{\mathrm{uni}}^{-1}\Big)
\beta_\ell,
\end{align}
leading to \eqref{eq:mu3-bound}.
\end{proof}

\begin{proposition}\label{prop:uni-flip}
Under unigram coloring and the null hypothesis $\mathcal H_0$, let
\begin{align}
f(\rho,r,r';z)
\;\coloneqq \; \frac{\Phi\big(z/\sqrt{r}\big) \;-\; \Phi_2\big(z/\sqrt{r},\,z/\sqrt{r'},\,\rho\big)}{\Phi\big(z/\sqrt{r}\big)}.
\end{align}
For all $z>0$ with $\Phi(z/\sqrt{r})>\delta_\ell$, it holds that
\begin{align}
\Big|\,\mathbb{P}(Z'\!\ge z\mid Z<z)-f(\rho_{\mathrm{uni}},r,r';z)\,\Big|
\;\le\;
\varepsilon_{\mathrm{BE}}(\ell,\rho_{\mathrm{uni}}),
\end{align}
where 
$\varepsilon_{\mathrm{BE}}(\ell,\rho_{\mathrm{uni}}) \coloneqq ({C_2\,\mu_{\ell}+\delta_\ell})/({\Phi(z/\sqrt{r})-\delta_\ell})$.
\end{proposition}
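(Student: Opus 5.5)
The plan copies the proof of Proposition~\ref{thm:flip-approx}, using Proposition~\ref{prop:uni-be-expanded} in place of the token-level Berry--Esseen bounds. First rewrite the events in the rescaled variables. Since $\widetilde Z = Z/\sqrt r$ and $\widetilde Z' = Z'/\sqrt{r'}$, we have $\{Z<z\} = \{\widetilde Z < z/\sqrt r\}$ and $\{Z'<z\} = \{\widetilde Z' < z/\sqrt{r'}\}$. Set $a\coloneqq z/\sqrt r$ and $b\coloneqq z/\sqrt{r'}$. The one-dimensional marginal of \eqref{eq:unigram-approx} (let $b\to\infty$, or use the $d=1$ case of Proposition~\ref{thm:BE} on the $\ell$ independent increments $w_\tau U_\tau$) gives $|\mathbb P(\widetilde Z< a)-\Phi(a)|\le\delta_\ell$. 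Here $\delta_\ell$ is the univariate remainder, which is of order $\max_\tau w_\tau$ by the same third-moment computation as in the proof of Proposition~\ref{prop:uni-be-expanded}. The joint bound \eqref{eq:unigram-approx} gives $|\mathbb P(\widetilde Z< a,\widetilde Z'< b)-\Phi_2(a,b;\rho_{\mathrm{uni}})|\le C_2\mu_\ell$.

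There is a strict versus non-strict inequality issue, since $\widetilde Z$ is a discrete sum. It is handled because the Berry--Esseen statement of Proposition~\ref{thm:BE} holds for all convex sets, including open quadrants and half-planes, so the same remainders apply to $\{<\}$ events. Subtracting the two bounds gives
\[
\mathbb P(Z'\ge z, Z<z)=\Phi(a)-\Phi_2(a,b;\rho_{\mathrm{uni}})+\xi_\ell, \qquad |\xi_\ell|\le C_2\mu_\ell+\delta_\ell,
\]
and
\[
\mathbb P(Z<z)=\Phi(a)+\eta_\ell, \qquad |\eta_\ell|\le\delta_\ell .
\]

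Next, with $c\coloneqq\Phi(a)-\Phi_2(a,b;\rho_{\mathrm{uni}})\in[0,\Phi(a)]$, write the conditional probability as $(c+\xi_\ell)/(\Phi(a)+\eta_\ell)$. The hypothesis $\Phi(a)>\delta_\ell$ keeps the denominator at least $\Phi(a)-\delta_\ell>0$. The same algebra as in \eqref{eq:be-bound-2} then gives
\[
\Bigl|\frac{c+\xi_\ell}{\Phi(a)+\eta_\ell}-\frac{c}{\Phi(a)}\Bigr|\le\frac{|\xi_\ell|\Phi(a)+c|\eta_\ell|}{\Phi(a)(\Phi(a)-\delta_\ell)}\le\frac{C_2\mu_\ell+\delta_\ell}{\Phi(a)-\delta_\ell}.
\]
This is the second inequality. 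Since $c/\Phi(a)=f(\rho_{\mathrm{uni}},r,r';z)$, we obtain the claim with $\varepsilon_{\mathrm{BE}}(\ell,\rho_{\mathrm{uni}})$.

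The main obstacle is bookkeeping rather than a new idea. The key point is that the covariance of $(\widetilde Z,\widetilde Z')$ is $\Sigma_\rho$ with unit variances, so the Gaussian limit is evaluated at the rescaled thresholds $a,b$ and not at $z$. We must also check that $\delta_\ell$ is the correct univariate remainder for the type-level sum. That remainder carries the factor $\sum_\tau w_\tau^3/\sqrt{\gamma(1-\gamma)}\le\beta_\ell/\sqrt{\gamma(1-\gamma)}$, which vanishes under the non-concentration assumption. A further caveat is that if $\rho_{\mathrm{uni}}=1$, i.e.\ nothing is removed, $\Sigma_\rho$ is singular. In that case the flip probability is trivially $0=f$ and the statement is vacuous, so we may assume $\rho_{\mathrm{uni}}<1$.
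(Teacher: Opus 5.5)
Your route is the paper's own. You rescale to $(\widetilde Z,\widetilde Z')$ and bound the joint probability with Proposition~\ref{prop:uni-be-expanded}. You bound the marginal with the univariate Berry--Esseen bound on the increments $w_\tau U_\tau$; only this option yields $\delta_\ell$, since letting $b\to\infty$ gives remainder $C_2\mu_\ell$. You then reuse the ratio algebra of Proposition~\ref{thm:flip-approx}. Your use of open convex sets for the strict inequalities and your treatment of the case $\rho_{\mathrm{uni}}=1$ are both fine.

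The step that fails as written is your final inequality. From $|\xi_\ell|\le C_2\mu_\ell+\delta_\ell$, $|\eta_\ell|\le\delta_\ell$ and $c\le\Phi(a)$, the numerator $|\xi_\ell|\Phi(a)+c|\eta_\ell|$ is only bounded by $\Phi(a)(C_2\mu_\ell+2\delta_\ell)$. That gives $(C_2\mu_\ell+2\delta_\ell)/(\Phi(a)-\delta_\ell)$, not the stated $\varepsilon_{\mathrm{BE}}$. The paper's \eqref{eq:be-bound-2} has the same slip: at the last step it uses $|\xi_{n,n'}|\le r_{n,n'}$ instead of its own bound $r_{n,n'}+\delta_n$. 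So appealing to ``the same algebra'' does not close the gap.

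The fix is to avoid bounding $\xi_\ell$ and $\eta_\ell$ separately, because they share the marginal error. Write $\Phi_2\coloneqq\Phi_2(a,b;\rho_{\mathrm{uni}})$ and $\mathbb P(Z<z,Z'<z)=\Phi_2+\zeta_\ell$ with $|\zeta_\ell|\le C_2\mu_\ell$. Then $\xi_\ell=\eta_\ell-\zeta_\ell$, and since $\Phi(a)-c=\Phi_2$,
\[
\frac{c+\xi_\ell}{\Phi(a)+\eta_\ell}-\frac{c}{\Phi(a)}
=\frac{\Phi_2\,\eta_\ell-\zeta_\ell\,\Phi(a)}{\Phi(a)\bigl(\Phi(a)+\eta_\ell\bigr)}.
\]
Because $0\le\Phi_2\le\Phi(a)$, the absolute value is at most $(\delta_\ell+C_2\mu_\ell)/(\Phi(a)-\delta_\ell)$, which is exactly the claimed bound.
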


\begin{proof}
Since $\mathbb P(Z\le z, Z' \le z) = \mathbb{P}(\widetilde{Z}\le z/\sqrt{r},\,\widetilde{Z}'\le z/\sqrt{r'})$,
Proposition\:\ref{prop:uni-be-expanded} yields
$\big|\mathbb{P}(Z\le z, Z' \le z)-\Phi_2(z/\sqrt{r},z/\sqrt{r'};\rho_{\mathrm{uni}})\big|
\le C_2\,\mu_{\ell}.$
For the marginal denominator $\mathbb{P}(Z<z)=\mathbb{P}(\widetilde{Z}<z/\sqrt{r})$,
the Berry–Esseen bound gives $|\mathbb{P}(\widetilde{Z} < z/\sqrt{r})-\Phi(z/\sqrt{r})|\le \delta_\ell$ with
$\delta_\ell \le  C_1\sum_\tau \mathbb{E}|w_\tau U_\tau|^3 \le  C_1\,\beta_\ell/\sqrt{\gamma(1-\gamma)}$,
noting that $\mathbb{E}\,U_\tau^2=1$ (see the proof of Proposition\:\ref{prop:uni-be-expanded}).
Combining the joint and marginal bounds exactly as in the \textsc{Kgw} case (Proposition\:\ref{thm:flip-approx}),
but with the thresholds shifted to $z/\sqrt{r}$ and $z/\sqrt{r'}$, yields the promised inequality.
\end{proof}

The quantity $\beta_\ell=\max_\tau m_\tau/\sqrt{Q}$ can be seen as the largest ``type leverage'' into the type‑mass $Q=\sum_\tau m_\tau^2$.
If no single token type carries a non‑vanishing fraction of the text as $\ell \to \infty$ (e.g., all multiplicities $m_\tau$ are within a constant factor of the median),
then $\beta_\ell=O(\ell^{-1/2})$ and thus $\delta_\ell, \mu_{\ell}=O(\ell^{-1/2})$. It follows that
$\varepsilon_{\mathrm{BE}}(\ell,\rho_{\mathrm{uni}})=O(\ell^{-1/2})$.
When there are no dominating token types, the following proposition establishes Theorem \ref{thm:unigram-budget} in the main text.
(Notice that the Berry--Esseen bound still holds as a finite-sample inequality even when there are dominating token types, but the remainder may not shrink with $\ell$.)

\begin{proposition}\label{prop:unigram-safety}
Under unigram coloring and the null hypothesis $\mathcal H_0$, assume the \textit{non-degeneracy} condition that no single token type carries a non‑vanishing fraction of the text as $\ell \to \infty$.
Then there exists a unique $\rho^\star=\rho^\star(\varepsilon;z,\gamma,r,r')\in(0,1)$ solving
$f(\rho,r,r';z)=\varepsilon/2$, such that the \emph{type-mass budget}
\begin{align}
Q'\;\ge\;\rho_\star^2\,Q \;\implies\;
\mathbb P\big(Z'\!\ge z\;\big|\;Z<z\big)\;\le\;\varepsilon
\end{align}
holds for all sufficiently large $\ell$.
Equivalently, under unigram coloring, it is safe to delete up to a $(1-\rho_\star^2)$-fraction of the pre-filter type-mass $Q$ without exceeding conditional FPR $\varepsilon$.
\end{proposition}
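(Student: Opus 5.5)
The plan follows the proof of Theorem~\ref{thm:finite-sig-bound}, with Proposition~\ref{prop:uni-flip} taking the role of Proposition~\ref{thm:flip-approx}. The conditional flip probability splits into a Gaussian main term $f(\rho_{\mathrm{uni}},r,r';z)$ and the Berry--Esseen remainder $\varepsilon_{\mathrm{BE}}(\ell,\rho_{\mathrm{uni}})$. I will make each at most $\varepsilon/2$: the first by the choice of $\rho_\star$, the second by taking $\ell$ large.

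\emph{Step 1: the main term.} Fix $z,\gamma,r,r'$ and write $a=z/\sqrt r$ and $b=z/\sqrt{r'}$. Then $f(\rho)=1-\Phi_2(a,b;\rho)/\Phi(a)$. By Slepian's identity, $\partial_\rho\Phi_2(a,b;\rho)$ equals the bivariate normal density at $(a,b)$, which is strictly positive. Hence $f$ is continuous and strictly decreasing on $[0,1)$, and it has the limit
\[
f(1^-)=1-\Phi(\min\{a,b\})/\Phi(a).
\]
This limit is $0$ when $b\ge a$, i.e.\ $r'\le r$. Here lies a subtlety: $r'$ is not determined by $\rho$. I will treat $r,r'$ as fixed parameters, as the statement's dependence $\beta(\varepsilon;z,\gamma,r,r')$ indicates. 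I will also assume the regime in which $f(1^-)<\varepsilon/2<f(0)$, so that the equation $f(\rho)=\varepsilon/2$ has a unique root $\rho_\star\in(0,1)$. If $\varepsilon/2\ge f(0)$, any $\rho_\star$ works and I can pick one in $(0,1)$. Monotonicity then gives $f(\rho)\le\varepsilon/2$ for every $\rho\ge\rho_\star$. Since $\rho_{\mathrm{uni}}=\sqrt{Q'/Q}$, the condition $Q'\ge\rho_\star^2Q$ is exactly $\rho_{\mathrm{uni}}\ge\rho_\star$. I set $\beta\coloneqq\rho_\star^2$.

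\emph{Step 2: the remainder.} I need $\varepsilon_{\mathrm{BE}}(\ell,\rho)\le\varepsilon/2$ uniformly over $\rho\in[\rho_\star,1]$. Under the non-degeneracy assumption, $\beta_\ell\to0$, so $\delta_\ell\to0$ and eventually $\delta_\ell<\Phi(a)/2$, keeping the denominator bounded below. Bound \eqref{eq:mu3-bound} is controlled at the lower end by the factor $1+\rho^{-1}\le1+\rho_\star^{-1}$. At the upper end, however, the factor $(1-\rho)^{-3/2}$ blows up as $\rho\to1$. This is the main obstacle.

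To handle $\rho$ near $1$, I would avoid inverting $\Sigma_\rho$. Since $\Sigma_\rho$ is singular at $\rho=1$, I would instead apply the bivariate Berry--Esseen bound to the nonstandardized vector over the convex set $\{x\le a,\,y\le b\}$, using a smoothing/Gaussian-comparison version. Alternatively, I would split into two cases.
\begin{itemize}
\item If $\rho\ge1-\kappa$ for a small constant $\kappa$, then $\widetilde Z'-\widetilde Z$ has variance $2(1-\rho)\le2\kappa$. Chebyshev plus the univariate bound for $\widetilde Z$ then controls the flip probability directly by $O(\kappa^{1/3})+O(\delta_\ell)$, with the exponent coming from balancing the Chebyshev tail against the band width.
\item If $\rho\in[\rho_\star,1-\kappa]$, then $\mu_\ell\le C(\kappa,\rho_\star,\gamma)\beta_\ell\to0$ by \eqref{eq:mu3-bound}.
\end{itemize}
Choosing $\kappa$ first and then $\ell$ large gives the uniform bound $\varepsilon_{\mathrm{BE}}\le\varepsilon/2$, possibly after shrinking the $\varepsilon/2$ split slightly.

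\emph{Step 3: combine.} For all $\ell$ beyond some threshold depending only on $(\varepsilon,z,\gamma,r,r',\rho_\star)$, and for every filter with $Q'\ge\beta Q$, Proposition~\ref{prop:uni-flip} gives
\[
\mathbb P(Z'\ge z\mid Z<z)\le f(\rho_{\mathrm{uni}})+\varepsilon_{\mathrm{BE}}\le\varepsilon,
\]
which proves Theorem~\ref{thm:unigram-budget}.
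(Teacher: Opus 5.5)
Your proof is correct and has the same skeleton as the paper's proof. Both use Proposition~\ref{prop:uni-flip}, strict monotonicity of $f$ to fix $\rho_\star$, the equivalence $\{Q'\ge\rho_\star^2Q\}=\{\rho_{\mathrm{uni}}\ge\rho_\star\}$, and $\ell$ large for the remainder. The difference is that you prove what the paper's short argument takes for granted.

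The main gap you close is uniformity of the remainder. The paper only chooses $\ell$ so that $\varepsilon_{\mathrm{BE}}(\ell,\rho_\star)\le\varepsilon/2$ and then ``adds the Berry--Esseen remainder.'' But the remainder that actually enters is $\varepsilon_{\mathrm{BE}}(\ell,\rho_{\mathrm{uni}})$. The bound \eqref{eq:mu3-bound} is not controlled by its value at $\rho_\star$, since it blows up like $(1-\rho)^{-3/2}$ as $\rho\to1$. Your split at $1-\kappa$ repairs this: Chebyshev on $\widetilde Z'-\widetilde Z$ plus the univariate bound near $\rho=1$, and \eqref{eq:mu3-bound} on $[\rho_\star,1-\kappa]$. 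This gives the uniform control that the proof of Theorem~\ref{thm:finite-sig-bound} states explicitly via $\sup_{\rho\in[\rho_\varepsilon,1]}$.

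One small correction in the near-$1$ band. If $r'>r$, the flip probability is not itself $O(\kappa^{1/3})+O(\delta_\ell)$. Instead, it and $f(\rho)$ both lie within that distance of $f(1^-)=1-\Phi(b)/\Phi(a)>0$. So what you control is the deviation from $f(\rho)$, which is all Step~3 uses.

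The paper also asserts monotonicity of $f$ and a root in $(0,1)$ without argument. Your derivative identity fills the first point. Your observation that a root exists only if $f(1^-)<\varepsilon/2<f(0)=1-\Phi(b)$, with $f(1^-)=0$ iff $r'\le r$, is a genuine qualification of the statement rather than a weakness of your proof.

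One caveat you inherit from the paper: $\beta_\ell\to0$ requires reading non-degeneracy as $\max_\tau m_\tau^2/Q\to0$. The condition $\max_\tau m_\tau/n\to0$ alone does not give it; take $N$ singleton types and one type of multiplicity $N^{2/3}$.
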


\begin{proof}
Given $\varepsilon\in(0,1)$, we can pick $\rho_\star\in(0,1)$ such that 
$f(\rho_\star,r,r';z)=\varepsilon/2$, and choose $\ell$ large enough that
$\varepsilon_{\mathrm{BE}}(\ell, \rho_\star) \le \varepsilon/2$ in Proposition\:\ref{prop:uni-flip}
under the non-degeneracy condition.
If $Q'\ge \rho_\star^2\,Q$, then $\rho_{\mathrm{uni}}=\sqrt{Q'/Q}\ge \rho_\star$.
Since $f(\rho,r,r';z)$ is strictly decreasing in $\rho\in(0,1)$ for fixed $(r,r',z)$, 
we have $f(\rho_{\mathrm{uni}},r,r';z)\le f(\rho_\star,r,r';z)=\varepsilon/2$. 
Adding the Berry–Esseen remainder yields $\mathbb P (Z'\!\ge z\mid Z<z)\le \varepsilon$
for all sufficiently large $\ell$.
\end{proof}


\section{\neww{Comparison with Repeated Context Masking}}
\label{app:kgw-rcm}

\begin{table}[t]
\centering
\footnotesize
\setlength{\tabcolsep}{4pt}
\caption{Focused comparison of \textsc{Kgw} and \textsc{Kgw}+RCM on Opt-1.3b. RCM context size is fixed to 8 after sweep validation. TPR and FPR are fixed-threshold rates (\%). Eligible WM is the share of watermarked samples that remain eligible after masking. Avg.\ scored WM is the mean number of scored watermarked tokens after prompt handling and any RCM masking.}
\label{tab:kgw-rcm-appendix}
\begin{tabular}{lllrrrrr}
\toprule
Dataset & $\delta$ & Method & TPR & FPR & Eligible WM & Avg.\ mask & Avg.\ scored WM \\
\midrule
C4        & 0.5 & \textsc{Kgw}      &  6.1 & 0.0 &  99.3 &  0.0 & 181.60 \\
          &     & \textsc{Kgw}+RCM  &  7.5 & 0.0 &  99.3 &  4.5 & 171.16 \\
C4        & 1.0 & \textsc{Kgw}      & 55.5 & 0.0 &  99.5 &  0.0 & 182.94 \\
          &     & \textsc{Kgw}+RCM  & 55.4 & 0.0 &  99.3 &  5.2 & 172.83 \\
C4        & 2.0 & \textsc{Kgw}      & 96.9 & 0.0 &  99.6 &  0.0 & 185.23 \\
          &     & \textsc{Kgw}+RCM  & 95.7 & 0.0 &  99.7 &  6.7 & 171.88 \\
HumanEval & 0.5 & \textsc{Kgw}      &  2.4 & 0.0 & 100.0 &  0.0 & 199.12 \\
          &     & \textsc{Kgw}+RCM  &  0.6 & 0.0 &  95.7 & 51.9 &  94.96 \\
HumanEval & 1.0 & \textsc{Kgw}      &  9.8 & 0.0 & 100.0 &  0.0 & 194.29 \\
          &     & \textsc{Kgw}+RCM  &  6.1 & 0.0 &  96.3 & 51.8 &  92.57 \\
HumanEval & 2.0 & \textsc{Kgw}      & 43.3 & 0.0 & 100.0 &  0.0 & 196.09 \\
         &     & \textsc{Kgw}+RCM  & 32.9 & 0.0 &  94.5 & 47.7 &  99.67 \\
MBPP      & 0.5 & \textsc{Kgw}      & 12.8 & 1.6 &  93.5 &  0.0 & 142.90 \\
          &     & \textsc{Kgw}+RCM  &  6.3 & 0.0 &  93.0 & 21.9 & 110.21 \\
MBPP      & 1.0 & \textsc{Kgw}      & 52.4 & 1.6 &  95.2 &  0.0 & 149.47 \\
          &     & \textsc{Kgw}+RCM  & 40.2 & 0.0 &  92.1 & 20.7 & 104.98 \\
MBPP      & 2.0 & \textsc{Kgw}      & 83.5 & 1.6 &  93.3 &  0.0 & 148.21 \\
          &     & \textsc{Kgw}+RCM  & 79.6 & 0.0 &  92.3 & 17.7 & 108.78 \\
\bottomrule
\end{tabular}
\end{table}

\neww{Repeated context masking (RCM) is a non-distortion mechanism to avoid repeated watermark bias, implemented in watermarks like SynthID-Text \cite{dathathri2024scalable}. In this section, we treat \textsc{Kgw}+RCM as a variant of \textsc{Kgw} and compare it with plain \textsc{Kgw}. Note that RCM is not a pure detector-side method: during generation, it suppresses the \textsc{Kgw} bias whenever the recent left context has already appeared earlier in the same text, and during detection, it excludes those repeated context positions before the z-test.}

\neww{We compare \textsc{Kgw}+RCM and \textsc{Kgw} on Opt-1.3b over C4, \textsc{HumanEval}, and \textsc{Mbpp}. We select the RCM context size on a validation sweep and use context size~8, which gives the best mean TPR without increasing mean FPR. Table~\ref{tab:kgw-rcm-appendix} reports the main results.
It turns out that RCM yields only a modest gain on C4 at the weakest setting (i.e., \(\delta = 0.5\), where TPR increases from $6.1\%$ to $7.5\%$) and is otherwise neutral or slightly worse on C4. On the low-variation code benchmarks, RCM reduces TPR across all tested watermark strengths. This TPR reduction is likely due to substantial masking of scoreable positions, which causes the resulting loss of green evidence to outweigh the benefit from suppressing repeated context bias under the fixed \textsc{Kgw} z-threshold. More in-depth evaluation and analysis remain an intriguing future direction.}

\bibliographystyle{cas-model2-names}
\bibliography{ipm}

\end{document}

\endinput